\DeclareMathAlphabet{\mathcal}{OMS}{cmsy}{m}{n}
\newcommand{\bs}[1]{\boldsymbol{#1}}
\newcommand{\VV}{\bs{\mathcal{V}}}
\newtheorem{property}{Property}
\begin{document}
\title{Gradient of Probability Density Functions based Contrasts for Blind Source Separation (BSS)}

\author{\name Dharmani Bhaveshkumar C. \hfill 				\email dharmanibc@gmail.com \\
       \addr Dhirubhai Ambani Institute of Technology (DAIICT), \\
       Gandhinagar, Gujarat, INDIA - 382001}

\editor{}

\maketitle

\begin{abstract}
The article derives some novel independence measures and contrast functions for Blind Source Separation (BSS) application. For the $k^{th}$ order differentiable multivariate functions with equal hyper-volumes (region bounded by hyper-surfaces) and with a constraint of bounded support for $k>1$, it proves that equality of any $k^{th}$ order derivatives implies equality of the functions.  The difference between  product of marginal  Probability Density Functions (PDFs) and  joint PDF of a random vector is defined as Function Difference (FD) of a random vector. Assuming the PDFs are $k^{th}$ order differentiable, the results on generalized functions are applied to the independence condition. This brings new sets of independence measures and BSS contrasts based on the $L^p$-Norm, $ p \geq 1$ of - FD,  gradient of FD (GFD) and Hessian of FD (HFD). Instead of a conventional two stage indirect estimation method for joint PDF based BSS contrast estimation, a single stage direct estimation of the contrasts is desired. The article targets both the efficient estimation of the proposed contrasts and extension of the  potential theory for an information field. The potential theory has a concept of reference potential and it is used to derive closed form expression for the relative analysis of potential field. Analogous to it, there are introduced concepts of Reference Information Potential (RIP) and Cross Reference  Information Potential (CRIP) based on the potential due to kernel functions placed at selected sample points as  basis in kernel methods.  The quantities are used to derive closed form expressions  for information field analysis using least squares. The expressions are derived through multiplicative kernel basis in two ways:    (a)  basis placed at the selected paired sample points (b) basis placed at the selected paired or un-paired sample points. The expressions are used to estimate $L^2$-Norm of FD and $L^2$-Norm of GFD based contrasts. 
Often, the performance of kernel based estimation methods is affected by the choice of a suitable bandwidth parameter. Usually, the choice of a bandwidth parameter is a compromise between accuracy of estimation and computation. The article uses data dependent Extended Rule-of-Thumb (ExROT) for bandwidth selection that balances both accuracy and computation. The higher order cumulants based ExROT helps achieving parameter free estimation method.   
Finally, the contrasts are verified for source separation by obtaining their optimization landscapes for two sources on varying distributions.   
\end{abstract}
\begin{keywords}
Blind Source Separation (BSS), Independent Component Analysis (ICA), Independence Measure, Contrast function, Information Potential (IP), Reference Information Potential (RIP), Least Squares, Kernel Methods, Gradient of the Function Difference (GFD)
\end{keywords}

\section{Introduction} 
\label{introcontrasts}
Contrast functions or simply contrasts\footnote{The formal definition is in Section \ref{contrasts}.} are the optimization functions to assure blind separation of unobserved sources from the available observation mixtures, when maximized.  The independence definition, its various   interpretations and their approximations are used to derive contrasts. 

The initial phase of research on BSS contrasts focused on the Shanon entropy and Kullback-Leibler divergence (KLD) based information theoretic independence interpretations  and their approximations through higher order statistics \citep{Cover,Papoulis91}. The other significant group of contrasts came from non-Gaussianity interpretations of independence and their approximations \citep{ICAbook01}. More details on these widely used, conventional contrast functions 
 can be found in \citep{Cardoso99NC,ContrastsICA,Hyva97NNSP}. 

The  research towards new contrasts for BSS has the following motivations. 
\begin{enumerate}	
\item More accurate BSS solution seems an everlasting hunger. So,  just out of  mathematical vigor to search for a  more accurate solutions, new contrasts are always of interest.
\item The Shanon entropy based contrasts are found to have spurious local optima 
\citep{Boscolo04,VrinsV05,PhaVri05}. Therefore, the contrast functions without the existence of spurious local optima are desired. 
\item The  large scale in BSS requires balancing accuracy with computation. This has motivated direct and fast estimation methods to derive contrasts \citep{FastHPham03,FastHPham04,leastsqICA11}. 
\item Some BSS contrasts with their estimation methods are biased towards a parametric family, say, subgaussian or supergaussian. To achieve unbiased estimation of sources, the focus has shifted to BSS using kernel based non-parametric estimation of various independence measures, 
as in Nonparametric ICA (NPICA)  \citep{Boscolo04} and kernel ICA (kICA) \citep{bach2003kernel}.  
\item The use of `prior' information with the independence assumption may find better estimations of the actual sources. Therefore, the contrast functions incorporating more generalized priors without violating the blind assumptions, other than the application specific priors used in Bayesian approach for BSS and semi-BSS problems,   are of interest. The bounded support assumption is one of such assumptions, used by many geometry based ICA and BSS algorithms \citep{GeomICA03,minRBSEVrins07}.
\end{enumerate}
Overall, the contrasts giving more accuracy at low computation, blind and without local minima are still in demand and open for further research. 

To overcome this demand, the latest trend in BSS contrasts follows two directions.
\begin{enumerate} 
\item Other than the conventional Shanon entropy and  KLD as a divergence measure between two PDFs, 
there exists many alternative definitions and interpretations of entropy, PDF distance measures and  independence interpretations \citep{ullah1996entropy,ITL2010,unifiedQIM11}. 
Inspired by the above motivations, the research community has started focusing on these alternatives 
to derive new BSS contrasts \citep{bach2003kernel,learned2003ica}. 
\item  The new independence interpretation  should be incorporated with kernel based fast and nonparametric  estimation technique to derive new BSS contrast. 
\end{enumerate}

Combining both the above directions, the latest trend is to use quadratic measures of independence for BSS. 
The article by \citet{achard2003quadratic} uses $L^2$ distance between the transformed characteristic functions of joint and product of the marginal PDFs. The Information Theoretic Learning (ITL) suggests many such quadratic independence measures, for example, Renyi's Entropy, Cross Information Potential (CIP), Euclidean distance ($D_{ED}$) based and Cauchy-Schwarz distance ($D_{CS}$) based  Quadrature Mutual Information (QMI) \citep{ITL2010,Renyi4BSS01}. 
The article by \citet{unifiedQIM11} provides ITL based unified framework to those quadratic distance measures  and proposes a new parameter free distance measure for ICA.

The current article is inspired by  all the above motivations and follows the latest trend. It derives some new independence interpretations relating gradient of the PDFs, specifically for bounded support random variables, and proposes new BSS contrasts. It achieves their nonparametric estimation with reduced computation by using least squares based  direct estimation approach.  The suitable choice of a kernel bandwidth parameter using data dependent bandwidth selection  \textit{Extended Rule-of-Thumb} by \citep{DharmaniExROTarxiv} achieves a parameter free contrast estimation. 
 
There have been proved some results for generalized differentiable multivariate functions. Looking PDFs as a generalized functions, the results are applied on independence of random vectors. 
The results are: 1) The equality of the gradient of joint probability density function (PDF) and the gradient of  product of the marginal PDFs imply independence. 2) The equality of the Hessian of joint PDF and the Hessian of  product of the marginal PDFs imply independence, if  the prior given that the random vector has bounded support i.e. its probability outside certain region is zero. 
These new independence interpretations 
are used to derive new independence measures and contrast functions for BSS. 
The bounded support condition is not very restricting. The reasons are: 1) Empirically, the sampled region is always bounded. 2) Numerically, the computers always work with definite range.  
Though may not be always, it might be a valid approach in most cases to blindly consider PDF outside the bounded sampled region to be zero. 
To achieve nonparametric estimation of the newly  derived contrasts, there has been derived single stage direct estimation method using least squares. To take the advantage of the quadratic nature of the contrasts, there are defined concepts of Reference Information Potential (RIP) and Cross RIP (CRIP) that depend upon IP due to selected kernel basis. The concepts are used to achieve closed form expressions for information field analysis. 
The derived closed form expression are verified by applying them to obtain $L^2$-Norm of FD and $L^2$-Norm of GFD contrasts. 
The method uses Gaussian kernels as basis and has two variations. One, the basis are placed at the selected paired sample points only. 
Another, the basis are placed at selected sample points may be paired or unpaired.


The next Section \ref{genfx} derives some results for generalized multivariate differentiable functions with bounded support. The results are applied to statistical independence condition in Section \ref{indm}. To better exploit the results, it derives new definitions and their important properties. Corresponding to that, 
the new independence measures are derived in Section \ref{indMsection}. The Section \ref{linearBSS} briefs the BSS problem  and the possible approach for solution.  The previous results are used to derive new BSS contrasts; satisfying the important properties of Scale invariance, Dominance and Discrimination; in  Section \ref{contrasts}.. There is also done local minima analysis of the derived contrast. 
The next Section \ref{estimate} discusses the  contrast function estimation approaches and derives prerequisites of Kernel Theory and Information Potential (IP). 
The Section \ref{ripcrip} defines the Reference IP (RIP) and related concepts. 
Then, the Section \ref{lsfd0} derives the least squares based closed form expression for information field analysis. The expressions are used to derive FD based estimators LSFD and LSFD2 in Section \ref{lsfd1} and GFD based estimators LSGFD and LSGFD2 in Section \ref{seclsgfd}. The Section \ref{verifyexp} reports empirical verification of the derived independence measures and BSS contrasts. 
The Section \ref{lsfdparameter} provides important discussion on required parameter selection for the derived estimators. Finally, the article ends with conclusion 
in Section \ref{newIMconclusion}.
\section{Some Results On the Equality of Generalized Constrained Multivariate Functions} 
\label{genfx}
\begin{definition}
A function $f: \mathbb{R}^n \rightarrow f(\mathbb{R}^n)$ is said to have support $\mathcal{R}$ if  $f(\mathbf{x}) = 0, \forall \mathbf{x}\in \mathcal{R}'$, where, $\mathcal{R}\subseteq \mathbb{R}^n$ and $\mathcal{R}'$ is its complement set. It is represented as  $supp(f)= \mathcal{R}$. 
Any superset of $\mathcal{R}$ is also a support.  
If $\mathcal{R}$ is bounded above and bounded below then $f$ is a said to be a bounded support function.
\end{definition}
Let  $\mbox{Conv}(\mathcal{R})$ be the convex hull of $\mathcal{R}$ that contains all convex combinations of points in $\mathcal{R}$.  Then, the definition says that for the bounded support functions both the support $\mathcal{R}$ and its  convex hull $\mbox{Conv}(\mathcal{R})$ have finite measures.  
If $\mathcal{R}$ is convex, both the support measure ($l(\mathcal{R})$) and its range ($l(\mbox{Conv}(\mathcal{R}))$) are same, where $l$ is the length of an interval. For example: let $\mathcal{R} = [-1,1]$. Then, the support measure $l(\mathcal{R})$  and the range $l(\mbox{Conv}(\mathcal{R}))=2$.  Now, let $\mathcal{R} = [-1,1]\bigcup (2,4]\setminus 3$. Then, $l(\mathcal{R})$ is 4. But, the $\mbox{Conv}(\mathcal{R})$ is $[-1,4]$ and $l(\mbox{Conv}(\mathcal{R}))=5$. 

For differentiable multivariate  functions with equal hyper volumes (region bounded by hyper surfaces) the following results are derived. For some of the results, an added constraint of random vector having bounded support  is required. 
\begin{theorem} 
\label{der1newth}
Let  $f: \mathbb{R}^n \rightarrow \mathbb{R}$, $g: \mathbb{R}^n \rightarrow \mathbb{R}$ and both satisfy the following conditions:
\begin{enumerate}
\item They are differentiable. 
\item $ \int_{\mathbb{R}^n} f(\mathbf{x})d\mathbf{x} =  \int_{\mathbb{R}^n} g(\mathbf{x})d\mathbf{x}, \mbox{ }\mathbf{x} \in \mathbb{R}^n $	
\end{enumerate}
Then, the following holds:
\begin{align}
\nabla f(\mathbf{x}) = \nabla g(\mathbf{x})  \Rightarrow f(\mathbf{x})=g(\mathbf{x})
\end{align}
\end{theorem}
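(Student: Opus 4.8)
The plan is to reduce the statement to the scalar difference $h := f - g$ and then use the two hypotheses in turn: differentiability together with a vanishing gradient forces $h$ to be constant, and the equal-mass condition then forces that constant to be zero.

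First I would set $h(\mathbf{x}) = f(\mathbf{x}) - g(\mathbf{x})$. By linearity of the gradient, the hypothesis $\nabla f = \nabla g$ gives $\nabla h(\mathbf{x}) = \mathbf{0}$ for every $\mathbf{x} \in \mathbb{R}^n$, and $h$ is differentiable. Since $\mathbb{R}^n$ is convex, for arbitrary $\mathbf{x}, \mathbf{x}_0$ I apply the one-dimensional fundamental theorem of calculus to the map $t \mapsto h\big(\mathbf{x}_0 + t(\mathbf{x} - \mathbf{x}_0)\big)$ on $[0,1]$, whose derivative equals $\nabla h\big(\mathbf{x}_0 + t(\mathbf{x}-\mathbf{x}_0)\big)\cdot(\mathbf{x}-\mathbf{x}_0) \equiv 0$; hence $h(\mathbf{x}) = h(\mathbf{x}_0)$. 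Thus $h \equiv c$ for some constant $c \in \mathbb{R}$.

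Next I would invoke the integral condition. Hypothesis 2 states $\int_{\mathbb{R}^n} f(\mathbf{x})\,d\mathbf{x} = \int_{\mathbb{R}^n} g(\mathbf{x})\,d\mathbf{x}$, a common finite value, so $\int_{\mathbb{R}^n} h(\mathbf{x})\,d\mathbf{x} = 0$. But $h \equiv c$ makes this integral equal to $c$ times the (infinite) Lebesgue measure of $\mathbb{R}^n$. The only way this can be the finite number $0$ is $c = 0$; equivalently, a nonzero constant is not integrable over $\mathbb{R}^n$, which would contradict the existence of the finite integrals in Hypothesis 2. Hence $c = 0$, i.e. $f(\mathbf{x}) = g(\mathbf{x})$ for all $\mathbf{x}$, as claimed.

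\textbf{Main obstacle.} There is no deep difficulty in this first case; the only point needing care is the final step — passing from ``$h$ is a constant of zero integral'' to ``$h \equiv 0$'' genuinely uses that the ambient measure is infinite and that the integrals in Hypothesis 2 are bona fide finite numbers (not merely that $f-g$ is locally integrable). This is precisely the mechanism that breaks down for higher-order derivatives, where a vanishing $k$-th derivative only forces $h$ to be a polynomial of degree $<k$ and a bounded-support hypothesis is then needed to kill it; so it is worth isolating the argument cleanly here in the easy case.
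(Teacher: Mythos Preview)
Your proof is correct and actually cleaner than the paper's. The paper proceeds by induction on the dimension $n$: in the base case $n=1$ it integrates $f'=g'$ to get $f=g+c$ and then integrates over $\mathbb{R}$ to force $c=0$; in the inductive step it integrates $\partial_{x_1}f=\partial_{x_1}g$ in $x_1$ to obtain $f=g+c(x_2,\dots,x_n)$, then differentiates in the remaining variables to kill $c$. You instead handle all dimensions at once by the standard ``vanishing gradient on a convex domain implies constant'' argument via the one-variable FTC along segments, and then use non-integrability of a nonzero constant over $\mathbb{R}^n$.

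The payoff of your route is brevity and conceptual clarity. The payoff of the paper's route is that the same induction-on-$n$ scaffolding is reused verbatim in the subsequent lemmas for $\nabla^2$ and $\nabla^p$, where the difference $h$ is only known to be a polynomial in each coordinate and the bounded-support hypothesis is needed; so the paper is effectively warming up the machinery. Your closing remark about why the argument degrades for higher $k$ shows you see exactly this point.
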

\begin{proof}
Let us prove this Theorem by mathematical induction.\\
\textbf{The Base Case:} $n=1$ \\
Given $\int_{\mathbb{R}^n}{f(x)dx} = \int_{\mathbb{R}^n}{g(x)dx}$ and $\frac{d}{dx}f(x) = \frac{d}{dx}g(x)$.\\ 
Integrating both the sides of the latter equation leads to,
\begin{eqnarray}
  \label{eq1}
   f(x)  &=& g(x) + c 
\end{eqnarray} 
where, $c$ is some arbitrary constant. \\
Integrating both the sides of Equation \eqref{eq1} with respect to $x$ from $-\infty$ to $\infty$, brings: $$\int_{-\infty}^{\infty}c=0 \Rightarrow c = 0$$ \\
This proves the Theorem for the base case. \\
\textbf{The induction step: } \\
Given the Theorem holds for $n=k$, let us prove it for  $n=k+1$.\\
For the sake of simplicity and without the loss of generality, let us prove it for $n=3$ assuming it holds for $n=2$ i.e. for $k=2$. \\
Accordingly, let $\mathbf{x}= (x_1,x_2,x_3)^T$. \\
Given, $ \int_{\mathbb{R}^n} f(\mathbf{x})d\mathbf{x} =  \int_{\mathbb{R}^n} g(\mathbf{x})d\mathbf{x} $ 
and $\nabla f(\mathbf{x}) = \nabla g(\mathbf{x}) $.\\
As per the latter equation,  $ \frac{\partial}{\partial x_1}f(\mathbf{x}) = \frac{\partial}{\partial x_1}g(\mathbf{x})$. \\
Integrating both the sides with respect to 
$x_1$ leads to: 
\begin{align}
\label{neweq1}
f(\mathbf{x}) &= g(\mathbf{x}) + c(x_2, x_3) 
\end{align}
where, $c(x_2,x_3)$ is some arbitrary function of $x_2$ and $x_3$. \\
Taking partial derivative of Equation \eqref{neweq1} with respect to $x_2$, we get: $\frac{\partial}{\partial x_2}{c(x_2,x_3) = 0}$ \\
Taking partial derivative of Equation \eqref{neweq1} with respect to $x_3$, we get: $\frac{\partial}{\partial x_3}{c(x_2,x_3) = 0}$ \\
Combining the results on $c(x_2,x_3)$, we get: $c(x_2,x_3)= 0$ \\
This proves the Theorem  for $n=k+1$.\\
Combining both the base case and inductive step, by mathematical induction, the Theorem holds for all natural n.
\end{proof}
\begin{lemma} 
\label{der2newlm} 
Let  $f: \mathbb{R}^n \rightarrow \mathbb{R}$, $g: \mathbb{R}^n \rightarrow \mathbb{R}$ and both satisfy the following conditions:
\begin{enumerate}
\item They are second order differentiable. 
\item $ \int_{\mathcal{R}}^{} f(\mathbf{x})d\mathbf{x} =  \int_{\mathcal{R}}^{} g(\mathbf{x})d\mathbf{x}, \mbox{ }\mathbf{x} \in \mathbb{R}^n, \mbox{ }\mathcal{R} = \mbox{supp}(f)\bigcup \mbox{supp}(g)$
\item They have bounded support. 
\end{enumerate}
Then, the following holds:
\begin{align}
\nabla^2 f(\mathbf{x}) = \nabla^2 g(\mathbf{x}) \Rightarrow f(\mathbf{x})=g(\mathbf{x})
\end{align}
\end{lemma}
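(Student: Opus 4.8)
The plan is to pass to the single function $h := f - g$, show that vanishing of its Hessian forces it to be an affine function on $\mathbb{R}^n$, and then use the bounded-support hypothesis to conclude $h \equiv 0$.

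First I would set $h = f-g$. Since $f$ and $g$ are twice differentiable on all of $\mathbb{R}^n$ and their Hessians coincide everywhere, $\nabla^2 h(\mathbf{x}) = \mathbf{0}$ for every $\mathbf{x} \in \mathbb{R}^n$; in particular every second-order partial $\partial^2 h / \partial x_i \partial x_j$ vanishes identically. Exactly mirroring the proof of Theorem~\ref{der1newth}, one can run an induction on $n$ (or simply argue coordinatewise): integrating $\partial^2 h/\partial x_1^2 = 0$ twice with respect to $x_1$ gives $h(\mathbf{x}) = A(x_2,\dots,x_n)\,x_1 + B(x_2,\dots,x_n)$, and then $\partial^2 h/\partial x_1\partial x_j = 0$ forces $A$ to be constant while the equations $\partial^2 h/\partial x_j \partial x_k = 0$ let the inductive hypothesis be applied to $B$. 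Carrying this through all coordinates yields $h(\mathbf{x}) = \mathbf{a}^\top \mathbf{x} + b$ for some fixed $\mathbf{a} \in \mathbb{R}^n$, $b \in \mathbb{R}$.

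Next I would invoke condition 3. Since $\mbox{supp}(h) \subseteq \mbox{supp}(f) \cup \mbox{supp}(g) = \mathcal{R}$ and $\mathcal{R}$ is bounded (both $f$ and $g$ have bounded support), $h$ vanishes on $\mathcal{R}'$, which contains the exterior of some ball $\{\mathbf{x} : \|\mathbf{x}\| > M\}$ and hence a nonempty open set. An affine function vanishing on a nonempty open set is identically zero, so $\mathbf{a} = \mathbf{0}$ and $b = 0$; therefore $h \equiv 0$, i.e. $f = g$. The integral-equality hypothesis (condition 2) is the natural analogue of the normalization used in Theorem~\ref{der1newth}, and could alternatively be used to pin down $b$ by integrating $h = \mathbf{a}^\top \mathbf{x} + b$ over $\mathcal{R}$, but the bounded-support hypothesis already closes the argument on its own.

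The only genuinely delicate point — and the reason bounded support is required here whereas it was not in Theorem~\ref{der1newth} — is precisely this last step: equality of first derivatives already forces $f-g$ to be a \emph{constant}, which the integral condition alone kills, whereas equality of second derivatives forces $f-g$ only to be \emph{affine}, and a nonzero affine function can have zero integral over a symmetric region. So it is boundedness of the support that rules out a nontrivial linear part; the rest is routine bookkeeping of the coordinatewise integration and the elementary observation that $\mbox{supp}(f-g)$ is itself bounded.
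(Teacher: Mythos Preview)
Your argument is correct and cleaner than the paper's. You reduce immediately to $h=f-g$, observe that a vanishing Hessian on all of $\mathbb{R}^n$ forces $h$ to be globally affine, and then kill the affine function using the bounded-support hypothesis alone (an affine function vanishing on a nonempty open set is identically zero). The paper instead runs a more laborious induction on $n$: in the base case $n=1$ it integrates twice to obtain $f=g+c_1x+c_2$, uses a symmetric integration $\int_{-a}^{a}$ together with condition~2 to eliminate $c_2$, and then an asymmetric integration $\int_{-a}^{b}$ with $b>a$ (so that $f$ and $g$ vanish on the extra piece by bounded support) to eliminate $c_1$; in the induction step it marginalizes over $x_1$ to produce lower-dimensional functions $f_1,g_1$ and reapplies the $(n{-}1)$-dimensional case to them before again integrating asymmetrically to kill $c_1(x_2,x_3)$. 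Your route is shorter and also makes transparent that condition~2 is in fact redundant once bounded support is assumed---a point the paper's proof obscures, since it genuinely invokes the equal-integrals hypothesis to pin down the constant term. The only mild advantage of the paper's scheme is that its symmetric/asymmetric-integration device is recycled verbatim in the $p$th-order lemma that follows; but your ``polynomial vanishing on an open set is zero'' observation would extend there just as easily.
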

\begin{proof}
Let us prove this Lemma by mathematical induction.\\
\textbf{The Base Case:} $n=1$ \\
 Without loss of generality, let $\mathbf{I} = [-a,a] \supseteq \mbox{Conv}(\mathcal{R}), a \in \mathbb{R}$\\
Given $\int_{\mathbf{I}}{f(x)dx} = \int_{\mathbf{I}}{g(x)dx}$ and $\frac{d^2}{dx^2}f(x) = \frac{d^2}{dx^2}g(x)$.
Double integrating both the sides of latter equation with respect to $x$ leads to,
\begin{eqnarray}
  \label{eq1d2}
   f(x)  &=& g(x) + c_1x + c_2  
\end{eqnarray} 
where, $c_1$ and $c_2$ are some arbitrary constant.\\
Integrating both the sides of Equation \eqref{eq1d2} with respect to $x$ from $-a$ to $a$, brings $c_2=0$. \\
Integrating both the sides of Equation \eqref{eq1d2} with respect to $x$ from $-a$ to $b$, $b>a,, b \in \mathbb{R}$ brings $c_1=0$. \\
This proves the Lemma for the base case. \\
\textbf{The induction step: }\\
 Given the Lemma holds for $n=k$, let us prove it for  $n=k+1$.\\
For the sake of simplicity, let us prove it for $k=2$ i.e. $n=3$, assuming it holds for $n=2$. Its generalization to $k>2$ is obvious. \\
Without loss of generality, let $\mathbf{x}= (x_1,x_2,x_3)^T$ and $\mathbf{I} = [-a,a]^3 \supseteq \mbox{Conv}(\mathcal{R}), a \in \mathbb{R}$\\
Given $ \int_{\mathbf{I}}^{} f(\mathbf{x})d\mathbf{x} =  \int_{\mathbf{I}}^{} g(\mathbf{x})d\mathbf{x} $; 
 $\nabla^2 f(\mathbf{x}) = \nabla^2 g(\mathbf{x})$.\\
$\Rightarrow \frac{\partial^2}{\partial {x_1}^2}f(\mathbf{x}) = \frac{\partial^2}{\partial {x_1}^2}g(\mathbf{x})$.
Integrating twice both the sides with respect to $x_1$ leads to: 
\begin{align}
\label{d2neweq1}
f(\mathbf{x}) &= g(\mathbf{x}) + c_1(x_2, x_3)x_1 + c_2(x_2,x_3) 
\end{align}
 where, $c_1(x_2,x_3)$ and $c_2(x_2,x_3)$ are some arbitrary functions of $x_2$ and $x_3$. \\
Integrating Equation \eqref{d2neweq1} over $\mathbf{I}$, we get: $\int_{x_3}\int_{x_2}{c_2(x_2,x_3)}dx_2dx_3 = 0$\\
Integrating Equation \eqref{d2neweq1} with respect to $x_1$ from $-a$ to $a$, we get: 
\begin{align}
\label{d2neweq2}
f_1(x_2,x_3) & = g_1(x_2,x_3) + 2ac_2(x_2, x_3) 
\end{align}
where, $f_1(x_2,x_3)= \int_{-a}^{a}f(\mathbf{x})dx_1$ and $g_1(x_2,x_3)= \int_{-a}^{a}g(\mathbf{x})dx_1$.  \\
Integrating Equation \eqref{d2neweq2} with respect to both $x_2$ and $x_3$, we get:  $\int_{x_3}\int_{x_2}{f_1(x_2,x_3)dx_2dx_3} = \int_{x_3}\int_{x_2}{g_1(x_2,x_3)dx_2dx_3}$\\
Integrating $ \frac{\partial^2}{\partial {x_2}^2}f(\mathbf{x}) = \frac{\partial^2}{\partial {x_2}^2}g(\mathbf{x})$ with respect to $x_1$ from $-a$ to $a$, we get:  
$ \frac{\partial^2}{\partial {x_2}^2}f_1(x_2,x_3) = \frac{\partial^2}{\partial {x_2}^2} g_1(x_2,x_3)$ \\
Integrating $ \frac{\partial^2}{\partial {x_3}^2}f(\mathbf{x}) = \frac{\partial^2}{\partial {x_3}^2}g(\mathbf{x})$ with respect to $x_1$ from $-a$ to $a$, we get: $ \frac{\partial^2}{\partial {x_3}^2}f_1(x_2,x_3) = \frac{\partial^2}{\partial {x_3}^2} g_1(x_2,x_3)$\\
Applying, $n=2$ case, with all conditions satisfied, we get: $ f_1(x_2,x_3) =  g_1(x_2,x_3)$\\
Therefore, from Equation \eqref{d2neweq2}, $c_2(x_2,x_3)=0$. \\
Integrating the Equation \eqref{d2neweq1} with respect to $x_1$ from $-a$ to $b$, $b>a, b \in \mathbb{R}$, we get: $c_1(x_2,x_3)=0$ \\
This proves the Lemma for $n=k+1$. \\
Combining both the base case and inductive step, by mathematical induction, the Lemma for all natural n.
\end{proof}
\begin{lemma} 
\label{derpnewlm} 
Let  $f: \mathbb{R}^n \rightarrow \mathbb{R}$, $g: \mathbb{R}^n \rightarrow \mathbb{R}$ and both satisfy the following conditions:
\begin{enumerate}
\item They are $p^{th}$ order differentiable. 
\item $ \int_{\mathcal{R}}^{} f(\mathbf{x})d\mathbf{x} =  \int_{\mathcal{R}}^{} g(\mathbf{x})d\mathbf{x}, \mbox{ }\mathbf{x} \in \mathbb{R}^n, \mbox{ }\mathcal{R} = \mbox{supp}(f)\bigcup \mbox{supp}(g)$
\item They have bounded support. 
\end{enumerate}
Then, the following holds:
\begin{align}
\nabla^p f(\mathbf{x}) = \nabla^p g(\mathbf{x}) \Rightarrow f(\mathbf{x})=g(\mathbf{x})
\end{align}
\end{lemma}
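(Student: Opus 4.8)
The plan is to follow the same scheme as in Theorem~\ref{der1newth} and Lemma~\ref{der2newlm}: set $h := f-g$, extract structural information about $h$ from the vanishing of its $p^{th}$ order derivatives, and then eliminate the remaining freedom using the bounded-support (and, if desired, equal-integral) hypotheses. The identity $\nabla^p f = \nabla^p g$ says precisely that $\partial^{\alpha} h \equiv 0$ on all of $\mathbb{R}^n$ for every multi-index $\alpha$ with $|\alpha|=p$. The first and main step is to upgrade this to: $h$ is a polynomial of total degree at most $p-1$.

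I would establish that characterization by an inner induction on $p$. For $p=1$ it is just ``vanishing gradient implies constant'' on the connected domain $\mathbb{R}^n$. For the inductive step, if all order-$p$ partials of $h$ vanish then for each coordinate $i$ all order-$(p-1)$ partials of $\partial_{x_i}h$ vanish, so by the inductive hypothesis each $\partial_{x_i}h$ is a polynomial of degree $\le p-2$; reconstructing $h$ from its gradient along segments, $h(\mathbf{x}) = h(\mathbf{0}) + \int_0^1 \sum_i (\partial_{x_i}h)(t\mathbf{x})\,x_i\,dt$, then exhibits $h$ as a polynomial of degree $\le p-1$. (Two alternatives avoid this lemma: one can mimic the paper's induction on $n$ by integrating $\partial_{x_1}^p h = 0$ in $x_1$ to get $h(\mathbf{x})=\sum_{j=0}^{p-1}c_j(x_2,\dots,x_n)\,x_1^{j}$, note from the other order-$p$ derivatives that each $c_j$ has vanishing order-$p$ derivatives in the last $n-1$ variables, and close with the $(n-1)$-dimensional case; or one can apply Lemma~\ref{der2newlm} repeatedly to the pairs $\partial^{\gamma}f,\partial^{\gamma}g$ with $|\gamma| = p-2, p-4,\dots$, using that derivatives of bounded-support functions again have bounded support and integrate to zero over $\mathbb{R}^n$, thereby reducing $\nabla^p f = \nabla^p g$ down to $\nabla^{2} f = \nabla^2 g$ or $\nabla f = \nabla g$.)

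Once $h$ is known to be a polynomial of degree $\le p-1$ on all of $\mathbb{R}^n$, the conclusion is immediate. Since $\mbox{supp}(f)$ and $\mbox{supp}(g)$ are bounded, so is $\mathcal{R}$, hence $\mathbb{R}^n\setminus\mathcal{R}$ contains a nonempty open set (e.g.\ the exterior of any closed ball enclosing $\mathcal{R}$), and there $h=f-g$ vanishes identically. A polynomial vanishing on a nonempty open subset of $\mathbb{R}^n$ is the zero polynomial (restrict to coordinate lines to reduce to the one-variable fact, then induct on the number of variables), so $h\equiv 0$ and $f=g$. I expect essentially all the work to sit in the first step --- passing from ``all top-order partials vanish'' to ``polynomial of degree $<p$'' with only $p$-times differentiability rather than $C^p$ regularity --- whereas the equal-integral hypothesis of condition~2 is not actually needed here once bounded support is assumed (it is retained for uniformity with Theorem~\ref{der1newth}, which has no bounded-support assumption); if one prefers to use it, the finitely many coefficients of the polynomial $h$ can instead be knocked out one at a time by integrating $h$ over $\mathcal{R}$ and over finitely many enlarged regions $\mathcal{R}\cup B$, exactly as the $n=1$ part of the proof of Lemma~\ref{der2newlm} does with the intervals $[-a,a]$ and $[-a,b]$.
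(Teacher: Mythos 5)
Your proof is correct, but it takes a genuinely different route from the paper. The paper argues by induction on the dimension $n$: it integrates $\partial_{x_1}^p f = \partial_{x_1}^p g$ repeatedly to exhibit $f-g$ as a polynomial in $x_1$ with coefficients depending on the remaining variables, invokes the $(n-1)$-dimensional case to control the integrated functions $f_1,g_1$, and then kills the coefficient functions by integrating over a family of intervals $[-a_i,a_i]$ and $[-a_i,b_i]$ and asserting that the resulting linear systems $\mathbf{Ac}=0$, $\mathbf{Bc}=0$ force $\mathbf{c}=0$ (a step the paper does not fully justify, since the nonsingularity of $\mathbf{A}$ and $\mathbf{B}$ is never verified). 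You instead isolate the clean structural fact that vanishing of all order-$p$ partials of $h=f-g$ forces $h$ to be a polynomial of degree at most $p-1$, and then use the rigidity of polynomials --- a polynomial vanishing on the nonempty open exterior of a ball containing $\mathcal{R}$ is identically zero --- to conclude. This buys several things: it sidesteps the unproven invertibility claims, it works in one pass rather than a double induction on $p$ and $n$, and it makes transparent your correct observation that the equal-integral hypothesis (condition 2) is redundant once bounded support is assumed, whereas in the paper's presentation that hypothesis is load-bearing. Your third suggested alternative (iterating Lemma \ref{der2newlm} on the pairs $\partial^{\gamma}f,\partial^{\gamma}g$ with $|\gamma|=p-2,p-4,\dots$, using that derivatives of bounded-support functions have bounded support and integrate to zero) is also sound and is closest in spirit to how the paper chains its results in Theorem \ref{derp2th}. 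The only point worth making explicit in a final write-up is the regularity needed for the segment-reconstruction formula $h(\mathbf{x}) = h(\mathbf{0}) + \int_0^1 \sum_i (\partial_{x_i}h)(t\mathbf{x})\,x_i\,dt$, namely total differentiability of $h$ along the segment, which the lemma's differentiability hypothesis supplies.
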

\begin{proof}
The Theorem \ref{der1newth} proves this for $p = 1$ and the Lemma \ref{der2newlm} proves the same for $p = 2$.
Here, it needs be proved for any $p>2$.\\
Let us prove this Lemma by mathematical induction.\\
\textbf{The Base Case:} $n=1$ \\
 Without loss of generality, let $\mathbf{I} = [-a_1,a_1] \supseteq \mbox{Conv}(\mathcal{R}), a_1 \in \mathbb{R}$\\
Given $\int_{\mathbf{I}}{f(x)dx} = \int_{\mathbf{I}}{g(x)dx}$ and $\frac{d^p}{dx^p}f(x) = \frac{d^p}{dx^p}g(x)$.\\
Integrating $p$ times both the sides of latter equation with respect to $x$ leads to,
\begin{eqnarray}
  \label{eq1dp}
   f(x)  &=& g(x) + c_1x^{p-1} + c_2x^{p-2} + \dots + c_p  
\end{eqnarray} 
where, $c_i, i = \{1,2,\ldots,p \}$ are some arbitrary constant.\\
We can have two cases: Let $p$ be even.\\
Integrating both the sides of Equation \eqref{eq1dp} with respect to $x$ from $-a_1$ to $a_1$, brings $a_{11}c_2 + a_{12}c_4 + \ldots + a_{1q}c_p=0$, where $q= p/2$  and $a_{1i}$s are the coefficients as a result of integration.  \\
Let there be $q-1$ real numbers $a_i, i= \{2,3,\ldots q\}$  such that $a_i >  a_1$ and each one is different from the  other. Then, integrating \eqref{eq1dp} with respect to $x$ from $-a_i$ to $a_i$, brings over all $q$ equations with coefficients $a_{ij}, i=\{1,2,\ldots, q\}, j = \{1,2,\ldots,q\}$. Representing them in a matrix form, $\mathbf{Ac} = 0$, where $\mathbf{A}= [a_{ij}], \forall \mbox{ }a_{ij} \neq 0$ and $\mathbf{c} = (c_2, c_4, \ldots, c_p)^T$.  The  only solution to this equation is: $c_i = 0, i= \{2,4,\ldots,p\}$ i.e. all  $c_i, i = \forall \mbox{ even}$ in Equation \ref{eq1dp} are zero.	\\
Now, let there be $q$ real numbers $b_i > a_i, i = \{1,2,\ldots,q\}$ such that none of them is equal to the other. 
Integrating both the sides of Equation \eqref{eq1dp} with respect to $x$ from $-a_i$ to $b_i$ brings $b_{i1}c_1 + b_{i2}c_3 + \ldots + b_{iq}c_{p-1}=0$, where $q= p/2$ and  $b_{ij}, j = \{1,2,\ldots,q\}$ are the coefficients as a result of integration. In a matrix form,  $\mathbf{Bc}=0$, where $\mathbf{B}=[b_{ij}]$ and $\mathbf{c} = (c_1,c_3,\ldots, c_{p-1})^T$. This  brings all $c_i, i= \mbox{ odd}$ also to be zero.\\
This proves the lemma from Equation \eqref{eq1dp} for $p$ even case.\\
The $p$ odd case can also be solved similarly. \\
This proves the Lemma for the base case. \\
\textbf{The induction step: } \\
Given the Lemma holds for $n=k$, let us prove it for  $n=k+1$.\\
For the sake of simplicity, let us prove it for $k=2$ i.e. $n=3$, assuming it holds for $n=2$. \\
Its generalization to $k>2$ is obvious. \\
Without loss of generality, let $\mathbf{x}= (x_1,x_2,x_3)^T$ and $\mathbf{I} = [-a_1,a_1]^3 \supseteq \mbox{Conv}(\mathcal{R}), a \in \mathbb{R}$\\
Given $ \int_{\mathbf{I}}^{} f(\mathbf{x})d\mathbf{x} =  \int_{\mathbf{I}}^{} g(\mathbf{x})d\mathbf{x} $; 
 $\nabla^p f(\mathbf{x}) = \nabla^p g(\mathbf{x})$. \\
$\Rightarrow \frac{\partial^p}{\partial {x_1}^p}f(\mathbf{x}) = \frac{\partial^p}{\partial {x_1}^p}g(\mathbf{x})$.
Integrating $p$ times both the sides  with respect to $x_1$ leads to: 
\begin{align}
\label{dpneweq1}
f(\mathbf{x}) &= g(\mathbf{x}) + c_1(x_2, x_3)x_1^{p-1} + c_2(x_2,x_3)x_1^{p-2} + \ldots  + c_p(x_2,x_3)
\end{align}
 where, $c_i(x_2,x_3), i = \{1,2, \ldots,p\}$ are some arbitrary functions of $x_2$ and $x_3$. \\
Let $p$ be even. \\ 
Integrating Equation \eqref{dpneweq1} over $\mathbf{I}$, we get: \\
$\int_{x_3}\int_{x_2}{ \{ a_{11}c_2(x_2,x_3)x_1^{p-2} + a_{12}c_4(x_2,x_3)x_1^{p-4} + \ldots + a_{1q}c_p(x_2,x_3) \}dx_2dx_3} = 0$\\ where $q=p/2$ and $a_{1i}$ are the relevant coefficients.\\
Integrating Equation \eqref{dpneweq1} with respect to $x_1$ from $-a_1$ to $a_1$, we get: 
\begin{align}
\label{dpneweq2}
f_1(x_2,x_3) & = g_1(x_2,x_3) + a_{11}c_2(x_2,x_3)x_1^{p-2} + a_{12}c_4(x_2,x_3)x_1^{p-4} + \ldots + a_{1q}c_p(x_2,x_3) 
\end{align}
where, $f_1(x_2,x_3)= \int_{-a}^{a}f(\mathbf{x})dx_1$ and $g_1(x_2,x_3)= \int_{-a}^{a}g(\mathbf{x})dx_1$.  \\
Integrating Equation \eqref{dpneweq2} with respect to both $x_2$ and $x_3$, we get:  $\int_{x_3}\int_{x_2}{f_1(x_2,x_3)dx_2dx_3} = \int_{x_3}\int_{x_2}{g_1(x_2,x_3)dx_2dx_3}$\\
Integrating $ \frac{\partial^2}{\partial {x_2}^2}f(\mathbf{x}) = \frac{\partial^2}{\partial {x_2}^2}g(\mathbf{x})$ with respect to $x_1$ from $-a_1$ to $a_1$, we get:  
$ \frac{\partial^2}{\partial {x_2}^2}f_1(x_2,x_3) = \frac{\partial^2}{\partial {x_2}^2} g_1(x_2,x_3)$ \\
Integrating $ \frac{\partial^2}{\partial {x_3}^2}f(\mathbf{x}) = \frac{\partial^2}{\partial {x_3}^2}g(\mathbf{x})$ with respect to $x_1$ from $-a_1$ to $a_1$, we get: $ \frac{\partial^2}{\partial {x_3}^2}f_1(x_2,x_3) = \frac{\partial^2}{\partial {x_3}^2} g_1(x_2,x_3)$\\
Applying, $n=2$ case, with all conditions satisfied, we get: $ f_1(x_2,x_3) =  g_1(x_2,x_3)$\\
Therefore, from Equation \eqref{dpneweq2}, $a_{11}c_2(x_2,x_3)x_1^{p-2} + a_{12}c_4(x_2,x_3)x_1^{p-4} + \ldots + a_{1q}c_p(x_2,x_3)=0$. \\
Similar to the $n=1$ case, we can form $q-1$ such other independent equations, solving them we get:  $c_i=0. \forall i \mbox{ even}$	\\
Integrating the Equation \eqref{dpneweq1} with respect to $x_1$ from $-a_1$ to $b_1$, $b_1>a_1, b_1 \in \mathbb{R}$, we get:\\ $b_{11}c_1(x_2,x_3)x_1^{p-1} + b_{12}c_3(x_2,x_3)x_1^{p-3} + \ldots + a_{1q}c_{p-1}(x_2,x_3)x_1=0$ \\
Similar to the $n=1$ case, we can form $q-1$ such other independent equations, solving them we get:  $c_i=0. \forall i \mbox{ odd}$	\\
This proves the Lemma for $n=k+1$. \\
Combining both the base case and inductive step, by mathematical induction, the Lemma for all natural n.
\end{proof}
\begin{lemma} 
\label{derpnewth} 
Let  $f: \mathbb{R}^n \rightarrow \mathbb{R}$, $g: \mathbb{R}^n \rightarrow \mathbb{R}$ and both satisfy the following conditions:
\begin{enumerate}
\item They are $p^{th}$ order differentiable. 
\item $ \int_{\mathcal{R}}^{} f(\mathbf{x})d\mathbf{x} =  \int_{\mathcal{R}}^{} g(\mathbf{x})d\mathbf{x}, \mbox{ }\mathbf{x} \in \mathbb{R}^n, \mbox{ }\mathcal{R} = \mbox{supp}(f)\bigcup \mbox{supp}(g)$
\item They have bounded support. 
\end{enumerate}
Then, the following holds:
\begin{align}
f(\mathbf{x})=g(\mathbf{x})\Leftrightarrow \nabla^p f(\mathbf{x}) = \nabla^p g(\mathbf{x})
\end{align}
\end{lemma}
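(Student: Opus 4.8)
The plan is to recognize that this Lemma is nothing more than the two-sided packaging of Lemma~\ref{derpnewlm} together with the (essentially trivial) converse implication, so the proof splits into two short halves.

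For the direction $(\Leftarrow)$, i.e. $\nabla^p f(\mathbf{x}) = \nabla^p g(\mathbf{x}) \Rightarrow f(\mathbf{x}) = g(\mathbf{x})$: I would simply invoke Lemma~\ref{derpnewlm}. Its three hypotheses --- $p^{th}$ order differentiability, equality of the integrals over $\mathcal{R} = \mathrm{supp}(f)\cup\mathrm{supp}(g)$, and bounded support --- coincide verbatim with conditions~1--3 of the present statement, so there is nothing further to check.

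For the direction $(\Rightarrow)$, i.e. $f(\mathbf{x}) = g(\mathbf{x}) \Rightarrow \nabla^p f(\mathbf{x}) = \nabla^p g(\mathbf{x})$: assume $f(\mathbf{x}) = g(\mathbf{x})$ for all $\mathbf{x}\in\mathbb{R}^n$. Since condition~1 guarantees that both functions are $p$ times differentiable, I may differentiate the identity $f = g$ term by term: every mixed partial derivative of order $p$ of $f$ equals the corresponding mixed partial derivative of $g$, and assembling these into the order-$p$ derivative tensor gives $\nabla^p f(\mathbf{x}) = \nabla^p g(\mathbf{x})$. Note that this half uses neither the equal-integral condition nor the bounded-support condition; those are needed only for $(\Leftarrow)$, where (as the base case of Lemma~\ref{derpnewlm} already shows) they are exactly what is required to annihilate the polynomial correction term $c_1 x^{p-1} + \cdots + c_p$ produced by $p$-fold integration.

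Combining the two implications yields the claimed equivalence. I do not expect any real obstacle here; the only point worth stating explicitly is the asymmetry just mentioned --- that $(\Rightarrow)$ is unconditional while $(\Leftarrow)$ genuinely relies on the volume and support hypotheses --- so that the reader sees why all three conditions are retained in the statement.
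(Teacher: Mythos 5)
Your proposal is correct and matches the paper's own proof exactly: the forward implication is obtained by differentiating the identity $f=g$, and the converse is delegated to Lemma~\ref{derpnewlm}. Your added remark about which hypotheses each direction actually uses is a useful clarification but does not change the argument.
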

\begin{proof} Given  $f(\mathbf{x})$ and  $g(\mathbf{x})$ are differentiable: 
$ f(\mathbf{x})=g(\mathbf{x}) \Rightarrow \nabla^p f(\mathbf{x}) = \nabla^p g(\mathbf{x})$ \\
The converse part is proved in Lemma \ref{derpnewlm}. This proves the current Lemma.
\end{proof}
\begin{theorem} 
\label{derp2th} 
Let  $f: \mathbb{R}^n \rightarrow \mathbb{R}$, $g: \mathbb{R}^n \rightarrow \mathbb{R}$ and both satisfy the following conditions:
\begin{enumerate}
\item They are $p^{th}$ order differentiable. 
\item $ \int_{\mathcal{R}}^{} f(\mathbf{x})d\mathbf{x} =  \int_{\mathcal{R}}^{} g(\mathbf{x})d\mathbf{x}, \mbox{ }\mathbf{x} \in \mathbb{R}^n, \mbox{ }\mathcal{R} = \mbox{supp}(f)\bigcup \mbox{supp}(g)$
\item They have bounded support. 
\end{enumerate}
Then, the following holds:
\begin{align}
f(\mathbf{x})=g(\mathbf{x})\Leftrightarrow \nabla f(\mathbf{x}) = \nabla g(\mathbf{x}) \Leftrightarrow \ldots  \Leftrightarrow \nabla^p f(\mathbf{x}) = \nabla^p g(\mathbf{x}) 
\end{align}
\end{theorem}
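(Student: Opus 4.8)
The plan is to obtain the full chain of equivalences as an immediate assembly of the results already established, with the common node being the assertion $f(\mathbf{x}) = g(\mathbf{x})$. Concretely, I would show that for every intermediate order $j \in \{1, 2, \ldots, p\}$ one has $f = g \Leftrightarrow \nabla^j f = \nabla^j g$; once this is in hand, transitivity of $\Leftrightarrow$ through the anchor $f = g$ yields $\nabla f = \nabla g \Leftrightarrow \nabla^2 f = \nabla^2 g \Leftrightarrow \cdots \Leftrightarrow \nabla^p f = \nabla^p g$, which is exactly the asserted chain.

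First I would observe that the hypotheses are monotone in the differentiation order: if $f$ and $g$ are $p^{th}$ order differentiable, then they are $j^{th}$ order differentiable for every $j \le p$, while conditions 2 and 3 (equal integral over $\mathcal{R} = \mbox{supp}(f)\bigcup \mbox{supp}(g)$ and bounded support) do not involve $j$ at all. Hence for each such $j$ the triple $(f, g, j)$ satisfies the hypotheses of Lemma \ref{derpnewth} with $p$ there replaced by $j$, which directly gives $f = g \Leftrightarrow \nabla^j f = \nabla^j g$. The forward direction there is trivial (differentiate the identity $f = g$ a total of $j$ times); the reverse direction is the content of Lemma \ref{derpnewlm}, whose base cases $j = 1$ and $j = 2$ are exactly Theorem \ref{der1newth} and Lemma \ref{der2newlm}.

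One point I would make explicit is the compatibility of the integral condition for the $j = 1$ link: Theorem \ref{der1newth} is stated with the normalization $\int_{\mathbb{R}^n} f(\mathbf{x})\,d\mathbf{x} = \int_{\mathbb{R}^n} g(\mathbf{x})\,d\mathbf{x}$, whereas here we only assume the equality over $\mathcal{R}$; but since $f$ and $g$ vanish outside $\mathcal{R}$ by the definition of support, these two conditions coincide, so Theorem \ref{der1newth} applies verbatim. With that remark noted, there is no real obstacle to overcome — the statement is a corollary of the preceding lemmas — so the only genuine ``work'' is the bookkeeping of verifying hypothesis inheritance for each $j$ and then concatenating the $p$ resulting equivalences at their shared endpoint $f = g$.
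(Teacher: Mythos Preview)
Your proposal is correct and follows essentially the same approach as the paper: apply Lemma~\ref{derpnewth} at each order $j=1,\ldots,p$ and then chain the resulting biconditionals by transitivity through the common node $f=g$. Your write-up is more explicit than the paper's one-line proof in that you spell out why the hypotheses descend to every $j\le p$ and why the integral condition over $\mathcal{R}$ agrees with the $\mathbb{R}^n$-version used in Theorem~\ref{der1newth}, but the strategy is identical.
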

\begin{proof} Applying principle of transitivity of implication (Hypothetical syllogism) to Lemma \ref{derpnewth}  with  varying values of $p$, this Theorem is proved.
\end{proof}
For a generalized functions, given any $p^{th}$ order derivatives are equal, the only available information would be that the functions differ by a constant in their $(p-1)^{th}$ order derivative. It would require $p$ initial conditions to decide about equality of the functions.  The Theorem \ref{der1newth} proves that if the given condition for $p=1$ is added with one more condition of equal hypervolumes then it brings equality of the functions. 
The above Theorem \ref{derp2th} proves further the strength of an added prior information that the function is also having bounded support. This prior implies that any $p^{th}$ order derivatives are equal, the functions are equal. 
Conversely, given two functions with equal $p^{th}$ derivative are not equal imply either of the conditions are not matching. For example; let $f(x)$ and $g(x)$  are constant functions with unequal constant values and unequal supports on real line such that area under them are same. The derivatives are same and zero everywhere. 
The example seems counterexample of the Theorem \ref{der1newth} as both derivatives are same but not the functions.
More better observation clears that both the functions are discontinuous at boundary points. This violates the  differentiability condition of Theorem. The derivative values given zero, actually excludes points with Lebesgue measure zero.  
\section{Applications of the Results On Independence}
\label{indm}
By definition, the probability density functions have area under the curve to be unity. 
The bounded support function assumption seems restricting application to many PDFs. But, as said in the Section \ref{introcontrasts}, empirically and numerically this assumption is not restricting. So, it is natural to think of extending the previous results to independence condition. 
Looking similarity with the results on Score Function Difference (SFD) and its properties related to independence in \citep{malmiri02Thesis}, the topic is developed using matching terminology.  \\
Let  $\mathbf{x} = (x_1, x_2,\ldots , x_n)^T$ is an n-dimensional random vector, where, $x_i, i=1,2,\ldots,n$ are random variables;   
$p_{\mathbf{x}}(x_1,x_2,\ldots,x_n)$ is the joint PDF of $\mathbf{x}$ and $\prod_{i=1}^{n} p_{x_i}(x_i)$ is the product of the marginal PDFs. For this description, the statistical independence as in \citep{Papoulis91}, and other terms are  defined.  
\begin{definition}[Statistical Independence] 
\label{inddef}
 The random variables $x_1, x_2,\ldots , x_n$ are said to be statistically independent, if 
\begin{align*}
p_{\mathbf{x}}(x_1,x_2,\ldots,x_n) = \prod_{i=1}^{n} p_{x_i}(x_i)
\end{align*}
\end{definition}
As the stastical independence finds many applications, it is worth defining the following term.
\begin{definition}[Function Difference (FD)] 
The Function Difference (FD) of $\mathbf{x}$ is the difference between product of its marginal PDFs $\prod_{i=1}^{n} p_{x_i}(x_i)$ and  its joint PDF   $p_{\mathbf{x}}(x_1,x_2,\ldots,x_n)$, that is:
\begin{align*}
\bs{\Delta}(\mathbf{x}) \stackrel{def}{=} \prod_{i=1}^{n} p_{x_i}(x_i) - p_{\mathbf{x}}(x_1,x_2,\ldots,x_n)
\end{align*}
\end{definition}
From the Definition,  $\bs{\Delta}(\mathbf{x}) \equiv 0$ implies independence.  
 
With the assumption that the joint PDF and the marginal PDFs are differentiable, the followings are defined. 
\begin{definition}[GPF]
The Gradient of the Product Function (GPF) of $\mathbf{x}$ is the gradient of the product of the marginal PDFs $\prod_{i=1}^{n} p_{x_i}(x_i)$, that is: 
\begin{align*}
\bs{\xi}(\mathbf{x}) &\stackrel{def}{=} \nabla \left(\prod_{i=1}^{n} p_{x_i}(x_i)\right) = \left(\xi_1(x_1), \xi_2(x_2), \ldots, \xi_n(x_n) \right)^T \\
& \mbox{where, } 
\xi_l(x_l) \stackrel{def}{=} \frac{\partial}{\partial x_l}\left(\prod_{i=1}^{n} p_{x_i}(x_i)\right)
\end{align*}
\end{definition}
\begin{definition}[GJF]
The Gradient of the Joint Function (GJF) of $\mathbf{x}$ is the gradient of the joint PDF $p_{\mathbf{x}}(x_1,x_2,\ldots,x_n)$, that is:
\begin{align*}
\bs{\zeta}(\mathbf{x}) & \stackrel{def}{=} \nabla p_{\mathbf{x}}(x_1,x_2,\ldots,x_n) = \left(\zeta_1(\mathbf{x}),\zeta_2(\mathbf{x}),\ldots,\zeta_n(\mathbf{x})\right)^T \\
&\mbox{where, } \zeta_l(\mathbf{x}) \stackrel{def}{=} \frac{\partial}{\partial x_l}p_{\mathbf{x}}(x_1,x_2,\ldots,x_n)
\end{align*} 
\end{definition}
\begin{definition}[GFD] The Gradient Function Difference (GFD) of $\mathbf{x}$ is the difference between its GPF and GJF or equivalently it is the gradient of FD, that is:
\begin{equation*}
\bs{\alpha}(\mathbf{x}) \stackrel{def}{=} \bs{\xi}(\mathbf{x}) - \bs{\zeta}(\mathbf{x}) = \nabla(\bs{\Delta}(\mathbf{x}))
\end{equation*}
\end{definition}
The following property proves that GFD ($\bs{\alpha}(\cdot)$) contains important information about  independence of the components of a random vector.
\begin{property} 
\label{property1} The components of a random vector $\mathbf{x} = (x_1, x_2,\ldots , x_n)^T$ are independent if and only if $\bs{\alpha}(\mathbf{x})\equiv 0$, that is:
\begin{equation}
\label{property1eq}
\bs{\xi}(\mathbf{x}) = \bs{\zeta}(\mathbf{x})
\end{equation} 
\end{property}
\begin{proof}
Let us take for better analogy, $f(\mathbf{x}) = p_{\mathbf{x}}(x_1,x_2,\ldots,x_N)$ and $g(\mathbf{x}) = \prod_{i=1}^{N} p_{x_i}(x_i)$ \\
$\Rightarrow \int_{ \mathcal{R}}^{}  f(\mathbf{x})d\mathbf{x} =  \int_{\mathcal{R}}^{} g(\mathbf{x})d\mathbf{x} =1$	\\
Given $\bs{\xi}(\mathbf{x}) = \bs{\zeta}(\mathbf{x})$ or $\nabla f(\mathbf{x})= \nabla g(\mathbf{x})$. \\
The required conditions are satisfied. So, applying Theorem \ref{der1newth} and the Definition \ref{inddef} of Independence, the property is proved.
\end{proof}
For the same random vector $\mathbf{x}$ with added assumptions that the joint PDF and the product of the marginal PDFs are both second order differentiable and have bounded support, the following definitions and results are obtained.
\begin{definition}[HPF]
The Hessian of the Product Function (HPF) of $\mathbf{x}$ is the Hessian of the product of the marginal PDFs $\prod_{i=1}^{n} p_{x_i}(x_i)$, that is: 
\begin{equation*}
\bs{\Xi}(\mathbf{x}) \stackrel{def}{=} \nabla \bs{\xi}(\mathbf{x}) =  \nabla^2 \left(\prod_{i=1}^{n} p_{x_i}(x_i)\right) 
\end{equation*}
\end{definition}
\begin{definition}[HJF]
The Hessian of the Joint Function (HJF) of $\mathbf{x}$ is the Hessian of the joint PDF $p_{\mathbf{x}}(x_1,x_2,\ldots,x_n)$, that is:
\begin{equation*}
\bs{Z}(\mathbf{x}) \stackrel{def}{=} \nabla \bs{\zeta}(\mathbf{x}) = \nabla^2 p_{\mathbf{x}}(x_1,x_2,\ldots,x_n)
\end{equation*} 
\end{definition}
\begin{definition}[HFD] The Hessian Function Difference (HFD) of $\mathbf{x}$ is the difference between its HPF and HJF, or equivalently it is the Hessian of FD, that is:
\begin{align*}
\bs{A}(\mathbf{x}) & \stackrel{def}{=} \bs{\Xi}(\mathbf{x})- \bs{Z}(\mathbf{x}) = \nabla \left(\bs{\xi}(\mathbf{x})- \bs{\zeta}(\mathbf{x})\right)  \\
& = \nabla^2(\bs{\Delta}(\mathbf{x})) = \nabla \bs{\alpha}(\mathbf{x}) 
\end{align*}
\end{definition}
The following property proves that HFD ($\bs{A}(\cdot)$) contains important information about independence of the components of a random vector.
\begin{property} 
\label{property2} The components of a bounded support random vector $\mathbf{x} = (x_1, x_2,\ldots , x_n)^T$ are independent if and only if $\bs{A}(\mathbf{x})\equiv 0$, that is:
\begin{equation}
\label{property2eq}
\bs{\Xi}(\mathbf{x}) = \bs{Z}(\mathbf{x})
\end{equation} 
\end{property}
\begin{proof}
Applying Lemma \ref{derpnewlm} with $p=2$, the property is proved. 
\end{proof}
\begin{corollary} 
\label{derp2cor} 
Let $\mathbf{x}= (x_1,x_2,\ldots,x_n)^T$ be an n-dimensional random vector; $p_{\mathbf{x}}(x_1,x_2,\ldots,x_n)$ be its joint PDF;  $\prod_{i=1}^{n} p_{x_i}(x_i)$ be its product of the marginal PDF; the PDFs be second order differentiable with bounded support  $\mathcal{R}\subseteq \mathbb{R}^n$. 
Then:
\begin{align}
p_{\mathbf{x}}(x_1,x_2,\ldots,x_n)=\prod_{i=1}^{n} p_{x_i}(x_i) \Leftrightarrow \bs{\xi}(\mathbf{x}) = \bs{\zeta}(\bs{x}) \Leftrightarrow \bs{\Xi}(\mathbf{x}) = \bs{Z}(\mathbf{x})
\end{align}
\end{corollary}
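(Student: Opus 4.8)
The plan is to recognize this Corollary as nothing more than the specialization of Theorem \ref{derp2th} (with $p=2$) to the pair $f(\mathbf{x}) = p_{\mathbf{x}}(x_1,\ldots,x_n)$ and $g(\mathbf{x}) = \prod_{i=1}^n p_{x_i}(x_i)$, combined with the Definition \ref{inddef} of statistical independence and the definitions of GJF, GPF, HJF and HPF. Indeed, by those definitions $\nabla f = \bs{\zeta}$, $\nabla g = \bs{\xi}$, $\nabla^2 f = \bs{Z}$ and $\nabla^2 g = \bs{\Xi}$, so the chain $f = g \Leftrightarrow \nabla f = \nabla g \Leftrightarrow \nabla^2 f = \nabla^2 g$ furnished by Theorem \ref{derp2th} is verbatim the asserted chain $p_{\mathbf{x}} = \prod_{i=1}^n p_{x_i} \Leftrightarrow \bs{\xi} = \bs{\zeta} \Leftrightarrow \bs{\Xi} = \bs{Z}$. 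Equivalently, one may simply invoke Property \ref{property1} for the first equivalence and Property \ref{property2} for the second and glue the two with transitivity of ``if and only if'', since both Properties are already stated in the ``independent iff'' form.

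First I would check that $f$ and $g$ satisfy the three hypotheses of Theorem \ref{derp2th} with $p=2$. Second-order differentiability of both is assumed in the statement. For the bounded-support hypothesis, $f = p_{\mathbf{x}}$ has bounded support $\mathcal{R}$ by assumption; each marginal satisfies $p_{x_i}(x_i) = \int p_{\mathbf{x}}(\mathbf{x})\, dx_{-i} = 0$ whenever $x_i$ lies outside the $i$-th coordinate projection of $\mathcal{R}$, so $\mbox{supp}(p_{x_i})$ is contained in that bounded projection and $\mbox{supp}(g)$ is contained in the product of the $n$ projections, a bounded box; hence $g$ has bounded support too, and $\mathcal{R} = \mbox{supp}(f)\cup\mbox{supp}(g)$ is bounded. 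For the equal-hypervolume condition, $\int_{\mathbb{R}^n} f = 1$ since $f$ is a PDF, and $\int_{\mathbb{R}^n} g = \prod_{i=1}^n \int_{\mathbb{R}} p_{x_i}(x_i)\, dx_i = 1$ by Fubini's theorem; since both integrands vanish off their supports, integration over $\mathcal{R}$ yields the same value $1$ for each, so $\int_{\mathcal{R}} f\, d\mathbf{x} = \int_{\mathcal{R}} g\, d\mathbf{x}$.

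With the hypotheses verified, Theorem \ref{derp2th} applies and the Corollary follows immediately. The only genuinely delicate point — and it is a mild one — is the bounded-support bookkeeping for the product of marginals together with the fact that ``support'' and the derivative equalities here are understood up to sets of Lebesgue measure zero, exactly the caveat flagged in the discussion after Theorem \ref{derp2th}; once that convention is accepted there is no real obstacle. In short, this Corollary is a restatement that packages Theorem \ref{der1newth} and Lemma \ref{derpnewlm} ($p=2$) in probabilistic language, so I expect the write-up to be a couple of lines citing Theorem \ref{derp2th} (or, alternatively, Properties \ref{property1} and \ref{property2}) after the one-paragraph hypothesis check above.
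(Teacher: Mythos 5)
Your proposal is correct and matches the paper's own proof, which simply cites Theorem \ref{derp2th} together with Definition \ref{inddef} of independence; your additional verification that the product of marginals inherits bounded support and integrates to the same value over $\mathcal{R}$ is a useful hypothesis check the paper leaves implicit, but it is not a different route.
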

\begin{proof}
Applying Theorem \ref{derp2th} and the Definition \ref{inddef} of independence, 
the corollary is proved. 
\end{proof}
The Property \ref{property1} of GFD, Property \ref{property2} of HFD and the Corollary \ref{derp2cor} bring further interpretations on independence of bounded support random vector. Our goal is to develop new contrasts based on them. For that the quantities should be nonnegative to be quantified as measures. So, first let there be derived independence measures based on these results.
\section{Deriving new Independence Measures}
\label{indMsection} 
The goal here is to derive independence measures based on the quantities FD, GFD and HFD. 
But, the quantities do not assure nonnegativity to be quantified as measures. 
Assuming a class of $L^p$ integrable PDFs, the $L^p$ norm can be applied on them. Being norm, they satisfy all the  properties of a \textit{metric} and an added property of absolute scale invariance, as per the definition of norm. The details on the definitions of a measure, a \textit{metric}, a norm and the specific $L^p$-norm are briefed in Appendix \ref{aplpnorm}. 

\hspace{0.2 in}	It is desired that a distance measure between PDFs is invariant with respect to translation and scaling i.e. the deviation in mean and the variance should not affect the distance measure. 
The reason is, the nearness of the PDFs should imply their shapes are matching. 
The desired property of scale invariance, instead of the absolute scale invariance,  can be assured by defining an independence measure that applies a norm on normalized PDFs i.e. converting them first into zero mean, univariance PDFs. 
\begin{proposition}
For a random vector $\mathbf{x} \in \mathbb{R}^n$ with $L^p$ integrable joint and marginal PDFs,  
LpFD($\mathbf{x}$) or $\bs{\Delta}_{p}(\mathbf{x})$  defined as under 
is an independence measure.   
\begin{align}
\label{lpfdmeasure}
\bs{\Delta}_{p}(\mathbf{x}) & \stackrel{def}{=} ||\bs{\Delta}(\mathbf{z})||_{p} = \left(\int_{\mathbb{R}^n} \left|   \bs{\Delta}(\mathbf{z}) \right|^p d\mathbf{z}\right)^{\frac{1}{p}} \\
\mbox{ or }   d_p\left( \prod_{i=1}^{n}p_{x_i}(x_i), p_{\mathbf{x}}(\mathbf{x}) \right) 
& = \left(\int_{\mathbf{x}} \left| \prod_{i=1}^{n}p_{x_i}\left( \frac{x_i - \bar{x_i}}{\sigma_{x_i}} \right) - p_{\mathbf{x}}\left( \frac{\mathbf{x}-\bar{\mathbf{x}}}{\sigma_\mathbf{x}} \right) \right|^p d\mathbf{x}\right)^{\frac{1}{p}} 
\end{align}
where, $\mathbf{z} = \left( \frac{\mathbf{x}-\bar{\mathbf{x}}}{\sigma_\mathbf{x}} \right)$,  $\bar{\mathbf{x}}$ and $\bar{x_i}$ are consecutively  mean of $\mathbf{x}$ and $x_i$, $\sigma_{\mathbf{x}}$ and $\sigma_{x_i}$ are corresponding standard deviations. 
\end{proposition}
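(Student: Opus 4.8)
The plan is to verify, against the definitions recalled in Appendix~\ref{aplpnorm}, the three defining properties of an independence measure for the functional $\bs{\Delta}_p(\mathbf{x})$: (i) nonnegativity, (ii) vanishing exactly on the independence locus, and (iii) the desired scale invariance. The whole point of applying an $L^p$-norm to $\bs{\Delta}(\mathbf{z})$ rather than to $\bs{\Delta}(\mathbf{x})$ directly is that normalization to zero mean and unit variance is what upgrades the \emph{absolute} scale invariance inherent in a norm to the \emph{translation-and-scale} invariance that is actually wanted, so I would organize the argument around that observation.

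First I would dispatch nonnegativity: since $\bs{\Delta}_p(\mathbf{x}) = \|\bs{\Delta}(\mathbf{z})\|_p$ is by construction the $L^p$-norm of a measurable function on $\mathbb{R}^n$, it is nonnegative, and it is finite because the joint and marginal PDFs — hence their difference $\bs{\Delta}$ — are assumed $L^p$-integrable (and $L^p$-integrability is preserved under the affine change of variables $\mathbf{z} = (\mathbf{x}-\bar{\mathbf{x}})/\sigma_{\mathbf{x}}$, which only rescales the integral by a constant Jacobian factor). Next, the equality case: by the norm axiom, $\|\bs{\Delta}(\mathbf{z})\|_p = 0$ if and only if $\bs{\Delta}(\mathbf{z}) = 0$ almost everywhere, i.e. $\prod_i p_{x_i} = p_{\mathbf{x}}$ a.e. in the normalized coordinates; undoing the affine substitution, this is equivalent to $\prod_i p_{x_i}(x_i) = p_{\mathbf{x}}(x_1,\ldots,x_n)$ a.e., which by Definition~\ref{inddef} is precisely statistical independence of $x_1,\ldots,x_n$. (Here I would note that affine bijections map null sets to null sets, so ``a.e.'' is preserved in both directions.)

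The remaining, and most substantive, point is scale invariance: I must show $\bs{\Delta}_p(\mathbf{x})$ is unchanged if each $x_i$ is replaced by $a_i x_i + b_i$ with $a_i \neq 0$. The key step is that under such a componentwise affine map the normalized vector $\mathbf{z}$ is \emph{invariant} — subtracting the (new) mean and dividing by the (new) standard deviation cancels $a_i$ and $b_i$ exactly — and, because the transformation is componentwise, it sends marginal PDFs to marginal PDFs and the joint PDF to the joint PDF of the transformed vector consistently, so $\bs{\Delta}(\mathbf{z})$ as a function of $\mathbf{z}$ is literally the same function. Hence the integral defining the norm is unchanged. I would spell out the one-dimensional density transformation rule $p_{a x + b}(y) = \tfrac{1}{|a|} p_x\!\big(\tfrac{y-b}{a}\big)$, note the analogous rule for the joint density under a diagonal affine map, and then check that the composition with centering/scaling is the identity on the $\mathbf{z}$-coordinate.

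The main obstacle is bookkeeping rather than conceptual: one must be careful that the normalization is done \emph{per component} (so that the relevant symmetry group is the componentwise affine group, which is exactly the BSS indeterminacy group), and that the Jacobian factors from the density transformation law and from the change of variables in the integral are tracked consistently so they cancel; a sloppy accounting would spuriously introduce a factor $\prod_i |a_i|$. A minor secondary point worth stating explicitly is that finiteness of the second moments is tacitly assumed so that $\bar{x_i}$ and $\sigma_{x_i}$ are well defined; I would flag this as a standing hypothesis. Everything else follows directly from the norm axioms in Appendix~\ref{aplpnorm} and Definition~\ref{inddef}.
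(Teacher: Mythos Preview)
Your core argument matches the paper's: both proofs simply invoke the $L^p$-norm axioms to get nonnegativity and the identity-of-indiscernibles, then appeal to Definition~\ref{inddef} to identify $\bs{\Delta}(\mathbf{z})\equiv 0$ with independence. The paper's proof stops there (three sentences); your additional verification of translation/scale invariance via the normalization $\mathbf{z}=(\mathbf{x}-\bar{\mathbf{x}})/\sigma_{\mathbf{x}}$ is correct but goes beyond what the paper proves at this point---the paper defers that analysis to Proposition~\ref{fdcontrast}, where it is needed for the contrast properties.
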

\begin{proof}
By definition, $\bs{\Delta}_{p}(\mathbf{x}) \geq 0$ and 
$\bs{\Delta}_{p}(\mathbf{x}) =  0$ if and only if $\bs{\Delta}(\mathbf{x}) \equiv 0$.
Also, by Definition \ref{inddef} of independence, $\bs{\Delta}_{p}(\mathbf{x}) = 0$ if and only if the components of $\mathbf{x}$ are independent.  \\
This proves that $\bs{\Delta}_{p}(\mathbf{x})$ is an independence measure. More specifically, it is an independence metric  with respect to $\bs{\Delta}(\mathbf{x})$, but not necessarily on the space of random vectors $\mathbf{x}$ themselves.
\end{proof}
\hspace{0.2 in}		
The GFD is essentially a vector, whose value is an n-tuple of functions. Accordingly,  $ \bs{\alpha}: L^p \times L^p \times \ldots L^p \rightarrow \mathbb{R}$. So,  $L^p$-norm can still be applied as under. 
\begin{proposition}
\label{alpha1p}
For an n-dimensional random vector $\mathbf{x} =  (x_1,x_2,...,x_n)$ with differentiable joint and marginal PDFs, LpGFD($\mathbf{x}$) or $\bs{\alpha}_{p}(\mathbf{x})$  defined as under 
is an independence measure. 
\begin{align}
\label{alpha1pdef}
\bs{\alpha}_{p}(\mathbf{x}) &\stackrel{def}{=} ||\bs{\alpha}(\mathbf{z})||_{p} = \left( \sum_{i=1}^{n}
\left( ||\alpha_i(\mathbf{z})||_{p} \right)^p \right)^\frac{1}{p} \\
\mbox{ or } d_p\left(\mathbf{\xi}(\mathbf{x}),\mathbf{\zeta}(\mathbf{x})\right)  &= \left( \sum_{i=1}^{n} \int_{\mathbf{z}}  \left| \xi_i\left( \frac{\mathbf{x}-\bar{\mathbf{x}}}{\sigma_\mathbf{x}} \right) - \zeta_i\left( \frac{\mathbf{x}-\bar{\mathbf{x}}}{\sigma_{\mathbf{x}}}\right) \right|^p d\mathbf{z} \right)^{\frac{1}{p}}
\end{align}
where, $\mathbf{z} = \left( \frac{\mathbf{x}-\bar{\mathbf{x}}}{\sigma_\mathbf{x}} \right)$, $\bar{\mathbf{x}}$ is the  mean of $\mathbf{x}$ and  $\sigma_{\mathbf{x}}$ is the corresponding standard deviation. 
\end{proposition}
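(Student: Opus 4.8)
\textbf{Proof proposal for Proposition \ref{alpha1p}.}

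The plan is to follow the exact template already used for the LpFD measure, since the only structural difference is that $\bs{\alpha}(\mathbf{x})$ is vector-valued, so its $L^p$-norm is a ``mixed'' norm (an $\ell^p$ combination of the $L^p$-norms of the scalar components $\alpha_i$). First I would observe that the quantity $\bs{\alpha}_p(\mathbf{x})$ is manifestly well-defined and nonnegative: each $\alpha_i(\mathbf{z})$ lies in $L^p$ by the hypothesis that the joint and marginal PDFs are differentiable (so that $\bs{\xi}$ and $\bs{\zeta}$ exist), each inner integral $\|\alpha_i(\mathbf{z})\|_p$ is therefore a nonnegative real number, and a finite $\ell^p$ sum of nonnegative numbers raised to power $1/p$ is nonnegative. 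This gives $\bs{\alpha}_p(\mathbf{x}) \geq 0$.

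Next I would establish the key equivalence $\bs{\alpha}_p(\mathbf{x}) = 0 \iff$ the components of $\mathbf{x}$ are independent. For the forward direction: since $\bs{\alpha}_p(\mathbf{x})$ is an $\ell^p$ norm of the nonnegative numbers $\|\alpha_i(\mathbf{z})\|_p$, it vanishes iff every $\|\alpha_i(\mathbf{z})\|_p = 0$, which in turn (being an $L^p$-norm of a function) holds iff each $\alpha_i(\mathbf{z}) = 0$ almost everywhere, i.e. $\bs{\alpha}(\mathbf{z}) \equiv 0$. Because $\mathbf{z}$ is just the affine rescaling $\mathbf{z} = (\mathbf{x}-\bar{\mathbf{x}})/\sigma_{\mathbf{x}}$, which is an invertible map with positive Jacobian, $\bs{\alpha}(\mathbf{z}) \equiv 0$ is equivalent to GFD vanishing for $\mathbf{x}$ itself. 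Now I invoke Property \ref{property1}: the components of $\mathbf{x}$ are independent if and only if $\bs{\alpha}(\mathbf{x}) \equiv 0$, i.e. $\bs{\xi}(\mathbf{x}) = \bs{\zeta}(\mathbf{x})$. (One should note that standardization does not change whether the components are independent, so applying Property \ref{property1} to $\mathbf{z}$ or to $\mathbf{x}$ gives the same conclusion.) Chaining these equivalences yields $\bs{\alpha}_p(\mathbf{x}) = 0 \iff$ independence.

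Finally I would conclude, exactly as in the LpFD proof, that nonnegativity together with the ``zero iff independent'' property is precisely what it means for $\bs{\alpha}_p(\mathbf{x})$ to be an independence measure, and I would add the same qualifying remark: $\bs{\alpha}_p$ is in fact a metric with respect to the GFD argument $\bs{\alpha}(\cdot)$ (equivalently, $d_p(\bs{\xi}(\mathbf{x}),\bs{\zeta}(\mathbf{x}))$ inherits the metric axioms from the $L^p$/$\ell^p$ norm), though not necessarily a metric on the space of random vectors themselves, since distinct random vectors can share the same GFD.

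The only place requiring any care — the ``main obstacle,'' though it is minor — is the bookkeeping around the standardization map: one must check that the change of variables $\mathbf{x} \mapsto \mathbf{z}$ does not create or destroy zeros of $\bs{\alpha}$ and does not affect independence, and that the component-wise standardization $x_i \mapsto (x_i - \bar{x}_i)/\sigma_{x_i}$ used inside $\bs{\xi}$ is consistent with the joint standardization (which it is when the $\sigma_{x_i}$ are the marginal standard deviations and the mean is subtracted coordinate-wise). Everything else is a direct transcription of the LpFD argument with Property \ref{property1} substituted for Definition \ref{inddef} applied to $\bs{\Delta}$.
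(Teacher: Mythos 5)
Your proposal is correct and follows essentially the same route as the paper's proof: nonnegativity from the norm structure, the equivalence $\bs{\alpha}_p(\mathbf{x})=0 \Leftrightarrow \bs{\alpha}\equiv 0$, and then Property \ref{property1} to link this to independence, closing with the same remark that the quantity is a metric with respect to $\bs{\alpha}(\cdot)$ but not on the space of random vectors. The only difference is that you spell out the bookkeeping for the standardization map $\mathbf{x}\mapsto\mathbf{z}$, which the paper leaves implicit.
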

\begin{proof}
The differentiable PDF condition, assures $L^p$ integrability.\\
By definition, $\bs{\alpha}_{p}(\mathbf{x}) \geq 0$ and $\bs{\alpha}_{p}(\mathbf{x}) =  0$ if and only if $\bs{\alpha}(\mathbf{x}) \equiv 0$.
Applying Property \ref{property1}, $\bs{\alpha}_{p}(\mathbf{x}) = 0$ if and only if the components of $\mathbf{x}$ are independent.  \\
This proves that $\bs{\alpha}_{p}(\mathbf{x})$ is an independence measure. More specifically, it is an independence metric  with respect to $\bs{\alpha}(\mathbf{x})$, but not necessarily on the space of random vectors $\mathbf{x}$ themselves.
\end{proof}
\hspace{0.2 in}		
The HFD is essentially a matrix. So, matrix norms are applicable. The `\textit{Entrywise}' norms treat matrix entries as a vector entries. The following independence measure can be defined. 
\begin{proposition}
\label{alpha2p}
For a bounded support random vector $\mathbf{x} =  (x_1,x_2,...,x_N)$ with second order differentiable joint and marginal PDFs, LpHFD($\mathbf{x}$) or $\bs{A}_{p}$ 
is an independence measure, where:
\begin{align}
\bs{A}_{p} & \stackrel{def}{=} ||\bs{A}(\mathbf{x})||_{p} = \left( \sum_{j=1}^{n}\sum_{i=1}^{n}
\left( ||A_{ij}(\mathbf{x})||_{p} \right)^p \right)^\frac{1}{p} \\
\mbox{or }  d_p\left(\bs{\Xi}(\mathbf{x}),\bs{Z}(\mathbf{x})\right) & = \left( \sum_{j=1}^{n}\sum_{i=1}^{n} \int_{x_{ij}}  \left| \Xi_{ij}\left( \frac{\mathbf{x}-\bar{\mathbf{x}}}{\sigma_\mathbf{x}}\right) - Z_{ij}\left(\frac{\mathbf{x}-\bar{\mathbf{x}}}{\sigma_\mathbf{x}}\right) \right|^p dx_{ij} \right)^{\frac{1}{p}} 
\end{align}
\end{proposition}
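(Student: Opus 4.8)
The plan is to mirror the proofs of the previous two propositions (LpFD and LpGFD) almost verbatim, since LpHFD is structurally identical: it is an entrywise $L^p$-norm applied to the matrix-valued quantity $\bs{A}(\mathbf{x}) = \bs{\Xi}(\mathbf{x}) - \bs{Z}(\mathbf{x})$, the Hessian of FD. First I would note that the second-order differentiability hypothesis on the joint and marginal PDFs guarantees that each entry $A_{ij}(\mathbf{x})$ is a continuous, hence $L^p$-integrable, function on the bounded support $\mathcal{R}$, so the defining integral is finite and $\bs{A}_p$ is well defined.

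Next I would verify the two defining requirements of an independence measure. Nonnegativity is immediate: $\bs{A}_p$ is a sum of $p$-th powers of nonnegative quantities, raised to the $1/p$ power, so $\bs{A}_p \geq 0$. For the crucial "zero iff independent" clause, I would argue in two directions. Since an entrywise $L^p$-norm of a matrix vanishes exactly when every entry vanishes (a.e.), $\bs{A}_p = 0$ holds if and only if $A_{ij}(\mathbf{x}) \equiv 0$ for all $i,j$, i.e. if and only if $\bs{A}(\mathbf{x}) \equiv 0$, equivalently $\bs{\Xi}(\mathbf{x}) = \bs{Z}(\mathbf{x})$. Then I would invoke Property \ref{property2} — which is exactly the statement that, for a bounded support random vector with second-order differentiable PDFs, $\bs{\Xi}(\mathbf{x}) = \bs{Z}(\mathbf{x})$ holds if and only if the components of $\mathbf{x}$ are independent (this is where the bounded support hypothesis is consumed, via Lemma \ref{derpnewlm} with $p=2$). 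Chaining these equivalences gives $\bs{A}_p = 0 \iff$ independence, which completes the proof that $\bs{A}_p$ is an independence measure; as in the earlier propositions I would add the remark that it is in fact a metric with respect to $\bs{A}(\mathbf{x})$ (or equivalently on the pair $(\bs{\Xi},\bs{Z})$), though not on the space of random vectors themselves.

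The only genuinely non-routine point — and the one I would be most careful about — is the a.e. subtlety: the entrywise norm vanishing forces each $A_{ij}$ to vanish only almost everywhere, whereas Property \ref{property2} is phrased as a pointwise identity. This is harmless here because second-order differentiability makes each $A_{ij}$ continuous, so "zero a.e." upgrades to "zero everywhere"; I would state this explicitly rather than glossing over it, echoing the continuity/Lebesgue-measure-zero discussion already given after Theorem \ref{derp2th}. Everything else is bookkeeping identical to the LpFD and LpGFD proofs.

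\begin{proof}
The second order differentiability of the joint and marginal PDFs assures that each entry $A_{ij}(\mathbf{x})$ of $\bs{A}(\mathbf{x})$ is continuous on the bounded support $\mathcal{R}$, hence $L^p$ integrable; so $\bs{A}_{p}$ is well defined.\\
By definition, $\bs{A}_{p} \geq 0$ and $\bs{A}_{p} = 0$ if and only if $||A_{ij}(\mathbf{x})||_{p} = 0$ for all $i,j$, i.e. if and only if every $A_{ij}(\mathbf{x})$ vanishes almost everywhere. Since each $A_{ij}(\mathbf{x})$ is continuous, this is equivalent to $\bs{A}(\mathbf{x}) \equiv 0$, that is $\bs{\Xi}(\mathbf{x}) = \bs{Z}(\mathbf{x})$.\\
Applying Property \ref{property2}, $\bs{\Xi}(\mathbf{x}) = \bs{Z}(\mathbf{x})$ holds if and only if the components of the bounded support random vector $\mathbf{x}$ are independent. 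Chaining the equivalences, $\bs{A}_{p} = 0$ if and only if the components of $\mathbf{x}$ are independent.\\
This proves that $\bs{A}_{p}$ is an independence measure. More specifically, it is an independence metric with respect to $\bs{A}(\mathbf{x})$, but not necessarily on the space of random vectors $\mathbf{x}$ themselves.
\end{proof}
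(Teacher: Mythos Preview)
Your proposal is correct and follows essentially the same approach as the paper's own proof: verify $L^p$ integrability from the differentiability hypothesis, observe nonnegativity and that the norm vanishes iff $\bs{A}(\mathbf{x})\equiv 0$, then invoke Property~\ref{property2} to get the ``zero iff independent'' equivalence, closing with the same remark about being a metric with respect to $\bs{A}(\mathbf{x})$. Your version is in fact slightly more careful than the paper's, since you explicitly bridge the ``vanishes a.e.'' conclusion of the norm argument to the pointwise identity needed for Property~\ref{property2} via continuity; the paper simply asserts the equivalence without spelling this out.
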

where, $\bar{\mathbf{x}}$ is the  mean of $\mathbf{x}$ and $\sigma_{\mathbf{x}}$ is the corresponding standard deviation. 
\begin{proof}
The second order differentiable PDF condition, assures $L^p$ integrability.\\
By definition, $\bs{A}_{p} \geq 0$ and $\bs{A}_{p}(\mathbf{x}) =  0$ if and only if $\bs{A}(\mathbf{x}) \equiv 0$. 
 Applying property \ref{property2}, $\bs{A}_{p} = 0$ if and only if the components of $\mathbf{x}$ are independent.  \\
This proves that $\bs{A}_{p}$ is an independence measure. More specifically, it is an independence metric  with respect to $\bs{A(\mathbf{x})}$, but not necessarily on the space of random vectors $\mathbf{x}$ themselves.
\end{proof}
\section{The Linear BSS Problem and Solution}
\label{linearBSS} 	
The Blind Source Separation (BSS) model explains generation of an observed random vector $\mathbf{x}(t)$, as an  transformation to another latent (hidden) random vector $\mathbf{s}(t)$. Assuming linear and instantaneous  transformation, mathematically, $\mathbf{x}(t) = \mathbf{A}\mathbf{s}(t)$,
where $\mathbf{x}(t) = [x_1(t); x_2(t); \ldots; x_m(t)]$; $\mathbf{s}(t) = [s_1(t); s_2(t); \ldots; s_n(t)]$; $x_i(t)$, $s_i(t)$ are random variables with values in $\mathcal{R}$; $m = n >= 2$ and  $\mathbf{A}$ is full rank. 
Let there be available N umber of samples of each observed random variable. Assuming an identicle distribution, the instantaneous model can be extended for N realizations. Let $\mathbf{X}(t) = [\mathbf{x}_1(t); \mathbf{x}_2(t); \ldots; \mathbf{x}_m(t)]$ be the $m \times N$ data or observation  matrix and $\mathbf{S}(t)$ be the $n \times N$ component or source  matrix. Then,
\begin{equation}
\mathbf{X}(t) = \mathbf{A}\mathbf{S}(t)
\end{equation} 
	The problem of BSS is to estimate both the unknowns $\mathbf{A}$ and $\mathbf{S}(t)$, with the only assumption of $\mathbf{s}_i(t)$ being mutually the \textit{most independent possible (m.i.p.)} random variables with respect to a given contrast. If $\mathbf{W}$ is the estimated inverse of the mixing matrix $\mathbf{A}$ then the estimated source or component matrix $\mathbf{Y}(t)$ is:
\begin{equation}
 \mathbf{Y}(t) = \mathbf{A}^{-1}\mathbf{X}(t) = \mathbf{W}\mathbf{X}(t) = \mathbf{W}\mathbf{A}\mathbf{S}(t)
 \end{equation}
As, $\mathbf{X}(t) = \mathbf{A}\mathbf{S}(t) = (\mathbf{A }\Lambda^{-1} \mathbf{P^{-1}})(\mathbf{P} \Lambda \mathbf{S}(t)) $, for any permutation matrix $\mathbf{P}$ and a scaling matrix $\Lambda$, there are going to be scaling and permutation ambiguities in the estimated components. 
 
\hspace{0.2 in} Given the unknown sources are \textit{independent and identically distributed} (\textit{i.i.d.}) with maximum one of them being Gaussian, 
a unique BSS  solution is assured by Darmois-Skitovtch Theorem \citep{Comon2010,Comon94,eriksson2004identifiability}.  Accordingly,  the BSS solution for linear, instantaneous mixing system can be obtained by maximizing the independence among $y_i(t)$s  with respect to the separation matrix $\mathbf{W}$, as:
\begin{equation}
\mathbf{y}^{*}(t) = \underset{\mathbf{W}}{\mbox{argmax}} \mbox{  } \Phi(\mathbf{y}(t)) 
\end{equation}
where, $\Phi(\mathbf{y}(t))$ is the optimization function, based on independence or dependence measure, that assures source separation on maximization. It is identified as a contrast function or simply a `contrast'. 
Oerall, the BSS solution demands a suitable  contrast function as an optimization criteria and a suitable optimization technique corresponding to that contrast function. 
\section{Deriving New Contrasts for ICA and BSS}
\label{contrasts}
A formal definition of contrasts, based on references \citep{Comon96} and \citep[Chapter 3]{Comon2010}, for BSS is as under. 
\begin{definition}[Contrast for BSS]
Let $\mathcal{H}$ be a set of static transformations (filters) containing an identity transformation (filter) $\mathbf{I}$; $\mathcal{S}$ be a set of source random variables that are independent and ; $\mathcal{X} = \mathcal{H} \cdot \mathcal{S}$ be the set of random variables obtained by the action of $\mathcal{H}$ on $\mathcal{S}$; $\Phi$ be a mapping from $\mathcal{H}\times \mathcal{H}\cdot \mathcal{S}$ to $\mathbb{R}$. Also, denoted by  $\mathcal{T}$ the set of trivial filters of $\mathcal{H}$, which leave criterion $\Phi$ unchanged. A mapping $\Phi(\mathbf{H};\mathbf{x})$ is a contrast if it depends solely on the PDF of $\mathbf{x}$ and if it satisfies the following three properties below.
\begin{align*}
&\mbox{\textbf{a.} Invariance: }
 \forall \mathbf{x} \in \mathcal{X}, \forall \mathbf{T} \in \mathcal{T}, \Phi(\mathbf{T};\mathbf{x}) = \Phi(\mathbf{I};\mathbf{x})\\
&\mbox{\textbf{b.} Dominance: } 
 \forall \mathbf{s} \in \mathcal{S}, \forall \mathbf{H} \in \mathcal{H}, \Phi(\mathbf{H};\mathbf{s}) \leq \Phi(\mathbf{I};\mathbf{s}) \\
&\mbox{\textbf{c.} Discrimination: } 
 \forall \mathbf{s} \in \mathcal{S}, \mbox{ } if \mbox{ } \mathbf{H} \in \mathcal{H} \mbox{ } satisfies \nonumber \\
&\mbox{   }  \Phi(\mathbf{H};\mathbf{s}) = \Phi(\mathbf{I};\mathbf{s}), \mbox{ }then \mbox{ } \mathbf{H} \in \mathcal{T} 
\end{align*}
\end{definition}
The Dominance  property assures that the actual sources have the global maxima.
  The Discrimination property assures that there is no other spurious solution achieving the global maxima. 
There is some discussion needed on the invariance property. It is obvious that the independence components found using a given measure, are still independent if permuted or scaled. So, one of the solutions is available,  whole class  of solutions related through permutation and scaling operation is available. The Invariance property assures this by stating that whole class should have a same measure. The widely used \textit{KL-divergence} assures this property. But, it is known that many other PDF divergence measures such as; Itakura-Saito distance, density-power divergences 
do not assure this scale invariance property. To accommodate such a larger class of divergences, without deteriorating the BSS performance, there has been first defined and then proposed relative  scale invariance property as the sufficient property with other properties to be quantified as contrast. 
\begin{definition}
The contrast $\Phi:\mathcal{H}\times \mathcal{H}\cdot \mathcal{S} \rightarrow \mathbb{R}$ is said to have relative Scale Invariance property; if it satisfies the following condition: Given $\mathbf{y} = \mathbf{\Lambda x}$
\begin{align*}
\label{relativescale}
\Phi(\mathbf{y}) = k(\Lambda)\Phi(\mathbf{x})
\end{align*}
where, $k(\Lambda)$ is a fixed transformation as a function of $\Lambda$.
\end{definition}
\begin{proposition}
 $\Phi:\mathcal{H}\times \mathcal{H}\cdot \mathcal{S} \rightarrow \mathbb{R}$  is a contrast for linear BSS, if it satisfies the Relative Scale Invariance property with other required properties satisfied.
\end{proposition}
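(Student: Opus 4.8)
The plan is to reduce the relative-scale-invariant case to the classical one by quotienting out the scaling degree of freedom. Recall from Section \ref{linearBSS} that a blind solution is anyway only required up to a permutation matrix $\mathbf{P}$ and a diagonal scaling matrix $\Lambda$; equivalently, recovering $\mathbf{s}$ amounts to finding $\mathbf{W}$ with $\mathbf{W}\mathbf{A}=\mathbf{P}\Lambda$. First I would fix the scaling ambiguity by restricting attention to separating matrices whose outputs $y_i$ are standardized (zero mean, unit variance); call this constrained family $\mathcal{H}_0\subset\mathcal{H}$, and let $\mathcal{W}_0$ be the set of separating matrices realizing it. Since every filter in $\mathcal{H}$ is the composition of one in $\mathcal{H}_0$ with a pure scaling, the relative Scale Invariance identity $\Phi(\mathbf{\Lambda x})=k(\Lambda)\Phi(\mathbf{x})$ shows that along each scaling coset the values of $\Phi$ are all fixed multiples of the value at the $\mathcal{H}_0$-representative, the multiplier $k(\Lambda)$ depending only on $\Lambda$. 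Hence the location of the maximizer modulo scaling is well defined and coincides with the maximizer of the restriction $\Phi|_{\mathcal{H}_0}$.

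Next I would observe that on the slice $\mathcal{H}_0$ the relative Scale Invariance property collapses to ordinary Invariance: two standardized outputs related by a trivial filter can differ only by a permutation, since a nontrivial scaling would destroy standardization, and $\Phi$ is unchanged under coordinate permutations; thus $\Phi$ is constant on each $\mathcal{T}$-orbit contained in $\mathcal{H}_0$. Therefore $\Phi|_{\mathcal{H}_0}$ satisfies all three classical requirements --- Invariance (just argued), together with the assumed Dominance and Discrimination --- and so is a contrast in the sense of the earlier Definition, restricted to $\mathcal{H}_0$. Consequently $\mathbf{y}^{*}(t)=\underset{\mathbf{W}\in\mathcal{W}_0}{\mbox{argmax}}\;\Phi(\mathbf{W}\mathbf{x}(t))$ returns $\mathbf{W}\mathbf{A}$ equal to a permutation matrix: Dominance forces the standardized sources to attain the global maximum, and Discrimination excludes any spurious standardized demixing from attaining it. Releasing the standardization constraint reinstates exactly the inherent scaling ambiguity, so over all of $\mathcal{H}$ the maximizer set of $\Phi$ is precisely the orbit of the true sources under permutations and scalings, which is what one demands of a BSS contrast.

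The step I expect to be the main obstacle --- and worth recording carefully rather than waving through --- is the legitimacy of passing to the slice $\mathcal{H}_0$. This requires $k(\Lambda)>0$ for every admissible scaling $\Lambda$, because only then does multiplication by $k(\Lambda)$ preserve the ordering of $\Phi$-values and hence commute with the $\mbox{argmax}$; if $k$ could vanish or change sign, the coset-representative argument would break. I would therefore include positivity of $k(\cdot)$ in what ``relative Scale Invariance'' is taken to mean here --- it holds for the density-power and Itakura--Saito type divergences that motivate the relaxation --- after which the remainder is bookkeeping: translating between filters acting on the sources and separating matrices acting on the observations, and checking that the trivial filters meeting $\mathcal{H}_0$ are exactly the coordinate permutations.
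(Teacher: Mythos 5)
Your proposal is correct and follows essentially the same route as the paper: both arguments reduce the claim to the observation that the optimization is actually carried out over an equally-scaled (standardized/whitened) solution set, on which the relative scale factor $k(\Lambda)$ is effectively constant and the classical Invariance, Dominance and Discrimination properties apply directly --- your standardized slice $\mathcal{H}_0$ is a cleaner formalization of the paper's third bullet about the orthogonal/whitened approach. The one substantive addition in your write-up is the explicit requirement that $k(\Lambda)>0$ so that multiplication by $k(\Lambda)$ preserves the ordering of $\Phi$-values and commutes with the argmax; the paper leaves this implicit (its examples satisfy it, and at the sources the issue disappears since $\Phi(\mathbf{s})=0$ forces $k(\Lambda)\cdot 0=0$), so flagging it is a genuine improvement in rigor rather than a divergence in method.
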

\begin{proof}
The following arguments justify the proposition. 
\begin{itemize}
\item Given $\mathbf{T}\in \mathcal{T}$ is a scale matrix with diagonal entries only.  
As the source components are independent, $\Phi(\mathbf{s}) = 0$. From the definition of the relative scale invariance property, $k(\mathbf{T})$ is a predefined transformation acting as a scaling factor. So, $\Phi(\mathbf{y}) = 0$. 
$\Rightarrow \forall \mathbf{T} \in \mathcal{T}, \Phi(\mathbf{T};\mathbf{s}) = \Phi(\mathbf{I};\mathbf{s}) = 0$ \\
As per this argument, scale invariance is required corresponding to the source components $\mathbf{s}$ and not necessarily with respect to $\mathbf{x}$. This is satisfied by the contrasts measuring $0$ corresponding to independence and satisfying relative scale invariant.
\item By definition, the relation between the measures corresponding to $\mathbf{x}$ components and their scaled version $\mathbf{Tx}$ components is known. 
$\forall \mathbf{T} \in \mathcal{T}, \Phi(\mathbf{T};\mathbf{x}) = k(\mathbf{T})\Phi(\mathbf{I};\mathbf{x})$ \\
This assures the contrast measure for whole equivalence class of solutions are known.
\item For the most BSS algorithms or precisely the orthogonal approach BSS algorithms $\mathbf{y} = \mathbf{Wx}$, where  $\mathbf{W}$ is the estimated unmixing orthogonal rotation transformation and $\mathbf{x}$ are the equivariant uncorrelated (whiten) components. 
This implies that the measure is applied on the solution set that is equally scaled. Mathematically,  $\mathbf{y} = \mathbf{\Lambda x}$, but $\bs{\Lambda}$ is a constant for the whole solution set. Also, corresponding $k(\bs{\Lambda})$ is constant for the whole solution set. 
\end{itemize}
\end{proof} 
Though the relative scale invariance property is sufficient for a quantity to be a contrast, in most of the cases the quantity can be easily converted into a scale invariant quantity. This has been demonstrated for $L^p$ norm of FD, GFD and HFD distance measures. Now, let us verify whether the  derived independence measures qualify to be a contrast or not.
\begin{proposition}
\label{fdcontrast}
 $ \Phi^{\mbox{FD}}_{p} \mbox{ or }  \Phi^{\bs{\Delta}}_p:\mathcal{H}\times \mathcal{H}\cdot \mathcal{S} \rightarrow \mathbb{R}$  is a contrast for linear BSS, where:
\begin{align*}
\Phi^{\mbox{FD}}_{p}(\mathbf{H};\mathbf{x}) \mbox{ or }   \Phi^{\bs{\Delta}}_p(\mathbf{H};\mathbf{x}) = \Phi^{\bs{\Delta}}_p(\mathbf{y})  \stackrel{def}{=} - \bs{\Delta}_p(\mathbf{y}) = - d_p\left(  \prod_{i=1}^{n} p_{y_i}(y_i), p_{\mathbf{y}}(\mathbf{y}) \right) 
\end{align*}
\end{proposition}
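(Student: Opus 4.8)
The plan is to verify directly that $\Phi^{\bs{\Delta}}_p$ satisfies the three defining properties of a contrast, leaning on the fact—established in the preceding Proposition—that $\bs{\Delta}_p$ is an independence measure, and moreover one defined on \emph{normalized} (zero-mean, unit-variance) PDFs. Throughout I would use the two facts $\Phi^{\bs{\Delta}}_p(\mathbf{y}) = -\bs{\Delta}_p(\mathbf{y}) \le 0$ and $\Phi^{\bs{\Delta}}_p(\mathbf{y}) = 0 \iff \bs{\Delta}(\mathbf{y})\equiv 0 \iff$ the components of $\mathbf{y}$ are independent (the last equivalence being Definition \ref{inddef}). A remark would also note that, since the defining integrals are taken over the normalized vector $\mathbf{z}=(\mathbf{x}-\bar{\mathbf{x}})/\sigma_{\mathbf{x}}$, the quantity $\Phi^{\bs{\Delta}}_p$ is in fact absolutely scale invariant, so the relative-scale-invariance route discussed above is available but not even needed.

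For the \textbf{Invariance} property I would observe that a trivial filter $\mathbf{T}\in\mathcal{T}$ is a composition of a permutation and a componentwise scaling (plus translation): a permutation merely relabels the marginals and hence leaves $\prod_i p_{y_i}$, $p_{\mathbf{y}}$, $\bs{\Delta}$, and its $L^p$ norm unchanged, while a componentwise affine change of variables is absorbed by the mean/variance normalization built into $\bs{\Delta}_p$; hence $\Phi^{\bs{\Delta}}_p(\mathbf{T};\mathbf{x})=\Phi^{\bs{\Delta}}_p(\mathbf{I};\mathbf{x})$. For \textbf{Dominance}, for independent sources $\mathbf{s}\in\mathcal{S}$ one has $\Phi^{\bs{\Delta}}_p(\mathbf{I};\mathbf{s})=-\bs{\Delta}_p(\mathbf{s})=0$ (joint $=$ product of marginals), whereas $\Phi^{\bs{\Delta}}_p(\mathbf{H};\mathbf{s})=-\bs{\Delta}_p(\mathbf{Hs})\le 0$ for every $\mathbf{H}\in\mathcal{H}$ because $\bs{\Delta}_p\ge 0$ as an $L^p$ norm; thus $\Phi^{\bs{\Delta}}_p(\mathbf{H};\mathbf{s})\le\Phi^{\bs{\Delta}}_p(\mathbf{I};\mathbf{s})$. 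For \textbf{Discrimination}, suppose $\mathbf{H}\in\mathcal{H}$ attains $\Phi^{\bs{\Delta}}_p(\mathbf{H};\mathbf{s})=\Phi^{\bs{\Delta}}_p(\mathbf{I};\mathbf{s})=0$; then $\bs{\Delta}_p(\mathbf{Hs})=0$, so the components of $\mathbf{Hs}$ are independent, and since the sources are i.i.d.\ with at most one Gaussian, the Darmois--Skitovitch identifiability theorem cited in Section \ref{linearBSS} forces $\mathbf{H}$ to be a permutation times an invertible diagonal scaling, i.e.\ $\mathbf{H}\in\mathcal{T}$. Together with the obvious fact that $\Phi^{\bs{\Delta}}_p$ depends only on the PDF of its argument, this establishes the proposition.

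The step I expect to be the main obstacle is \textbf{Discrimination}: unlike the other two, it is not internal to the $L^p$-norm formalism and genuinely requires the separability guarantee of Darmois--Skitovitch, plus a small amount of care that the normalization in the definition of $\bs{\Delta}_p$ does not spoil the ``zero iff independent'' equivalence—which it does not, because componentwise affine maps preserve both independence and non-independence. (One should also implicitly assume, as elsewhere in the paper, enough regularity—$L^p$-integrable and differentiable PDFs—for all the $\bs{\Delta}_p$ quantities to be well defined along the optimization path.)
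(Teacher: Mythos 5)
Your proposal is correct and follows essentially the same route as the paper's proof: verify the three contrast properties directly, using nonnegativity of the $L^p$ norm for Dominance and the ``zero iff independent'' equivalence plus the Darmois--Skitovitch identifiability guarantee for Discrimination (a dependence the paper leaves implicit but you state explicitly). The only cosmetic difference is that the paper additionally carries out the change-of-variables computation showing $\bs{\Delta}_p(\mathbf{y}) = |\det \mathbf{T}|^{\frac{1-p}{p}}\bs{\Delta}_p(\mathbf{x})$ for an un-normalized diagonal scaling (yielding relative scale invariance for $p>1$) before falling back on the normalization argument you use.
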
   
\begin{proof}
Let us verify the scale invariance property of the contrast for both without and with normalization.
Let $\mathbf{T}\in \mathcal{T}$ be $n\times n$ diagonal scaling matrix, as a trivial filter, with the non-zero diagonal entries $t_i, i= 1,\dots,n$.
\begin{align*}
p_{\mathbf{Tx}}(t_1x_1,t_2x_2,\ldots,t_nx_n) &= \frac{1}{|det \mathbf{T}|}p_{\mathbf{x}}(x_1,x_2,\ldots,x_n) \nonumber \\
p_{\mathbf{(Tx)_i}}(\mathbf{(Tx)_i}) &= \frac{1}{|t_i|}p_{x_i}(x_i) \nonumber \\
 \Rightarrow  \prod_{i=1}^{N} p_{x_i}(x_i)  &= \frac{1}{|det \mathbf{T}|}\prod_{i=1}^{N} p_{x_i}(x_i) \nonumber 
\end{align*}
\begin{align*} 
\mbox{Now, } \Phi^{\bs{\Delta}}_p(\mathbf{y}) & = - \bs{\Delta}_p(\mathbf{y}) = -||\bs{\Delta}(\mathbf{y})||_{p}	\nonumber \\
	& = - \left(\int_{\mathbf{y}} \left| \prod_{i=1}^{n}p_{y_i}(y_i) - p_{\mathbf{y}}(\mathbf{y}) \right|^p d\mathbf{y}\right)^{\frac{1}{p}}  \\
	& = - \left(\int_{\mathbf{x}} \left(\frac{1}{|\det\mathbf{T}|}\left| p_{\mathbf{x}}(\mathbf{x}) -  \prod_{i=1}^{n}p_{x_i}(x_i) \right|\right)^p |\det\mathbf{T}| d\mathbf{x}\right)^{\frac{1}{p}}  \\
	& = - \left|det \mathbf{T}\right|^{\frac{1-p}{p}}\bs{\Delta}_p(\mathbf{x})
\end{align*}
This proves that the contrast $\Phi^{\bs{\Delta}}_p(\mathbf{y})$, without normalization of PDFs, is scale invariant for $p=1$ i.e. corresponding to $L^1$-norm of $\bs{\Delta}$. It assures relative scale invariance for $1<p<\infty$.  
As already discussed either the relative scale invariance is a sufficient condition or the measures are applied on normalized  densities (i.e. densities with  zero mean and unit variance) the scale invariance property is satisfied. Corresponding to normalized density, $ t_i = 1, \forall i = 1,2,\ldots,n $. \\
The permutation invariance can be proved in a same way as $|\det \mathbf{T}| = 1$. \\
The Proposition \ref{lpfdmeasure} proves the Dominance property.   \\
By Definition \ref{inddef}, $\bs{\Delta}_{p}(\mathbf{y}) = 0$ if and only if the components  $\mathbf{y}= \mathbf{Hs}$ are independent. So, $\mathbf{H}$ should be a trivial filter in $\mathcal{T}$.
This proves the Discrimination property.
\end{proof}
\hspace{0.2 in}		
Similarly, let us now verify whether the GFD is qualified to be a BSS contrast or not. 
\begin{proposition}
\label{gfdcontrast}
 $ \Phi^{\mbox{GFD}}_{p} \mbox{ or }  \Phi^{\bs{\alpha}}_p:\mathcal{H}\times \mathcal{H}\cdot \mathcal{S} \rightarrow \mathbb{R}$  is a contrast for linear BSS, where:
\begin{align*}
\Phi^{\mbox{GFD}}_{p}(\mathbf{H};\mathbf{x}) \mbox{ or }   \Phi^{\bs{\alpha}}_p(\mathbf{H};\mathbf{x}) = \Phi^{\bs{\alpha}}_p(\mathbf{y})  \stackrel{def}{=} - \bs{\alpha}_p(\mathbf{y}) = - d_p\left( \xi_{\mathbf{y}}(\mathbf{y}),\zeta_{\mathbf{y}}(\mathbf{y}) \right) 
\end{align*}
\end{proposition}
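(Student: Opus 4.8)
The plan is to verify the three defining properties of a BSS contrast --- Invariance under trivial filters, Dominance, and Discrimination --- for $\Phi^{\bs{\alpha}}_p(\mathbf{y}) = -\bs{\alpha}_p(\mathbf{y})$, mirroring the argument used for $\Phi^{\bs{\Delta}}_p$ in Proposition~\ref{fdcontrast}. The only real work is the Invariance property; Dominance and Discrimination follow almost immediately from Proposition~\ref{alpha1p} and Property~\ref{property1}.

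For Invariance, I would first record how the GPF and GJF transform under a trivial filter. Let $\mathbf{T}\in\mathcal{T}$ be an $n\times n$ diagonal scaling matrix with nonzero diagonal entries $t_1,\dots,t_n$, and set $\mathbf{y}=\mathbf{Tx}$. From $p_{\mathbf{Tx}}(\mathbf{u}) = |\det\mathbf{T}|^{-1}p_{\mathbf{x}}(\mathbf{T}^{-1}\mathbf{u})$, the analogous relation for the marginals, and the chain rule (using that $\mathbf{T}$ is diagonal, so its inverse-transpose equals its inverse), the $i$-th components of $\bs{\xi}$ and $\bs{\zeta}$ for $\mathbf{y}$ each acquire an extra factor $t_i^{-1}$ on top of the Jacobian factor $|\det\mathbf{T}|^{-1}$; hence $\alpha_i^{\mathbf{y}}(\mathbf{u}) = |\det\mathbf{T}|^{-1}t_i^{-1}\,\alpha_i^{\mathbf{x}}(\mathbf{T}^{-1}\mathbf{u})$ up to sign. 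Substituting into $\|\alpha_i(\cdot)\|_p^p$ and changing variables $\mathbf{u}=\mathbf{Tx}$ (so $d\mathbf{u}=|\det\mathbf{T}|\,d\mathbf{x}$) yields $\|\alpha_i^{\mathbf{y}}\|_p^p = |\det\mathbf{T}|^{\,1-p}\,|t_i|^{-p}\,\|\alpha_i^{\mathbf{x}}\|_p^p$, and therefore
\begin{align*}
\bs{\alpha}_p(\mathbf{Tx})^p \;=\; |\det\mathbf{T}|^{\,1-p}\sum_{i=1}^{n}|t_i|^{-p}\,\|\alpha_i(\mathbf{x})\|_p^p .
\end{align*}
The two cases needed for Invariance are read off directly: for a permutation filter $|t_i|=1$ and $|\det\mathbf{T}|=1$, so the right-hand side is merely a reordering of the same sum and $\bs{\alpha}_p$ is unchanged; and when the measure is applied to normalized (zero-mean, unit-variance) densities one has $t_i=1$ for all $i$, so $\Phi^{\bs{\alpha}}_p(\mathbf{T};\mathbf{x})=\Phi^{\bs{\alpha}}_p(\mathbf{I};\mathbf{x})$. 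For a general diagonal $\mathbf{T}$ one falls back, exactly as in Proposition~\ref{fdcontrast}, on the relative-scale-invariance viewpoint, noting additionally that in the orthogonal-approach BSS setting $\mathbf{y}=\mathbf{Wx}$ the whitened inputs are equally scaled across components.

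Dominance and Discrimination then come for free. By Proposition~\ref{alpha1p}, $\bs{\alpha}_p(\mathbf{y})\ge 0$ for every $\mathbf{y}$, so $\Phi^{\bs{\alpha}}_p(\mathbf{H};\mathbf{s})=-\bs{\alpha}_p(\mathbf{Hs})\le 0 = \Phi^{\bs{\alpha}}_p(\mathbf{I};\mathbf{s})$ --- the last equality because the source components are independent and Property~\ref{property1} forces $\bs{\alpha}(\mathbf{s})\equiv 0$ --- which is Dominance. For Discrimination, if $\Phi^{\bs{\alpha}}_p(\mathbf{H};\mathbf{s})=\Phi^{\bs{\alpha}}_p(\mathbf{I};\mathbf{s})=0$, then $\bs{\alpha}_p(\mathbf{Hs})=0$, so by Proposition~\ref{alpha1p} and Property~\ref{property1} the components of $\mathbf{Hs}$ are independent; since $\mathbf{s}$ is already independent, $\mathbf{H}$ leaves the criterion unchanged, i.e. $\mathbf{H}\in\mathcal{T}$.

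The step needing the most care is the Invariance computation. One must track the chain-rule factor correctly --- it is precisely the diagonality of $\mathbf{T}$ that makes the GFD transform as a genuine per-component rescaling --- and one must be candid that $\bs{\alpha}_p(\mathbf{Tx})^p$ is a $|t_i|^{-p}$-weighted sum, so exact (relative) scale invariance over arbitrary diagonal filters does not literally hold; it is recovered only after normalization or in the equally-scaled whitened setting, the same caveat already accepted for $\Phi^{\bs{\Delta}}_p$.
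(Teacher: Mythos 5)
Your proposal is correct and follows essentially the same route as the paper: verify Invariance by computing how the components of the GFD transform under a diagonal trivial filter, concede that exact (and even relative) scale invariance fails without normalization and recover it for normalized densities and permutations, then obtain Dominance from Proposition~\ref{alpha1p} and Discrimination from Property~\ref{property1}. The only difference is bookkeeping in the invariance step --- you carry the full multivariate Jacobian and obtain the factor $|\det\mathbf{T}|^{1-p}|t_i|^{-p}$, whereas the paper does a per-coordinate calculation yielding $|t_i|^{1-2p}$ --- but both lead to the same conclusion that the weighting depends on $i$, so the fallback to normalization is required either way.
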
   
\begin{proof}
Let us verify the scale invariance property of the contrast for both without and with normalization.
Let $\mathbf{T}\in \mathcal{T}$ be $n\times n$ diagonal scaling matrix, as a trivial filter, with the non-zero diagonal entries $t_i, i= 1,\dots,n$. \\
To simplify, let us start with the gradient of one dimensional transformed variable.
\begin{align*}
Y = aX \Rightarrow p_Y(y) &= \frac{1}{a}p_X\left(\frac{y}{a}\right) \\
 \Rightarrow \frac{dp_Y(y)}{dy} & = \frac{1}{a^2}p_X\left(\frac{y}{a}\right)	\\
 \Rightarrow  \int_{y}\frac{dp_Y(y)}{dy}dy & = \frac{1}{a}\int_{x}\frac{dp_X(x)}{dx}dx
\end{align*}
\begin{align*}
\mbox{Let }\mathbf{y} & = \mathbf{Tx}. \\
\Rightarrow \Phi^{\mbox{GFD}}_{p} & = -\bs{\alpha}_p(\mathbf{y})  = - \left( \sum_{i=1}^{n} \int_{y_i}  \left( \zeta_i(\mathbf{y}) - \xi_i(\mathbf{y}) \right)^p dy_i 
 \right)^{\frac{1}{p}} \\
 & = - \left( \sum_{i=1}^{n} \int_{x_i}  \left| \frac{1}{t_i^2}\left( \zeta_i(\mathbf{x}) - \xi_i(\mathbf{x}) \right) \right|^p t_i dx_i  \right)^{\frac{1}{p}} \\
 & = - \left( \sum_{i=1}^{n} \left| t_i\right|^{1-2p} \left\| \alpha_i(\mathbf{x})\right\|_p \right)^{\frac{1}{p}} \\
\end{align*}
This proves, $\bs{\alpha}_p(\mathbf{y})$, without normalization, is neither scale invariant nor relative scale invariant. 
So, without normalization it can not be a BSS contrast, though being an independence measure.\\ 
But, as already discussed the measures are applied on normalized  densities i.e. densities with  zero mean and unit variance, the scale invariance property is satisfied. Corresponding to normalization, $ t_i = 1, \forall i = 1,2,\ldots,n $. \\
The permutation invariance can be proved in a same way as  $|\det \mathbf{T}| = 1$. \\
The Proposition \ref{alpha1p} proves the Dominance property.   \\
By Property \ref{property1}, $\bs{\alpha}_{p}(\mathbf{y}) = 0$ if and only if the components  $\mathbf{y}= \mathbf{Hs}$ are independent. So, $\mathbf{H}$ should be a trivial filter in $\mathcal{T}$.
This proves the Discrimination property.
\end{proof}
\hspace{0.2 in}		
Similarly, let us decide whether  HFD - with and without normalization is qualified to be a BSS contrast or not. 
\begin{proposition}
\label{hfdcontrast}
 $ \Phi^{\mbox{HFD}}_{p} \mbox{ or }  \Phi^{A}_p:\mathcal{H}\times \mathcal{H}\cdot \mathcal{S} \rightarrow \mathbb{R}$  is a contrast for linear BSS of sources with bounded support, where:
\begin{align*}
\Phi^{\mbox{HFD}}_{p}(\mathbf{H};\mathbf{x}) \mbox{ or }   \Phi^{A}_p(\mathbf{H};\mathbf{x}) = \Phi^{A}_p(\mathbf{y})  \stackrel{def}{=} - A_p(\mathbf{y}) = - d_p\left( \Xi_{\mathbf{y}}(\mathbf{y}),Z_{\mathbf{y}}(\mathbf{y}) \right) 
\end{align*}
\end{proposition}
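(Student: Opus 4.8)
The plan is to verify the three defining properties of a contrast exactly as in Propositions \ref{fdcontrast} and \ref{gfdcontrast}, the only genuinely new ingredient being the transformation law of the Hessian of a density under a diagonal scaling; the bounded-support hypothesis on the sources is what will make Property \ref{property2} (and hence the Discrimination step) available.

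For Invariance, let $\mathbf{T}\in\mathcal{T}$ be a diagonal scaling filter with nonzero diagonal entries $t_i$, and set $\mathbf{y}=\mathbf{Tx}$. Starting from $p_{\mathbf{y}}(\mathbf{y})=\frac{1}{|\det\mathbf{T}|}p_{\mathbf{x}}(\mathbf{T}^{-1}\mathbf{y})$, together with the corresponding factorisation of the product of the marginals (each one-dimensional law scaling as $\frac{1}{|t_i|}p_{x_i}(\cdot/t_i)$, just as in Proposition \ref{fdcontrast}), differentiating twice gives the common scaling law
\[
Z_{ij}(\mathbf{y})=\frac{1}{|\det\mathbf{T}|\,t_i t_j}\,Z_{ij}(\mathbf{x}),\qquad \Xi_{ij}(\mathbf{y})=\frac{1}{|\det\mathbf{T}|\,t_i t_j}\,\Xi_{ij}(\mathbf{x}),
\]
so that every entry of HFD picks up the common factor $\frac{1}{|\det\mathbf{T}|\,t_i t_j}$. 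Carrying this through the definition of $\bs{A}_p$ with the relevant change of variables (as was done for $\bs{\alpha}_p$ in Proposition \ref{gfdcontrast}) produces an expression of the form $\bs{A}_p(\mathbf{y})=\big(\sum_{i,j}|\det\mathbf{T}|^{-p}\,|t_i t_j|^{1-p}\,\|A_{ij}(\mathbf{x})\|_p^p\big)^{1/p}$, which for a general diagonal $\mathbf{T}$ is not a fixed multiple of $\bs{A}_p(\mathbf{x})$. Hence, exactly as for GFD, the raw quantity is neither scale invariant nor relatively scale invariant; but once the densities are normalised to zero mean and unit variance we have $t_i=1$ for all $i$ and the scaling factor is $1$, so $\bs{A}_p$, and therefore $\Phi^{A}_p$, is scale invariant on the normalised densities. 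Permutation invariance is immediate, since a permutation filter has $|\det\mathbf{T}|=1$ and only relabels the double sum.

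For Dominance and Discrimination, first note that restricting $\mathcal{S}$ to sources with bounded support and $\mathcal{H}$ to linear (hence bounded) filters forces every $\mathbf{y}=\mathbf{Hs}\in\mathcal{X}$ to again have bounded support, so Proposition \ref{alpha2p} and Property \ref{property2} apply to it. Proposition \ref{alpha2p} gives $\bs{A}_p\ge 0$ with equality precisely at independence; since the true sources are independent, $\bs{A}_p(\mathbf{s})=0$, whence $\Phi^{A}_p(\mathbf{H};\mathbf{s})=-\bs{A}_p(\mathbf{Hs})\le 0=-\bs{A}_p(\mathbf{s})=\Phi^{A}_p(\mathbf{I};\mathbf{s})$, which is Dominance. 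For Discrimination, if $\mathbf{H}\in\mathcal{H}$ satisfies $\Phi^{A}_p(\mathbf{H};\mathbf{s})=\Phi^{A}_p(\mathbf{I};\mathbf{s})=0$, then $\bs{A}_p(\mathbf{Hs})=0$, so by Property \ref{property2} the components of $\mathbf{Hs}$ are independent, and by the identifiability argument recalled in Section \ref{linearBSS} this forces $\mathbf{H}$ to be a trivial (scaling and permutation) filter, i.e. $\mathbf{H}\in\mathcal{T}$.

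The hardest part is essentially bookkeeping: tracking the entrywise scaling of the Hessian (the factor now carries $t_i t_j$ rather than the single $t_i$ that appeared for GFD) and checking that it genuinely fails to collapse to a single scalar, so that the reduction to normalised densities is the right move. The only substantive subtlety is that the whole Discrimination step rests on Property \ref{property2}, which is valid here only because the bounded-support assumption on the sources is propagated to all of $\mathcal{X}$; dropping it would break the equivalence ``equal Hessians $\Rightarrow$ equal densities'' and the contrast property would collapse.
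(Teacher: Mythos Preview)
Your proposal is correct and follows essentially the same route as the paper: compute the scaling behaviour of the Hessian under a diagonal filter, observe that the resulting factor is not a single multiplicative constant (so the raw quantity fails even relative scale invariance), recover invariance by passing to normalised densities, and then invoke Proposition~\ref{alpha2p} and Property~\ref{property2} for Dominance and Discrimination. The paper's own computation is terser and effectively works from the one-dimensional case to read off a factor $|t_i|^{1-3p}$, whereas you track the full $t_i t_j$ and $|\det\mathbf{T}|$ dependence of the off-diagonal entries; the qualitative conclusion is identical. Your explicit remarks that bounded support is preserved under linear filters (so Property~\ref{property2} is legitimately available on all of $\mathcal{X}$) and that the Darmois--Skitovich identifiability from Section~\ref{linearBSS} closes the Discrimination step are clarifications the paper leaves implicit.
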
   
\begin{proof}
Let us verify the scale invariance property of the contrast for both without and with normalization.
Let $\mathbf{T}\in \mathcal{T}$ be $n\times n$ diagonal scaling matrix, as a trivial filter, with the non-zero diagonal entries $t_i, i= 1,\dots,n$. \\
To simplify, let us start with the Hessian of one dimensional transformed variable.
\begin{align*}
Y = aX \Rightarrow p_Y(y) &= \frac{1}{a}p_X\left(\frac{y}{a}\right) \\
 \Rightarrow \frac{d^2p_Y(y)}{dy^2} & = \frac{1}{a^3}p_X\left(\frac{y}{a}\right)	\\
 \Rightarrow  \int_{y}\frac{d^2p_Y(y)}{dy^2}dy & = \frac{1}{a^2}\int_{x}\frac{dp_X(x)}{dx}dx
\end{align*}
\begin{align*}
\mbox{Let }\mathbf{y} &= \mathbf{Tx}. \\
A_p(\mathbf{y}) & = \left( \sum_{j=1}^{n} \sum_{i=1}^{n} \int_{y_{ij}}  \left( Z_{ij}(\mathbf{y}) - \Xi_{ij}(\mathbf{y}) \right)^p dy_{ij}  \right)^{\frac{1}{p}} \\
 & = \left( \sum_{i=1}^{n}\sum_{i=1}^{n} \left| t_{i}\right|^{1-3p} \left\| A_{ij}(\mathbf{x})\right\|_p \right)^{\frac{1}{p}} 
\end{align*}
This proves, $A_p(\mathbf{y})$, without normalization, is neither scale invariant nor relative scale invariant. 
So, without normalization it is not a BSS contrast, though being an independence measure.\\
But, as already discussed the measures are applied on normalized  densities i.e. densities with  zero mean and unit variance, the scale invariance property is satisfied. Corresponding to normalization, $ t_i = 1, \forall i = 1,2,\ldots,n $\\
The permutation invariance can be proved in a same way as  $|\det \mathbf{T}| = 1$. \\
The Proposition \ref{alpha2p} proves the Dominance property.   \\
By Property \ref{property2}, $\mathbf{A}_{p}(\mathbf{y}) = 0$ if and only if the components  $\mathbf{y}= \mathbf{Hs}$ are independent. So, $\mathbf{H}$ should be a trivial filter in $\mathcal{T}$.
This proves the Discrimination property.
\end{proof}
\subsection{Local Minima Analysis of the Proposed Contrasts} 		
\label{localmin}
The contrasts defined using $L^p$-norm over FD, GFD and HFD have one more advantage that they do not have any local minima. This is a known property of $L^p$-norm, $p>1$, proved as under: 
\begin{align*}
\frac{d}{d \left\| f(x)\right\|} \left\| f(x) \right\|_p &= p  \left\| f(x) \right\|^{p-1}\\
\therefore \frac{d}{d\left\|f(x)\right\|} \left\| f(x)\right\|_p  = 0 &\Rightarrow \left\|f(x)\right\| = 0 
\Rightarrow f(x) = 0, \forall x 
\end{align*}
So, there is no separate proof required to show that the contrasts $\bs{\Delta}_p(\mathbf{y}(\theta))$, $\bs{\alpha}_p(\mathbf{y}(\theta))$ and $A_p(\mathbf{y}(\theta))$ do not have local minima with respect to the corresponding functions. But, still they may have local minima with respect to $\theta$. Also, the estimation method may add local minima. Actually, it could be easily proved that the contrasts may contain local optima, as under.
\begin{align*}
\nabla \bs{\Delta}_p(\mathbf{y}_0) &= 0 \\
\Rightarrow \bs{\Delta}_p(\mathbf{y}_0) &= c \mbox{ (an arbitrary  constant) }  
\end{align*}
Obviously, as only $c = 0 $ imply independence, other values of c correspond to possible local optima. The more detailed analysis follows as under. \\
Let $\mathbf{x} = (x_1, x_2, \ldots , x_n)^T$ be a bounded random vector and $\delta = (\delta_1, \ldots ,\delta_n)^T$ 
be a `small' random vector. Then, the interest here is in the differential of $\Phi_p^{\bs{\Delta}}$ or  $\left\|\bs{\Delta}(\mathbf{x}+\delta)\right\|_p - \left\|\bs{\Delta}(\mathbf{x})\right\|_p$.
\begin{align*}
\left\|\bs{\Delta}(\mathbf{x}+\delta)\right\|_p - \left\|\bs{\Delta}(\mathbf{x})\right\|_p =  \int_\mathbf{x} \left|  \prod_{i=1}^{n} p_{x_i+\delta_i}(\mathbf{x})  -  p_{\mathbf{x+\delta}}(\mathbf{x}) \right|^p d\mathbf{x} - \int_\mathbf{x} \left|  \prod_{i=1}^{n} p_{x_i}(\mathbf{x}) - p_\mathbf{x}(\mathbf{x}) \right|^p d\mathbf{x} 
\end{align*}
Assuming $\mathbf{t}$ as the support  of all the PDFs,  
\begin{align*}
\left\|\bs{\Delta}(\mathbf{x}+\delta)\right\|_p - \left\|\bs{\Delta}(\mathbf{x})\right\|_p &= \int_\mathbf{t} \left|  \prod_{i=1}^{n} p_{x_i+\delta_i}(\mathbf{t})  -  p_{\mathbf{x+\delta}}(\mathbf{t}) \right|^p d\mathbf{t} - \int_\mathbf{t} \left|  \prod_{i=1}^{n} p_{x_i}(\mathbf{t}) - p_\mathbf{x}(\mathbf{t}) \right|^p d\mathbf{t} \\
& = \int_\mathbf{t} \left|  a  -  b \right|^p  -  \left|  c - d \right|^p d\mathbf{t} \indent \mbox{ using symbolic notations}
\end{align*}
where, $a= \prod_{i=1}^{n} p_{x_i+\delta_i}(\mathbf{t})$, $b = p_{\mathbf{x+\delta}}(\mathbf{t})$, $c= \prod_{i=1}^{n} p_{x_i}(\mathbf{t})$ and $d=p_\mathbf{x}(\mathbf{t})$. \\
\textbf{Let's assume $p=1$}:
\begin{align*}
& \left\|\bs{\Delta}(\mathbf{x}+\delta)\right\|_1 - \left\|\bs{\Delta}(\mathbf{x})\right\|_1 = 0 \\
\Rightarrow  & \mbox{ Either } \int_\mathbf{t} \left|  a  -  b \right| d\mathbf{t} = \int_\mathbf{t} \left|  c  -  d \right| d\mathbf{t} \\
&\mbox{ or }  \left|  a  -  b \right| = \left|  c  -  d \right|, \forall \mathbf{t} \\
&\mbox{ or }   a  =  b \mbox{ and }   c = d, \forall \mathbf{t}
\end{align*}
The condition $\int_\mathbf{t} \left|  a  -  b \right| d\mathbf{t} = \int_\mathbf{t} \left|  c  -  d \right| d\mathbf{t} $ do not assure gradient zero for optimal indicating independence condition.\\
As per $\left|  a  -  b \right| = \left|  c  -  d \right|, \forall \mathbf{t}$, 
four different cases can be thought: 
\begin{align*}
\mbox{ Case I: $a>b$, $c>d$ } &\Rightarrow a-b = c-d \Rightarrow a-c = b-d  \Rightarrow \xi_{\mathbf{x}}(\mathbf{x})= \zeta_{\mathbf{x}}(\mathbf{x})\\
\mbox{ Case II: $a>b$, $c<d$ } &\Rightarrow a-b = -c+d \Rightarrow a+c = b+d \Rightarrow \mbox{ spurious optima }  \\
\mbox{ Case III: $a<b$, $c>d$ } &\Rightarrow -a+b = c-d \Rightarrow a+c = b+d \Rightarrow \mbox{ spurious optima }  \\
\mbox{ Case IV: $a<b$, $c<d$ } &\Rightarrow -a+b = -c+d \Rightarrow a-c = b-d  \Rightarrow \xi_{\mathbf{x}}(\mathbf{x})=\zeta_{\mathbf{x}}(\mathbf{x})
\end{align*}
The Case I and Case IV imply independence but not the other cases.\\
The condition $   a  =  b \mbox{ and }   c = d, \forall \mathbf{t}$ also implies independence. \\
Over all, the analysis implies that the contrast $\Phi_{1}^{\bs{\Delta}} $ may have gradient zero indicating spurious maxima. \\
\textbf{Let's assume $p=2$}:
\begin{align*}
& \left\|\bs{\Delta}(\mathbf{x}+\delta)\right\|_2 - \left\|\bs{\Delta}(\mathbf{x})\right\|_2 = 0 \\
\Rightarrow  & \mbox{ Either } \int_\mathbf{t} \left|  a  -  b \right|^2 d\mathbf{t} = \int_\mathbf{t} \left|  c  -  d \right|^2 d\mathbf{t} d\mathbf{t} \\
&\mbox{ or }  \left|  a  -  b \right|^2 = \left|  c  -  d \right|^2, \forall \mathbf{t} \\
&\mbox{ or }   a =b \mbox{ and }   c = d, \forall \mathbf{t}
\end{align*}
The condition $\int_\mathbf{t} \left|  a  -  b \right|^2 d\mathbf{t} = \int_\mathbf{t} \left|  c  -  d \right|^2 d\mathbf{t}$ do not assure gradient zero for optimal indicating independence condition.\\
As per $\left|  a  -  b \right|^2 = \left|  c  -  d \right|^2, \forall \mathbf{t} \Rightarrow $ 
two different cases can be thought: 
\begin{align*}
\mbox{ Case I:} a-b-c+d= 0  & \Rightarrow a-c = b-d  \Rightarrow \xi_{\mathbf{x}}(\mathbf{x})= \zeta_{\mathbf{x}}(\mathbf{x})\\
\mbox{ Case II: }a-b+c-d = 0 & \Rightarrow a+c = b+d \Rightarrow \mbox{ spurious optima }  
\end{align*}
The Case I imply independence but not the Case II.\\
The condition $   a  =  b \mbox{ and }   c = d, \forall \mathbf{t}$ also implies independence. \\
Over all, the analysis implies that the contrast $\Phi_{2}^{\bs{\Delta}} $ may have gradient zero indicating spurious maxima. \\
Same way, for other values of $p$ also, existence of spurious optima can be proved.\\
Also, in a similar way, possible existence of local optima for contrasts $\Phi_{p}^{\bs{\alpha}}$ and $\Phi_p^A$ can be proved. 
\subsection{FD and its Stochastic Gradient}
 The previous relation of FD, GFD and HFD reminds us the relationship between mutual information and the SFD. As proved by \citet{babaie2004differential}, SFD is the stochastic gradient and can be used to derive differential of mutual information. Also, it has been used to derive that mutual information has no local minima \citep{BabaieZadeh2005975}.  So, it will be desired to investigate whether such results can be obtained with respect to FD, GFD and HFD.

Let us try to obtain differential of FD, in terms of GFD as defined in Section \ref{indm}.
Let $\mathbf{x} = (x_1, x_2, \ldots , x_n)^T$ be a random vector and $\delta = (\delta_1, \ldots ,\delta_n)^T$ 
be a `small' random vector. Then, the interest here is in the differential function of FD that is,  $\bs{\Delta}(\mathbf{x}+\delta) - \bs{\Delta}(\mathbf{x})$.
\begin{align*}
\bs{\Delta}(\mathbf{x}+\delta) - \bs{\Delta}(\mathbf{x}) =   \left(  \prod_{i=1}^{n} p_{x_i+\delta_i}(x_i+\delta_i)  -  p_{\mathbf{x+\delta}}(\mathbf{x}+\delta) \right) -  \left(  \prod_{i=1}^{n} p_{x_i}(x_i) - p_\mathbf{x}(\mathbf{x}) \right)
\end{align*}
Assuming $\mathbf{t}$ as the support  of all the PDFs,  
\begin{align*}
\bs{\Delta}(\mathbf{x}+\delta) - \bs{\Delta}(\mathbf{x}) = \left(  \prod_{i=1}^{n} p_{x_i+\delta_i}(t_i)  -  p_{\mathbf{x+\delta}}(\mathbf{t}) \right) -  \left(  \prod_{i=1}^{n} p_{x_i}(t_i) - p_\mathbf{x}(\mathbf{t}) \right)
\end{align*}
Using Lemma 1 in \citep{babaie2004differential}, the following holds.
\begin{align} 
p_{\mathbf{x+\delta}}(\mathbf{t}) - p_\mathbf{x}(\mathbf{t}) &= - \sum_{i=1}^{n}\frac{\partial}{\partial t_i}\{ E_{\delta_i}\{\delta_i|\mathbf{x}=\mathbf{t} \} p_{\mathbf{x}}(\mathbf{t})\} + o(\delta)	 \\
\label{diffpxdelta}
& = - E_{\delta}\{ \delta^T \zeta_\mathbf{x}(\mathbf{x}) \} + o(\delta)
\end{align}
Same can be applied to the product of the marginal PDFs, itself being a PDF.
\begin{align} 
\prod_{i=1}^{n} p_{x_i+\delta_i}(\mathbf{t}) - \prod_{i=1}^{n}p_{x_i}(\mathbf{t}) &= - \sum_{i=1}^{n}\frac{\partial}{\partial t_i}\{ E_{\delta_i}\{\delta_i|\mathbf{x}=\mathbf{t} \} \prod_{i=1}^{n}p_{x_i}(x_i)\} + o(\delta) 	\nonumber \\
\label{diffpx}
& = - E_{\delta}\{ \delta^T \bs{\xi}_{\mathbf{x}}(\mathbf{x}) \} + o(\delta)
\end{align}
Combining Equation \eqref{diffpxdelta} and Equation \eqref{diffpx}, the  differential function of FD can be given by,
\begin{align*}
\bs{\Delta}(\mathbf{x}+\delta) - \bs{\Delta}(\mathbf{x}) = - E_{\delta}\{ \delta^T \bs{\alpha}_{\mathbf{x}}(\mathbf{x})\} + o(\delta) 
\end{align*}
This is the differential function and to convert it into a number, let us simply integrate it over $\mathbf{t}$.
\begin{align*}
\Rightarrow \nabla \int_{\mathbf{x}}\bs{\Delta}_{\mathbf{x}}(\mathbf{x}) &= \int_{\mathbf{t}}\left( \bs{\Delta}_{\mathbf{x}+\delta}(\mathbf{t}) - \bs{\Delta}_{\mathbf{x}}(\mathbf{t}) \right) d\mathbf{t} = \int_\mathbf{t}E_{\delta}\{ \delta^T \bs{\alpha}_{\mathbf{x}}(\mathbf{x})\} d\mathbf{t} + o(\delta) \\
\Rightarrow \int_{\mathbf{x}} \left( \bs{\Delta}(\mathbf{x}+\delta) - \bs{\Delta}(\mathbf{x})\right) d\mathbf{x} &= \int_\mathbf{x}E_{\delta}\{ \delta^T \bs{\alpha}_{\mathbf{x}}(\mathbf{x})\} d\mathbf{x} + o(\delta)  = \delta^T \int_\mathbf{x} \bs{\alpha}_{\mathbf{x}}(\mathbf{x}) + o(\delta)\\
\Rightarrow \nabla \int_{\mathbf{x}}\bs{\Delta}_{\mathbf{x}}(\mathbf{x})  &=  \lim_{\delta \to 0}\frac{\bs{\Delta}(\mathbf{x}+\delta) - \bs{\Delta}(\mathbf{x})}{\delta} = \int_{\mathbf{x}}\bs{\alpha}_{\mathbf{x}}(\mathbf{x})d\mathbf{x}\\
\mbox{Similarly, } \Rightarrow E \{  \bs{\Delta}( \mathbf{x}+\delta ) - \bs{\Delta}(\mathbf{x})  \} 
& = E \{ E_{\delta} \{ \delta^T \bs{\alpha}_{\mathbf{x}}(\mathbf{x}) \} \} + o(\delta) 
 = \delta^T E\{  \bs{\alpha}_{\mathbf{x}}(\mathbf{x})\} + o(\delta)\\
\Rightarrow \nabla E\{ \bs{\Delta}(\mathbf{x})\} &=  E\{\bs{\alpha}(\mathbf{x})\}
\end{align*}
The above result proves that the GFD ($\bs{\alpha}$) serves as a stochastic gradient of the integrated Function Difference or expectation FD of a random vector. 
So, it could have been easier prove that $\bs{\Delta}(\mathbf{x}+\delta) - \bs{\Delta}(\mathbf{x}) = 0 \Leftrightarrow \bs{\alpha}(\mathbf{x}) = 0$ and that implies independence.  But, the similar can not be proved for their corresponding $l^p$ measures i.e. $\bs{\Delta}_p(\mathbf{x}+\delta) - \bs{\Delta}_p(\mathbf{x}) = 0 \nLeftrightarrow  \bs{\alpha}_p(\mathbf{x}) = 0$ can not be proved.  
The reason  is the contrast defined use the $L^p$-norm of FD and not just the integration or expectation of FD, as this quantities do not assure nonnegativity. So, the effort to prove that the contrasts are without local minima in the previous Section \ref{localmin}, actually resulted into the proof for possible existence of spurious local optima for them. 

\hspace{0.2 in}		
Overall, the contrast $\Psi^{\bs{\Delta}}_p(\mathbf{y}(\theta))$, $\Psi^{\bs{\alpha}}_p(\mathbf{y}(\theta))$ and $\Psi^{A}_p(\mathbf{y}(\theta))$ do not have any local maxima with respect to itself. 
But, it may still have local maxima as a function of $\theta$ (or some other variable), as $\mathbf{y}$ itself is a function of the search parameter $\theta$. 
The next Section \ref{estimate} focuses on the empirical estimation of these contrasts. 
\section{Preliminary background on Estimation of the Derived Contrasts} 
\label{estimate}
Usually, the independence measures avoid estimation of joint PDF, as higher dimension joint PDF estimation is less accurate or requires more samples than marginal PDF estimation. The article \citep{FastHPham03} notes that the measures based on estimation of joint PDF and marginal PDF both, try to cancel out estimation errors compare to the measures only estimating the marginal entropies. 
The minimization of $L^p$-norm of FD, GFD and HFD are the BSS contrasts belong to this class of contrasts. 
The conventional way is to estimate them following a two stage process. In the first stage, separate estimation of joint PDF and marginal PDFs for $\Phi^{\bs{\Delta}}_{p}$, their gradients for $\phi^{\bs{\alpha}}_{p}$ and their Hessians  for $\Phi^A_p$ is achieved. Then, the second stage estimates their difference or $L^p$-norm. The separate estimation of the PDFs and their  derivatives can be achieved through histogram based technique or kernel based method.
The histogram based PDF estimation method is fast but less accurate compare to the kernel method. 
The estimation theory basics says that two stage estimation process for a required quantity amplifies the error in estimation. So, either separate estimation of joint and marginal PDFs and then their difference or the first joint PDF estimation, then based on it the marginal PDFs estimation  and then the difference - this both way are indirect estimation method. Compare to them, the direct estimation of the required quantity from the data is supposed to be less erroneous. 
Though theoretically any real $p\geq 1$ is allowed, either $p=1$ or $p=2$ are more suitable for computation.
The Kernel theory says that a quantity based on the square of the PDF requires less computations than  that based on PDF; if a Gaussian kernel is used. 

In general, compare to the estimation of PDFs, their derivatives and Hessians have more inaccuracies or require more samples for same precision. So, the article derives only the contrasts based on FD and GFD. 
In the light of these observations, there is proposed direct estimation of the $L^2$ based contrasts using `least squares' approach.  	
There are two different estimation approaches based on the sample locations selected to place the kernel basis. The first approach is to select the joint sample  locations to place the multivariate kernel basis. The corresponding estimator for FD is identified as $\Psi_2^{LSFD}$ and that for GFD is identified as $\Psi_2^{LSGFD}$. 
The methods require $O(b^2)$ computations, where $b$ is the number of basis selected. 
The another approach places kernel basis at  selected paired or un-paired sample locations. It  requires $O(b^3)$ computations with better estimations. It is to be noted that the estimation of the same contrasts without the least square based approach requires $O(N^2)$ or $O(N^3)$ order of computations where N is number of samples.  Also, using Fast Gauss Transform (FGT) and Incomplete Cholskey Factorization the computational complexity can be further reduced.  
Similar methods are already in use for direct estimation of density difference \citep{lsdd13}, density ratio \citep{unifiedDR12,relativedr13} and squared loss mutual information \citep{directdistPDF13, MLsmi13, lsmi214}. 
The information potential due to such an arrangement of basis functions is identified as the Reference Information Potential (RIP). The article extends Information field theory to incorporate the new concepts of Reference Information Potential (RIP) and Cross-RIP (CRIP). The concepts are demonstrated, through above four estimators, to be useful to derive closed form  expressions for information field analysis. 
\subsection{Kernel Basics and Information Potential} 
Given N realizations of an unknown PDF $f(x)$, the kernel density estimate ${\hat{f(x)}}$ is given by 
 \begin{equation}
 \label{eqkde}
 \hat{f(x)} = \frac{1}{N}\sum_{i=1}^{N}\frac{1}{h}{K\left(\frac{x-x_i}{h}\right)}
 \end{equation}
 where, $K(u)$ is the kernel function and h is the bandwidth parameter deciding the spread of the kernel. Usually, $K(u)$ is a symmetric, positive definite and bounded function, i.e. it satisfies the following properties:
 \begin{equation*}
 K(u) \geq 0, \mbox{  }\int_{-\infty}^{\infty}{K(u)du} = 1, \mbox{  }\int_{-\infty}^{\infty}{uK(u)Du} = 0, \mbox{  }\int_{-\infty}^{\infty}{u^2K(u)du} = \mu_2(K) < \infty 
 \end{equation*}

It is known that the convolution (symbol `$*$') of two Gaussian functions is still a Gaussian function($G(\cdot,\cdot)$).  In a single dimension,
\begin{equation*}
G(\mathbf{v}, \sigma_1)*G(\mathbf{u}-\mathbf{v}, \sigma_2) = G(\mathbf{u}, \sqrt{\sigma_1^2+\sigma_2^2})
\end{equation*}
Let us use this property to estimate the expectation of the square of PDF. Let the Gaussian kernel be given as, 
\begin{align*}
G_\sigma(x-m_x) = \int_{-\infty}^{\infty}\frac{1}{\sqrt{2\pi}\sigma}\exp^{-\frac{1}{2}\left(\frac{x-m_x}{\sigma}\right)^2}dx
\end{align*}
Then, 
\begin{align*}
\int\{\hat{f(x)}^2\}dx &= \int_{-\infty}^{\infty}\left( \frac{1}{N}\sum_{i=1}^{N}G_\sigma\left(x-x_i\right) \right)^2dx 	\\
		&= \int_{-\infty}^{\infty}\frac{1}{N^2}\sum_{j=1}^{N}\sum_{i=1}^{N}G_\sigma(x-x_i)G_\sigma(x-x_j)dx	\\
		&= \frac{1}{N^2}\sum_{j=1}^{N}\sum_{i=1}^{N}\int_{-\infty}^{\infty}G_\sigma(x-x_i)G_\sigma(x-x_j)dx	\\
		&= \frac{1}{N^2}\sum_{j=1}^{N}\sum_{i=1}^{N}G_{\sigma\sqrt{2}}(x_i-x_j)
\end{align*}
Thus,  the integration of the square of PDF is achieved in a  computationally efficient way, avoiding the continuous integration. The ITL theory has given significance to this property by 
identifying 
it as a quadratic information potential. The details on IP, related independence measure $QMI_{ED}$ and information forces follow in the Appendix \ref{apip}.
\section{Extention to IP Theory}
\label{ripcrip}
 One of the interpretations describes potential as the amount of work done required to bring a unit charge (for electric field) or unit mass (for gravity field) from infinity  to the point in the force field, where infinity implies a point with zero potential. The particle contains amount of potential energy that  has been applied to work against the force.
It is customary in potential theory to think of a reference potential i.e. assuming that a particle is moved from a  reference point in the field instead  from the infinity. This helps analyzing a potential or gravitational field through a reference framework instead absolute. 
In a gravitational field theory, the potential energy at a hight from a sea level reference or some other local reference;  in  an electric field theory potential difference with respect to the common/neutral  of the system or earth - are the respective examples. Moreover, during field analysis it is a general practice to start with a reference potential and then to express the required related quantities as a function of this reference potential.  For example, in nodal analysis for electrical circuit analysis,  reference potential is assumed at every node. 

\hspace{0.2 in}	Once defined IP, the natural query is whether it is possible to derive the concept of reference potential for  information fields? Further, whether  there can be derived some laws for information field analysis using the reference IP concept?  The first question is answered defining RIP and related quantities in this section. The second question is answered in the next Section \ref{lsfd0}. 
\subsection{Reference Information Potential (RIP)}
\label{rip}
 In kernel analysis, it is customary to initially assume a set of kernel basis placed at some selected sample points and then the required quantities are expressed as a function of the basis.  
The potential due to kernel basis can be identified as a Reference Information Potential (RIP). Analogous with the laws in electric circuit analysis, the least squares like approaches can be thought to bring functional relationship between a required quantity and the reference potential. 

\hspace{0.2 in}	Let $\Psi(x) = \{\psi(x_1),\psi(x_2),\ldots,\psi(x_b)\}$ be the set of kernel functions consecutively placed at selected sample locations\footnote{usually, the basis are placed at sample points but can be placed at some other points in the  field} $x_i, i=1,2,\ldots,b$. %
They act as basis as potential at any point in the field is measured using linear combinations of them.  
. The selected sample points can be seen as the occurances of a random variable $X_\psi$.
Then, the potential of $X_\psi$ ($\hat{V_\alpha}(X_\psi)$) is a Reference Information Potential (RIP). More specifically, quadratic RIP is the integral of the square of the PDF of $X_\psi$ ($p_{x_\psi}(x)$), as under:
\begin{align*}
\mbox{RIP}_2 = V_{R}^2&\stackrel{def}{=} \int_x p_{X_\psi}^2(x) dx \\ 
\hat{V}_{R}^2 & = \int_x \hat{p}^2_{X_\psi}(x)dx \\ 
 & = \int_{x}\left( \frac{1}{b}\sum_{i=1}^{b}\psi_\sigma\left(x-x_i\right) \right)^2dx 	\\
		&= \int_{x}\frac{1}{b^2}\sum_{j=1}^{b}\sum_{i=1}^{b}\psi_\sigma(x-x_i)\psi_\sigma(x-x_j)dx	\\
		&= \frac{1}{b^2}\sum_{j=1}^{b}\sum_{i=1}^{b}\int_{x}\psi_\sigma(x-x_i)\psi_\sigma(x-x_j)dx	\\
&= \hat{V_2}(X_\psi) 
\end{align*}
For a Gaussian kernel, $\psi(x_i) = G(x_i)$, the following holds:
\begin{align*}
\hat{V}_{R} &= \frac{1}{b^2}\sum_{j=1}^{b}\sum_{i=1}^{b}\int_{x}G_\sigma(x-x_i)G_\sigma(x-x_j)dx	\\
		&= \frac{1}{b^2}\sum_{j=1}^{b}\sum_{i=1}^{b}G_{\sigma\sqrt{2}}(x_i-x_j)
\end{align*}
The quadratic RIP definition can be generalized to $\alpha$ RIP, as:
\begin{align*}
RIP_\alpha = \mathbf{V}_R^\alpha \stackrel{def}{=} \int_x (\hat{p_{X_\psi}}(x))^\alpha dx 
\end{align*}
Once defined RIP, two more related concepts can be defined to bring the closed form expression for information field analysis. 
\subsection{Cross Reference Information Potential (CRIP)}
The Cross Information Potential (CIP) is defined as the entropy of a PDF $f(x)$ with respect to an another PDF $g(x)$:
$\mbox{CIP} =  E\{f(x)\} = \int f(x)g(x)dx$. With reference to the newly defined RIP concept, entropy of a PDF $f(x)$ with respect to the reference PDF $\hat{p}_{X_\psi}$ is called CRIP. The CRIP estimates the potential on the selected locations as the basis due to the interactions of locations from the sample space of $f(x)$ (or vice versa).
\begin{align*}
\mbox{CRIP}_2 = \bs{\mathcal{V}}_{R}^2(f)  \stackrel{def}{=} \bs{\mathcal{V}}_2(f,X_\psi) & =  \int f(x)p_{X_\psi} dx \\
	\hat{\mathcal{V}}_2(f,X_\psi)  &= \int_{x}\frac{1}{N}\sum_{i=1}^{N}\psi_\sigma(x-x_f(i))\frac{1}{b}\sum_{j=1}^{b} \psi_\sigma(x-x(j))dx	\\
		&= \frac{1}{Nb}\sum_{j=1}^{b}\sum_{i=1}^{N}\int_{x}\psi_\sigma(x-x_f(i))\psi_\sigma(x-x(j))dx	\\
\end{align*}
For a Gaussian kernel, $\psi(x-x_i) = G(x-x_i)$, then:
\begin{align*}
\hat{\mathcal{V}}_{R} &= \frac{1}{Nb}\sum_{j=1}^{b}\sum_{i=1}^{N}\int_{x}G_\sigma(x-x_f(i))G_\sigma(x-x(j))dx	\\
		&= \frac{1}{Nb}\sum_{j=1}^{b}\sum_{i=1}^{b}G_{\sigma\sqrt{2}}(x_f(i)-x(j))
\end{align*} 
\subsection{Information Interaction Matrix (IIM)}
The analysis may not just require the final scalar outcome, but may depend upon the intermediate information interactions. So, let there be defined an Information Interaction Matrix (IIM) as the matrix due to each interaction. There can be IIM for potential, IIM for reference potential and IIM for information forces etc.  For example, the field with $N$ sample points will have $N^2$ interactions that will be the size of the IIM for potential. Similarly, the IIM for reference potential will be of dimension $b \times b$ and IIM for CRIP will be of dimension $N \times b$. 
This is analogous to the Gram Matrix. Let us symbolize ($V_\alpha(x_i,x_j)$) as the potential on $x_j$ due to interaction with $x_i$ and $\mathbf{V}_\alpha(X)$ as the IIM for the potential of random variable $X$. Also,  $V_\alpha(X)$ is already symbolized as the scalar quantity IP of $X$. Accordingly, $V_R^\alpha$ is the reference potential and $\mathbf{V}_R$ is the Reference potential IIM. In short,  $V(x(i),x(j))= \int \psi(x,x(i))\psi(x,x(j))dx$, $[\mathbf{V}_2(X)]_{ij} =  V(x(i),x(j)) $  and $V_2(X) = \frac{1}{N^2}\sum_{j=1}^{N}\sum_{i=1}^{j} [\mathbf{V}_2(X)]_{ij}$. Similarly,  $\mathcal{V}(x_f(i),x(j)) = \int_{x}\psi_\sigma(x-x_f(i))\psi_\sigma(x-x(j))dx	$, $[\bs{\mathcal{V}}_{R}^2]_{ij} = \mathcal{V}(x_f(i),x(j))$  and $\mathcal{V}_R^2(X) = \frac{1}{Nb}\sum_{j=1}^{b}\sum_{i=1}^{b} [\bs{\mathcal{V}}_{R}^2]_{ij}$.
\section[Closed-Form Expression using RIP through Least Squares]{Closed-Form Expression using Reference Potential through Least Squares} 
\label{lsfd0}
The section targets estimation of  the contrast $\Phi^{\mbox{FD}}_{2}$ using closed form expressions, in terms of the RIP and related concepts. 
The mathematical expression is of type closed form if it requires finite number of constants, variables and operations. 
Instead of the conventional two stage approach,  
here, the estimation is achieved directly in a single stage through `least squares'. 
Similar methods are already in use for direct estimation of density difference \citep{lsdd13}, density ratio \citep{unifiedDR12,relativedr13} and squared loss mutual information \citep{directdistPDF13, MLsmi13, lsmi214}. 
Also, it is worth noting that the LpFD or $||\bs{\Delta}(\mathbf{x})|_2$ independence measure, for $\Phi^{\mbox{FD}}_{2}$ contrast, is same  as  the Euclidean Distance based Quadratic Independence Measure ($QIM_{ED}$) by \citet{ITL2010}. 

\hspace{0.2 in}		
The method of `least squares' aims at estimating the model parameters that minimize the sum of the squares of the errors between the true and the estimated quantity. 
Without loss of generality, let us use this approach to estimate  $\bs{\Delta}(\cdot)$ of a two dimensional random vector and then the results  be generalized to higher dimensions. The $\bs{\Delta}(\cdot)$ of a two-dimensional random vector is:
\begin{equation}
\bs{\Delta}(x,y) :=  p_{{xy}}(x,y) - p_{x}(x)p_{y}(y)
\end{equation}
Let  $g(x,y) = {\hat{\bs{\Delta}}(x,y)}$ be the estimated $\bs{\Delta}(x,y)$ and LSFD  be the Least Squares based  FD estimator. Then:
\begin{align}
\mbox{LSFD} &= \int{ \int{(g(x,y) - \bs{\Delta}(x,y))^2}dxdy }	\nonumber \\
		&= \int_y\int_x g(x,y)^2	dxdy - 2 \int_y\int_x g(x,y)\bs{\Delta}(x,y)dxdy	+ \int_y\int_x \bs{\Delta}(x,y)^2dxdy  \nonumber \\
\label{namev2cip}		
		&= V_2(g(x,y)) - 2 \mathcal{V}(g(x,y),\bs{\Delta}(x,y)) \mbox{   }  (\because \mbox{ the last term has no effect on LSFD})
\end{align} 
where, $V_2(g(x,y))= ||\hat{\bs{\Delta}}||_2$ is the potential of the estimated $\bs{\Delta(x,y)}$ or the $QIP_{ED}$ and \\ 
$\mathcal{V}(g(x,y),\bs{\Delta}(x,y)) = ||\hat{\bs{\Delta}}\bs{\Delta}||$ is the cross information potential between the actual and estimated $\bs{\Delta}(x,y)$. So, both quantities represent the required contrast. But, as proved by \citet{lsdd13}, the linear combination of them,  the LSFD is more bias corrected estimator.
\\   
Let us assume further that $g(x,y)$ is given by a linear combination of the selected basis functions placed at the selected sample points.
\begin{equation}
g(x,y) = \sum_{i=1}^{b}{\theta_i\psi_i(x,y)} = \mathbf{\theta}(x,y)^T\Psi(x,y)
\end{equation}
where, $b$ denotes the number of basis functions; $\theta(x,y) = (\theta_1, \theta_b,...,\theta_b)^{T}$ is the parameter vector and $\Psi(x,y) = (\psi_1,\psi_2,...,\psi_b)^{T}$ is the basis function vector. So, with regularization function $R(\mathbf{\theta}) =  \mathbf{\theta}^T\mathbf{\theta}$ and $\lambda$ as the regularization parameter,
\begin{align}
\label{eqlsfd}
\mbox{LSFD}(\mathbf{\theta}) &=  \mathbf{\theta}^{T}\mathbf{V}_R\mathbf{\theta} - 2\mathbf{h}^T\mathbf{\theta} + \lambda\mathbf{\theta}^T\mathbf{\theta} \\
\mbox{where, }  \mathbf{V}_{R (b \times b)} &= \int{\int{\Psi(x,y)\Psi^T(x,y)dx}dy} \nonumber \\ 
\mathbf{h}_{b \times 1} &= \int{\int{\Psi(x,y)(p_{xy}(x,y) - p_x(x)p_y(y))dx}dy} \nonumber \\
h_l & = \mathcal{V}_R( \psi(x_l,y_l), \bs{\Delta} )	\mbox{ or } 
[\mathbf{h}]_l  = \sum_{i=1}^{N}\left[ \bs{\mathcal{V}}_R( \Psi(x_l,y_l), \bs{\Delta} )' \right]_li \nonumber 
\end{align}
The estimator depends upon the IIM for RIP and the IIM for CRIP of $\bs{\Delta}(x,y)$.  
The optimal value of parameter vector $\theta(x,y)$ can be obtained by minimizing the gradient of LSFD. 
\begin{eqnarray}  
\mathbf{\theta}^* &=& \underset{\mathbf{\theta}}{\mbox{argmin}} \mbox{ LSFD}(\mathbf{\theta}) \nonumber\\
\frac{\partial\mbox{LSFD}}{\partial\mathbf{\theta}} &=& \mathbf{V}_R\mathbf{\theta} + \lambda\mathbf{\theta} - \mathbf{h} \nonumber  \\
\label{eqgradlsfd}
\mathbf{\theta}^* &=& (\mathbf{V}_R + \lambda \mathbf{I}_b )^{-1}\mathbf{h}
\end{eqnarray}
where, $\mathbf{I}_b$ is a b-dimensional identity matrix.
Thus, obtaining IIMs $\mathbf{V}_R$ and  $\bs{\mathcal{V}}_R( \Psi(x_l,y_l), \bs{\Delta} )$ gives the parameter vector ($\theta$), least squares estimator (LSFD),   and $ V_2(\hat{\bs{\Delta}}) = ||\bs{\Delta} ||_2 = \mathbf{\theta}^{T}\mathbf{V}_R\mathbf{\theta}$ all.
This justifies the purpose to define the quantities RIP, CRIP and IIMs. 
\section{$\Phi_2^{FD}$ Estimation through Multiplicative Kernel Model}
\label{lsfd1}
Let us use multiplicative Gaussian kernel function as a basis function placed at the selected sample points. So,
\begin{equation}
g(x,y) = \sum_{i=1}^{b}{\theta_iK(x,x_i)L(y,y_i)} = \mathbf{\theta}^{T}[\mathbf{k}(x)\mbox{ o } \mathbf{l}(y)]
\end{equation}
where, $K(x,x_i)$ and $L(y,y_i)$ are the kernel functions at $x_i$ and $y_i$ consecutively; \\
 $k(x) = (K(x,x_1), K(x,x_2), ..., K(x,x_b))^T$ and $l(y) = (L(y,y_1), L(y,y_2), ..., L(y,y_b))^T$ are the kernel vectors and the operator o denotes \textit{Hadamard product}. This gives
\begin{eqnarray*}
[\mathbf{V}_R(x,y)]_{ij} &=& \int{\int{K(x,x_i)L(y,y_i)K(x,x_j)L(y,y_j)dx}dy}	\\	
\Rightarrow \hat{\mathbf{V}}_{R(b\times b)}(x,y) &=&  
 \mathbf{V}_R(x) \mbox{ o } \mathbf{V}_R(y)  \\
\mbox{or }[\hat{\mathbf{V}_R}(x,y)]_{ij} &=& \left(\frac{1}{\sqrt{\pi}2\sigma}\right)^2 \exp{ \left(-\frac{(x_i-x_j)^2}{4\sigma^2} - \frac{(y_i-y_j)^2}{4\sigma^2}\right)}
\end{eqnarray*}
where, $\mathbf{V}_R(x)$ is a $b \times b$ matrix with entries $[\mathbf{V}_R(x)]_{ij} =  K(x,x_i)*K(x,x_j)$ and $\mathbf{V}_R(y)$ is a $b \times b$ matrix with entries $[\mathbf{V}_R(y)]_{ij} = L(y,y_i)*L(y,y_j)$ and $*$ is the symbol for convolution operation.

\hspace{0.2 in}		
The IIM for RIP $\mathbf{V}_{R(b \times b)}$, for an n-dimensional quantity, is obtained using $b^n$  multiplications.  The computations can be reduced by replacing multiplications through additions of the exponents. This will require  square of the $nb^2$ terms, $\frac{nb^2}{2}$ additions of exponents and then taking exponents of $b^2$ terms. 
Now, the sample estimate of $\mathbf{h}$ ($\hat{\mathbf{h}}$) can be obtained as under:
\begin{align}
\mathbf{h}_{(b \times 1)} &= \int{\int{(\mathbf{k}(x)\mbox{ o }\mathbf{l}(y))(p_{xy}(x,y) - p_x(x)p_y(y))dx}dy} \nonumber \\
{\hat{h_l}} &= h'_l -h''_l \nonumber \\
\mbox{where, } h'_l &= \int \int ( K(x,x_l)L(y,y_l) )\left( \frac{1}{N}\sum_{i=1}^{N}{ (K(x,x_i)L(y,y_i))}\right)dxdy \nonumber \\
& = \frac{1}{N}\sum_{i=1}^{N} \mathcal{V}_R(x_i,x_l) \mathcal{V}_R(y_i,y_l) \nonumber \\ 
&= \frac{1}{4\pi\sigma^2N}\sum_{i=1}^{N}{ \exp \left\{-\frac{(x_i-x_l)^2+(y_i-y_l)^2}{4\sigma^2} \right\} } \nonumber \\
\mbox{and  } h''_l &= \int \int ( K(x,x_l)L(y,y_l) ) \left( \frac{1}{N^2}\sum_{j=1}^{N}\sum_{i=1}^{N}{(K(x,x_i)L(y,y_j))} \right) dxdy\nonumber \\
&=\frac{1}{N^2}\sum_{i=1}^{N}{\left( \mathcal{V}_R(x_i, x_l) \left(\sum_{j=1}^{N} \mathcal{V}_R(y_j,y_l)\right)\right)} \nonumber \\
&= \frac{1}{N^2}\left( \sum_{i=1}^{N} \mathcal{V}_R(x_i,x_l)\right)\left(\sum_{j=1}^{N}\mathcal{V}_R(y_j,y_l)\right)
\end{align}
The estimation of $h'_l$ is obtained by replacing the Hadamard product through addition of the exponents, as for the estimation of $\mathbf{V}_R(x,y)$. Each entry requires $n-1$ additions, then exponent followed by $N$ additions.
Each entry  $h''_l$ is obtained through $nN$ additions and $(n-1)$ multiplication. So, over all vector $\mathbf{h}'$ requires $(N+n-1)b$ additions and $Nb$ number of exponents. The estimation of  vector $\mathbf{h}''$ requires $nNb$ additions and $b(n-1)$ multiplications.

\hspace{0.2 in}		
Once $\mathbf{V}_R(x,y)$ and $\mathbf{h}$ are available, the linear coefficients ($\mathbf{\theta}$) can be obtained solving Equation \eqref{eqgradlsfd}. The time complexity for this is $O(b^2)$. 
 Based on Equation \ref{namev2cip}, the required $\Phi_2^{\bs{\Delta}} = - V_2(\hat{\bs{\Delta}})$, the $CIP(p_{x_1x_2}(x_1x_2), p_{x_1}(x_1)p_{x_2}(x_2)) $ and 
the least square approximation error  (LSFD) are estimated. Also, the method estimates both the Function Difference ($\bs{\Delta}$) and $\Phi^{\mbox{FD}}_{2}$ of a random vector simultaneously. 
The time complexity is usually measured in terms of the number of multiplications. 
With this the total multiplication time complexity is only $O(b^2 + b(N+n-1))$. It can be further reduced by taking exponent of values corresponding to $(x_i-x_j)^2 < (3\sigma)^2 $ or $(y_i-y_j)^2 < (3\sigma)^2 $ as zero. 
Though not the time complexity, the performance directly depends upon the number of samples available; specifically in higher dimensions. 
To effectively increase the available samples for estimation, the  next section uses basis placed at both paired and unpaired samples to estimate $\hat{\bs{\Delta}}$. The estimator is  identified as  LSFD2.
\subsection[Kernel Basis Placed at  Paired and Un-paired Samples]{LSFD2 Estimation through Multiplicative Kernel Basis Placed at  Paired and Un-paired Samples}
\label{lsfd2}
The estimation method places the multiplicative kernels as basis at unpaired samples also. This allows the use of \textit{Kronecker} structure to reduce the computational cost. The approximation $g(x,y)$ is defined as:
\begin{eqnarray*}
g(x,y) &=& \sum_{j=1}^{b}\sum_{i=1}^{b} \theta_{ij} K(x,x_i)L(y,y_j) \nonumber \\
&=& \mbox{vec}(\mathbf{\Theta})^T[(\mathbf{I}_b \otimes \mathbf{k}(x))\mbox{ o }(\mathbf{l}(y) \otimes \mathbf{I}_b)]
\end{eqnarray*}
where, $\mathbf{\Theta}$ is a $b \times b$ parameter matrix, vec($\cdot$) is a vectorization function and $\otimes$ implies the \textit{Kronecker} product. 
Accordingly,
\begin{eqnarray}
\mathbf{V}_R{(i*(b-1)+j,k*(b-1)+l)}  &=& \int \int K(x,x_i)L(y,y_j)K(x,x_k)L(y,y_l)dxdy \nonumber \\
\Rightarrow \mathbf{V}_{R(b^n \times b^n)}(x,y) &=&  
 \mathbf{V}_R(y) \otimes \mathbf{V}_R(x)
\end{eqnarray}
where, $\mathbf{V}_R(x)$ is a $b \times b$ matrix with entries $[\mathbf{V}_R(x)]_{ij} =  K(x,x_i)*K(x,x_j)$ and $\mathbf{V}_R(y)$ is a $b \times b$ matrix with entries $[\mathbf{V}_R(y)]_{ij} = L(y,y_i)*L(y,y_j)$.

\hspace{0.2 in}		
The sample estimate of $\mathbf{h}$ ($\hat{\mathbf{h}}$) can be obtained as under:
\begin{eqnarray}
\mathbf{h}_{(b^n \times 1)} &=&  \int{\int{[(\mathbf{I}_b \otimes \mathbf{k}(x))\mbox{ o }(\mathbf{l}(y) \otimes \mathbf{I}_b)] (p_{xy}(x,y) - p_x(x)p_y(y))dx}dy} \nonumber \\
\hat{h}_{(l*(b-1)+l')} &=& \frac{1}{N}\sum_{i=1}^{N}{\mathcal{V}(x_i,x_l)\mathcal{V}(y_i,y_l')} - \frac{1}{N^2}\sum_{j=1}^{N}\sum_{i=1}^{N}{\mathcal{V}(x_i,x_l)\mathcal{V}(y_j,y_l')} \nonumber \\
\mbox{Accordingly, } \hat{\mathbf{h}} &=& \mathbf{h}' - \mathbf{h}''  
\end{eqnarray}
where,
\begin{displaymath}
\mathbf{h}'  = 
 \begin{cases} 
 \frac{1}{N} \mbox{vec}\left( \bs{\mathcal{V}}_R^T(x) \bs{\mathcal{V}}_R(y)\right), & \mbox{ for $n = 2$}\\
 [\mathbf{h}']_{l+ (l'-1)*b+(l''-1)*b^2} = \frac{1}{N^n}\sum_{i=1}^{N}{\mathcal{V}(x_i,x_l)\mathcal{V}(y_i,y_l')\mathcal{V}(z_i,z_l'')}   & \mbox{ for $n=3$} 
 \end{cases}
 \end{displaymath}
 \begin{displaymath}
 \mathbf{h}'' = 
 \begin{cases} 
 [\mathbf{h}'']_{l+(l'-1)*b} 
 = \frac{1}{N^2} \left( \sum_{i=1}^{N}\mathcal{V}(x_i,x_l)\right) \left( \sum_{j=1}^{N}\mathcal{V}(y_j,y_l')\right) & \mbox{ for $n=2$}\\  
 [\mathbf{h}'']_{l+ (l'-1)*b+(l''-1)*b^2} = \frac{1}{N^3} \sum_{i=1}^{N}\mathcal{V}(x_i,x_l)\sum_{j=1}^{N}\mathcal{V}(y_j,y_l')\sum_{k=1}^{N}\mathcal{V}(z_k,z_l'')
 								 &\mbox{ for $n=3$}
\end{cases} 								 
\end{displaymath}		 
The equation of $\mathbf{h}'$ for $n=2$ is not extendibles as it is to $n>2$. The equations for $n=3$ show the way to get generalization for higher dimensions. As explained previously,  the estimation of vector $\mathbf{h}'$ requires $(N+n-1)b^n$ additions and $Nb^n$ number of exponents. The estimation of vector $\mathbf{h}''$ requires $b^2N$ additions and $n(b^n)$ multiplications.\\ 
To estimate $\Phi_{2}^{\bs{\Delta}}$,  the optimal parameter matrix $\mathbf{\Theta}$ is needed. The Equation \eqref{eqgradlsfd} can be written as:
\begin{equation*}
\mathbf{V}_R\mbox{vec}(\mathbf{\Theta}) + \lambda\mbox{vec}(\mathbf{\Theta}) = \mathbf{h} 
\end{equation*}  
This is the famous \textit{discrete Sylvester equation} and requires $O(b^3)$ computations to solve it.\\
Now, the Equation \eqref{eqlsfd} can be given as under:
\begin{eqnarray}
\mbox{LSFD2} &=& \mbox{vec}(\mathbf{\Theta})^T(\mathbf{V}_R(y) \otimes \mathbf{V}_R(x))\mbox{vec}(\mathbf{\Theta}) - 2\mathbf{h}^T\mbox{vec}(\mathbf{\Theta}) \nonumber \\
 &=& \mbox{vec}(\mathbf{\Theta})^T\mbox{vec}(\mathbf{V}_R(x) \mathbf{\Theta} \mathbf{V}_R(y)^T) - 2\mathbf{h}^T\mbox{vec}(\mathbf{\Theta}) \nonumber \\
 &=& \mbox{trace}( \mathbf{\Theta}^T\mathbf{V}_R(x)\mathbf{\Theta}\mathbf{V}_R(y)^T) - 2\mathbf{h}^T\mbox{vec}(\mathbf{\Theta}) \nonumber 
\end{eqnarray}
The computational complexity of above direct estimation of least square error in FD estimation is $O(b^3)$.
Overall, the multiplicative kernel used as basis at sampled and unsampled pairs has $O(b^3 + (N+n)b^n)$ computations but expected to be more accurate specifically, with higher dimensions and less number of samples.
\subsection{A Note on the CRIP Estimations in above Derivations}
The least squares method has been used for direct estimation of density difference and  density ratio, as it is mention in the Section \ref{lsfd0}. One could have noted that the way $\mathbf{h}$ or the $\bs{\mathcal{V}}(x)$ has been calculated in this article is different from it is elsewhere. 
For example, calculation for  $h_l = \int \psi(x,x_i)p_x(x)dx$ in both the ways is demonstrated here.
This article simplifies  $h_l$ as under:
\begin{align*}
h_l &= \int G_\sigma(x,x_j)p_x(x)dx \\
&= \int G_\sigma(x,x_j) \left(\frac{1}{N} \sum_{i=1}^{N}G_\sigma(x,x_i)dx \right)\\
&= \frac{1}{N}\sum_{i=1}^{N}G_{\sqrt{2}\sigma}(x_i,x_j) 
\end{align*}
Intuitively, the kernel interacts with each sample of the PDF $p_x(x)$. The interaction is a convolution resulting into Gaussian with parameter $\sqrt{2}\sigma$. 

The other articles simplify $h_l$ as under:
\begin{align*}
h_l &= \int G_\sigma(x,x_i)p_x(x)dx = \int G_\sigma(x,x_j) \left(\frac{1}{N} \sum_{i=1}^{N}\delta(x,x_i)dx \right)\\
&= \frac{1}{N}\sum_{j=1}^{N} G_\sigma(x_j-x_i)
\end{align*} 
This is also correct, as $h_l$ can be thought as a convolution of the actual PDF with direct delta. 

Overall, both the approaches are correct. But, the first approach is more precise, as better approximates the PDF $p_x(x)$ through Gaussian  kernel, than the delta kernel in the second approach. The empirical results also justify this approach. Actually, the Cross IP estimation in \citep{ITL2010} follows the first approach though it seems conventionally the second approach is more popular. 

\section{$\Phi_2^{GFD}$ Estimation} 
\label{seclsgfd}
The least squares approximation of $\alpha(x,y)= \nabla(\bs{\Delta}(x,y))$ and $A(x,y)= \nabla^2(\bs{\Delta}(x,y))$ can be achieved in the same way as that for $\bs{\Delta}$ in the previous sections. In general, $\bs{\Delta}^{(r)}(x,y)$ is the $r^{th}$ order derivative of $\bs{\Delta}$ and can be estimated using the linear approximation  $g(x,y) := {\hat{\bs{\Delta}}^{(r)}(x,y)} = \theta(x,y)^T \Psi^{(r)}(x,y) $. It is a customary approach to use multiplicative kernels for multivariate density estimation and then multiplicative derivative kernels for the derivative of multivariate density estimation. Accordingly, all major equations for $\bs{\Delta}^{(r)}(x,y)$ estimation remain as they are in $\bs{\Delta}(x,y)$ estimation, with simply the basis $\Psi(x,y)$ replaced by $\Psi^{(r)}(x,y)$. 
For example, with Gaussian kernel $\Psi(x,y) = G_h(x) o G_h(y)$ and $\Psi^{(r)}(x,y) = G_h^{(r)}(x) o G_h^{(r)}(y) = h^{-r}H_r(x)G(x) o h^{-r}H_r(y)G(y)$. 		
Let the least squares estimator for GFD be called LSGFD (Least Square GFD) and be derived as under:
\begin{align}
\mbox{LSGFD} &= \int{ \int{(g(x,y) - \alpha(x,y))^2}dxdy }	\nonumber \\
\label{namev2gcip}		
		&= V_2(g(x,y)) - 2 \mathcal{V}(g(x,y),\bs{\Delta}(x,y)) \mbox{   }  (\because \mbox{ the last term has no effect on LSFD})
\end{align} 
where, $V_2(g(x,y))$ is the $\left\|\hat{\alpha}\right\|_2$ and $\mathcal{V}(g(x,y),\alpha(x,y))$ is $\left\|\hat{\alpha}\alpha\right\|$. So, both the quantities represent the required contrast. But, as proved by \citet{lsdd13} the linear combination of them,  LSGFD, 
is more bias corrected estimator. Also, let
\begin{equation}
g(x,y) = \sum_{i=1}^{b}{\theta_i\psi_i(x,y)} = \mathbf{\theta}(x,y)^T\Psi(x,y)
\end{equation}
where, $b$ denotes the number of basis functions; $\theta(x,y) = (\theta_1, \theta_b,...,\theta_b)^{T}$ is the parameter vector and $\Psi(x,y) = (\psi_1,\psi_2,...,\psi_b)^{T}$ is the basis function vector. So, with regularization function $R(\mathbf{\theta}) =  \mathbf{\theta}^T\mathbf{\theta}$ and $\lambda$ as the regularization parameter,
\begin{align}
\label{eqlsgfd}
\mbox{LSGFD}(\mathbf{\theta}) &=  \mathbf{\theta}^{T}\mathbf{V}_R\mathbf{\theta} - 2\mathbf{h}^T\mathbf{\theta} + \lambda\mathbf{\theta}^T\mathbf{\theta} \\
\mbox{where, }  \mathbf{V}_{R (b \times b)} &= \int{\int{\Psi(x,y)\Psi^T(x,y)dx}dy} \nonumber \\ 
\mathbf{h}_{b \times 1} &= \int{\int{\Psi(x,y)(\nabla p_{xy}(x,y) - \nabla (p_x(x)p_y(y)))dx}dy} \nonumber \\
h_l & = \mathcal{V}_R(\psi(x_l,y_l), \alpha )	\nonumber 
\end{align}
The optimal value of parameter vector $\theta(x,y)$ can be obtained by minimizing the gradient of LSGFD and obtained as 
\begin{align}  
\label{eqgradlsgfd}
\mathbf{\theta}^* = (\mathbf{V}_R + \lambda \mathbf{I}_b )^{-1}\mathbf{h}
\end{align}
where, $\mathbf{I}_b$ is a b-dimensional identity matrix.
\subsection{$\Phi_2^{GFD}$ Estimation through Multiplicative Kernel Model}
\label{lsgfd1}
Let us use multiplicative Gaussian kernel function as a basis function placed at the selected sample points. So,
\begin{equation}
g(x,y) = \sum_{i=1}^{b}{\theta_iK(x,x_i)L(y,y_i)} = \mathbf{\theta}^{T}[\mathbf{k}(x)\mbox{ o } \mathbf{l}(y)]
\end{equation}
The $\mathbf{V}_R(x,y)$ is already derived for LSFD estimator. 
The sample estimate of $\mathbf{h}$ ($\hat{\mathbf{h}}$) can be obtained as under:
\begin{align}
\mathbf{h}_{(b \times 1)} &= \int{\int{(\mathbf{k}(x)\mbox{ o }\mathbf{l}(y))(\nabla p_{xy}(x,y) - \nabla(p_x(x)p_y(y)))dx}dy} \nonumber \\
{\hat{h_l}} &= h'_l -h''_l  \mbox{, where }\nonumber \\
 h'_l &= \int \int ( K(x,x_l)L(y,y_l) )\left( \frac{1}{N\sigma^{2}}\sum_{i=1}^{N}{ \left(\frac{x-x_i}{\sigma}\right)K(x,x_i)\left(\frac{y-y_i}{\sigma}\right)L(y,y_i)}\right)dxdy \nonumber \\
& = \frac{\pi^{d/2}}{(2\sigma)^{d}(N\sigma^d)}\sum_{i=1}^{N} \left(\frac{x_i-x_l}{\sigma}\right)\bs{\mathcal{V}}_R(x_i,x_l) \left(\frac{y_i-y_l}{\sigma}\right)\bs{\mathcal{V}}_R(y_i,y_l) \nonumber \\ 
&= \frac{1}{(2\pi)^{d/2}\left( \sqrt{2}\sigma \right)^{2d}\left( \sqrt{2}\sigma \right)^dN}\sum_{i=1}^{N}{(x_i-x_l)(y_i-y_l) \exp \left\{-\frac{(x_i-x_l)^2+(y_i-y_l)^2}{4\sigma^2} \right\} } \nonumber \\
h''_l &= \int \int ( K(x,x_l)L(y,y_l) ) \left( \frac{1}{N^2\sigma^2}\sum_{j=1}^{N}\sum_{i=1}^{N}{\left(\frac{x-x_i}{\sigma}\right)K(x,x_i)\left(\frac{y-y_j}{\sigma}\right)L(y,y_j)} \right) dxdy\nonumber \\
&= \frac{\pi^{d/2}}{(2\sigma)^{d}(N\sigma)^d}\left( \sum_{i=1}^{N} \left(\frac{x_i-x_l}{\sigma}\right) \mathcal{V}_R(x_i,x_l)\right)\left(\sum_{j=1}^{N} \left(\frac{y_j-y_l}{\sigma}\right) \mathcal{V}_R(y_j,y_l)\right)
\end{align}
Thus, the parameter vector ($\theta$), the scalar value LSGFD $=||\bs{\Delta}||_2 $ and the the $\bs{\Delta}(x,y)$ - all are obtained in terms of the reference and cross reference IIMs. This justifies further the purpose to define the quantities RIP, CRIP and IIMs. 

The interaction matrices for RIP  ($\mathbf{V}_R$) and CRIP ($\VV$) for LSGFD estimation could be  same as those used to estimate the LSFD. But, for more precise estimations it is better to recalculate them using suitable bandwidth parameter for density derivative estimator, which is usually smaller than that used for density estimation.  

\subsection[$\Phi_2^{GFD}$ Estimation through Kernel Basis Placed at  Paired and Un-paired Samples ]{$\Phi_2^{GFD}$ Estimation through Multiplicative Kernel Basis Placed at  Paired and Un-paired Samples}
\label{lsgfd2}
Similar to the LSFD2 estimator, LSGFD2 estimator can be derived using multiplicative kernel basis at unpaired samples and  \textit{Kronecker} structure to achieve precise computation. The approximation $g(x,y)$ is defined as:
\begin{eqnarray*}
g(x,y) &=& \sum_{j=1}^{b}\sum_{i=1}^{b} \theta_{ij} K(x,x_i)L(y,y_j) \nonumber \\
&=& \mbox{vec}(\mathbf{\Theta})^T[(\mathbf{I}_b \otimes \mathbf{k}(x))\mbox{ o }(\mathbf{l}(y) \otimes \mathbf{I}_b)]
\end{eqnarray*}
where, $\mathbf{\Theta}$ is a $b \times b$ parameter matrix, vec($\cdot$) is a vectorization function and $\otimes$ implies the \textit{Kronecker} product. 

The estimation of $\mathbf{V}_R$ is already discussed. The sample estimate of $\mathbf{h}$ ($\hat{\mathbf{h}}$) can be obtained as under:
\begin{eqnarray}
\mathbf{h}_{(b^n \times 1)} &=&  \int{\int{[(\mathbf{I}_b \otimes \mathbf{k}(x))\mbox{ o }(\mathbf{l}(y) \otimes \mathbf{I}_b)] (\nabla p_{xy}(x,y) - \nabla (p_x(x)p_y(y))dx}dy)} \nonumber \\
\hat{h}_{(l*(b-1)+l')} &=& \frac{\pi^{d/2}}{(2\sigma)^{d}(N\sigma^d)}\sum_{i=1}^{N} \left(\frac{x_i-x_l}{\sigma}\right)\bs{\mathcal{V}}_R(x_i,x_l) \left(\frac{y_i-y_l'}{\sigma}\right)\bs{\mathcal{V}}_R(y_i,y_l') \nonumber \\ 
&& - \frac{\pi^{d/2}}{(2\sigma)^{d}(N\sigma)^d}\left( \sum_{i=1}^{N} \left(\frac{x_i-x_l}{\sigma}\right) \mathcal{V}_R(x_i,x_l)\right)\left(\sum_{j=1}^{N} \left(\frac{y_j-y_l'}{\sigma}\right) \mathcal{V}_R(y_j,y_l')\right) \nonumber
\end{eqnarray}
To estimate $\Phi_2^\alpha$,  the optimal parameter matrix $\mathbf{\Theta}$ is needed. The Equation \eqref{eqgradlsfd} can be written as:
\begin{equation*}
\mathbf{V}_R\mbox{vec}(\mathbf{\Theta}) + \lambda\mbox{vec}(\mathbf{\Theta}) = \mathbf{h} 
\end{equation*}  
This is the famous \textit{discrete Sylvester equation} and requires $O(b^3)$ computations to solve it.\\
Now, the Equation \eqref{eqlsfd} can be given as under:
\begin{align}
\mbox{LSGFD2} 
 = \mbox{trace}( \mathbf{\Theta}^T\mathbf{V}_R(x)\mathbf{\Theta}\mathbf{V}_R(y)^T) - 2\mathbf{h}^T\mbox{vec}(\mathbf{\Theta}) \nonumber 
\end{align}
\section{Empirical Verification of the Derived Estimators as Independence Measures }
\label{verifyexp}
The derived four estimators - LSFD, LSFD2, LSGFD and LSGFD2 - need empirical verification. 
A simple test experiment is designed that verifies their ability to separate the independent and dependent signals.  Further testing, as a BSS contrast, has been left for the future sections. 
The estimators need bandwidth parameter $(\mathbf{h})$ selection for multivariate kernel density estimation (KDE) and the regularization parameter $\lambda$. 
Conventionally, the least squares based direct estimation methods use a Cross Validation (CV) method to select both the parameters. 
The CV method is computationally demanding if good number of choices for a free parameter are provided to obtain accuracy in estimation. Instead, the Silverman's \textit{rule-of-thumb} (ROT) \citep{Silverman86},  balancing computation and optimal parameter value, is used for selecting $(\mathbf{h})$. The Experiment uses ROT for  
$\lambda = 0.01$ for the test experiments.\\ 
\textbf{Experiment (Independence Test)}: 
Let there be generated two  uniformly distributed independent signals $X(1,:)$ and $X(2,:)$ with 300 samples each. Let there be generated a dependent signal: $Y(1,:) = \sin (X(1,:)/20*\pi)$. Find the estimated values for the independent signals - $X(1,:)$ and $X(2,:)$ - and dependent signals -  $X(1,:)$ and $Y(1,:)$.   \\
The results are tabulated in the following Table \ref{indtestlsfd}. Each entry in the Table is a mean of 100 trials. The results show that all the estimators are able to give estimator value sufficiently low for independent signals than dependent signals. 
\begin{table}[!ht]
\caption{Performances of the derived independence measures with their estimation techniques: on the test set 
with independence and dependence signals; number of samples 300; kernel bandwidth parameter $h$ using ROT;  regularization parameter $\lambda = 0.01$. The table entries indicate mean of 100 trials. } 
\centering
\footnotesize {
		\begin{tabular}{|c||c|c|c|c|}
		\hline 
		
Test Condition & \bfseries LSFD & \bfseries LSFD2& \bfseries LSGFD & \bfseries LSGFD2   \\
\hline 
\hline
independent signals  &   0.4725e-03  &  0.5057e-03  &  0.2915e-03  &  0.1165e-03 \\
\hline
dependent signals &  0.0180  &  0.0411  &  0.0055  &  0.0091 \\
\hline
\end{tabular}
\label{indtestlsfd} 
}
\end{table} 
\subsection{Empirical Verification of the Derived Estimators as BSS Contrasts}
By definition, statistical independence implies uncorrelatedness (the opposite is true only for Gaussian variable). The  uncorrelated components with zero mean imply orthogonality. So, the ICs with zero mean are also mutually orthogonal. 
A rotation matrix for specific angle, from the infinite set of all n-dimensional rotation	matrices, will be able to transform  a set of zero mean whiten (orthogonal and univariant) components to ICs. So, the BSS problem reduces to estimating a rotation matrix $\mathbf{R}$ giving \textit{m.i.p.} $\mathbf{y}_i(t)$s.
\begin{equation}
\mathbf{Y}^{*}(t) = \underset{\mathbf{W}}{\mbox{argmax}} \mbox{  } \Phi(\mathbf{y}(t)) 
\end{equation}
where, $\Phi(\mathbf{y}(t))$ or $\Phi(\mathbf{Y}(t))$ is the contrast function based on the dependence or independence measure of random vector $\mathbf{y}$. 

It is known that the conventional information theoretic BSS contrasts give local minima for multimodal distributions. So, here an experiment designed to test the local minima of the derived contrast and some of the conventional contrasts for comparision.\\
\textbf{Experiment (Local MInima Analysis for BSS Contrasts)}:    
The experiment is designed to test the existence of spurious local minima in the optimization landscape of the derived contrast for BSS of two \textit{i.i.d.} sources with varying distributions. The number of samples (N) were kept 300. \\
The contrasts tested include the derived LSFD (with ROT for bandwidth parameter selection), LSFD (with ExROT for bandwidth parameter selection), LSFD2 (with ROT for bandwidth parameter selection), LSFD2 (with ExROT for bandwidth parameter selection), LSGFD (with ROT for bandwidth parameter selection), LSGFD (with ExROT for bandwidth parameter selection), LSGFD2 (with ROT for bandwidth parameter selection), LSGFD2 (with ExROT for bandwidth parameter selection) and existing least squares based independence measures LSMI (with Cross-Validation (CV) for bandwidth parameter selection) \citep{MLsmi13}, LSMI2 (with CV for bandwidth parameter selection) \citep{lsmi214} for comparision.
There are defined 21 types of distributions and used first 20 (type a to t) for this experiment. The first 18 types (a to r) of distributions are suggested by \cite{bach2003kernel} and two more skewed types of distributions were added.
The s type is a GGD with skewness $s = -0.25$ (left skewed)  and kurtosis $k = 3.75$ and  the t type is a GGD with skewness $s = 0.75$ (right skewed)  and kurtosis $k = 0$. Both the distributions are generated using Power Method with parameters $b = 0.75031534111078$, $c= -0.02734119591845$, $d = 0.07699282409939$ for s type and $b=1.11251460048528$,  $c=0.17363001955694$ and $d=-.05033444870926$ for t type. 
	The u type is a Gaussian distribution that is added for some other experiment not reported here.
All 21 distributions are shown in the Figure \ref{pdf21}.	
\begin{figure}
\includegraphics[scale=0.37]{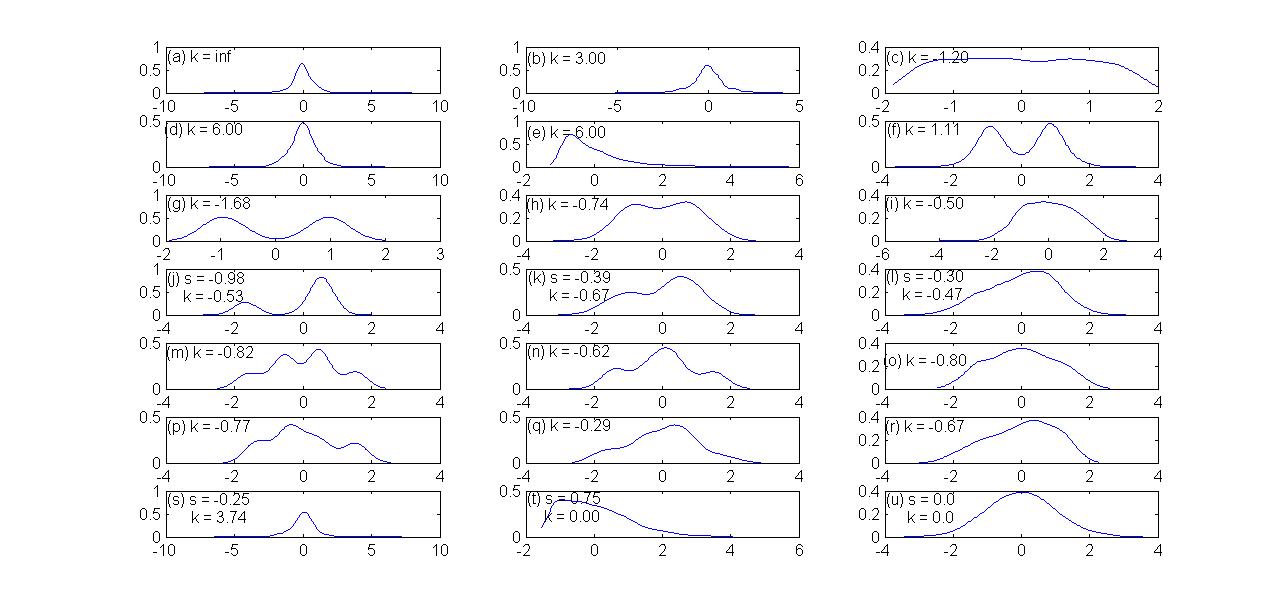} 
\centering
\caption{Probability density functions of sources with their kurtosis: (a) Student with 3 degrees of freedom; (b) double exponential; (c) uniform; (d) Student with 5 degrees of freedom; (e) exponential; (f) mixture of two double exponentials; (g)-(h)-(i) symmetric mixtures of two Gaussians: multimodal, transitional and unimodal; (j)-(k)-(l) nonsymmetric mixtures of two Gaussians, multimodal, transitional and unimodal; (m)-(n)-(o) symmetric mixtures of four Gaussians: multimodal, transitional and unimodal; (p)-(q)-(r) nonsymmetric mixtures of four Gaussians: multimodal, transitional and unimodal; (s) left skewed Generalized Gaussian Distribution(GGD); (t) right skewed GGD; (u) Gaussian distribution}
\label{pdf21}
\end{figure}	 

\subsubsection{Parameter Selection in the Derived Estimators for BSS}
\label{lsfdparameter}
The Experiment justified the use of ROT for bandwidth selection, instead CV for the same.  
But, both the methods have one more problem for BSS like signal processing and machine learning applications. Compare to the applications in previous experiment, where the comparision was at an event or at a point, those applications require to find the most optimal from a given solution set. If CV method is used, there needs to be found new parameter value at every point in consideration. That will be computationally too demanding. 
The ROT assumes Gaussian distribution for the unknown PDF. 
The feasible solution set for the problem is expected to have varying properties, like, varying  distances from Gaussianity and others. Ideally, same bandwidth parameter is not best for all points.  For example; in case of the BSS application, the goal is to find the most non-gaussian (independent) components. For this goal,  assuming Gaussianity for the whole solution set is contradictory and sure way to bring estimation errors. 
This brings the need to use data dependent rules for kernel smoothing parameter that takes into consideration the variation in the distributions of solutions and is also computationally efficient. Such a rule, identified as Extended ROT (ExROT) is derived by \citet{DharmaniExROTarxiv} based on an assumption that the density being estimated is near Gaussian and can be approximated using Gram-Charlier A Series. The rule is used for the contrast estimation in BSS in the following experiment. 
\subsubsection{Results}
The results of the Experiment for local minima analysis are shown in terms of the plots of negative of the contrast value versus the rotation angle theta. The minima of the plots corresponds to the actual sources. Ideally, it should be at $\theta = 0 \mbox{ or } \pi/2$. The plots show the $\theta$ values in radian multiplied by 100. 

The comparative study shows that all the derived estimators show minima at the $\theta = 0 $  and no spurious local minima for unimodal distributions. They have better optimization landscape than the LSMI and LSMI2 contrasts. The LSGFD2 estimator with bandwidth parameter selection has the best performance for multimodal distributions compare to all other contrasts, though it has local minima for distributions Type 4, Type 15 and few others.     
\begin{figure}
\includegraphics[scale=0.35]{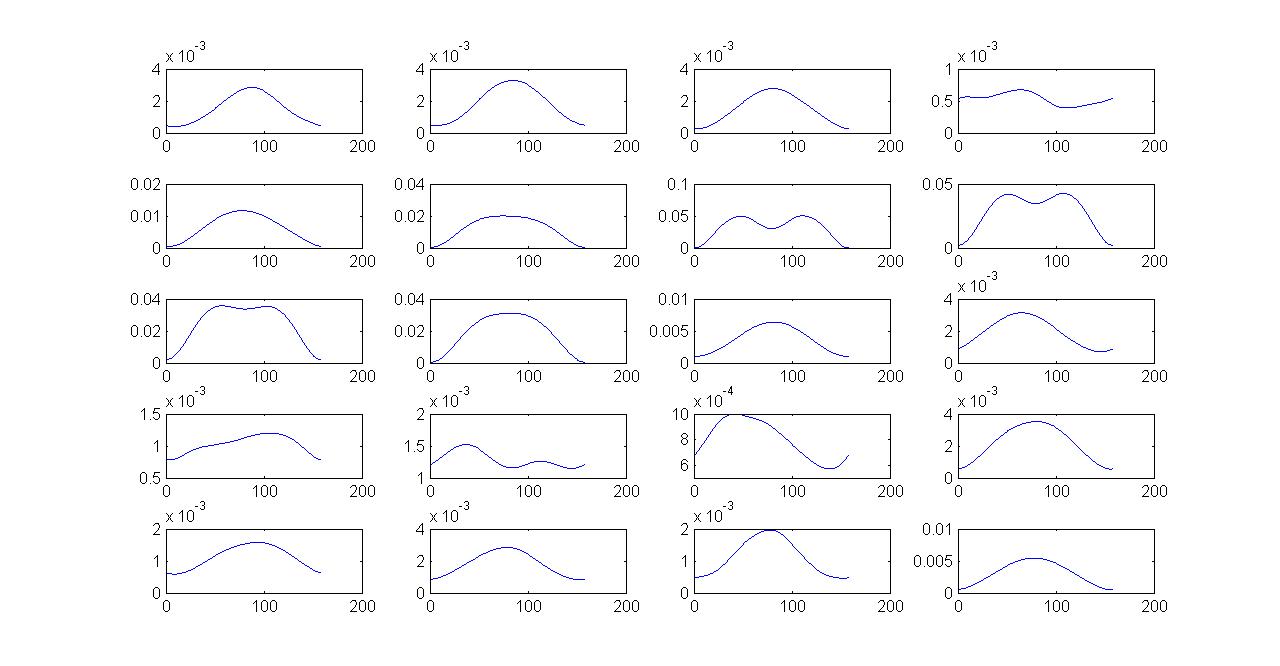} 
\centering
\caption{Plots of LSFD Contrast estimated with bandwidth parameter through ROT versus theta value for the first 20 distributions a-s stacked rowwise}
\label{im202r}
\end{figure}	 
\begin{figure}
\includegraphics[scale=0.35]{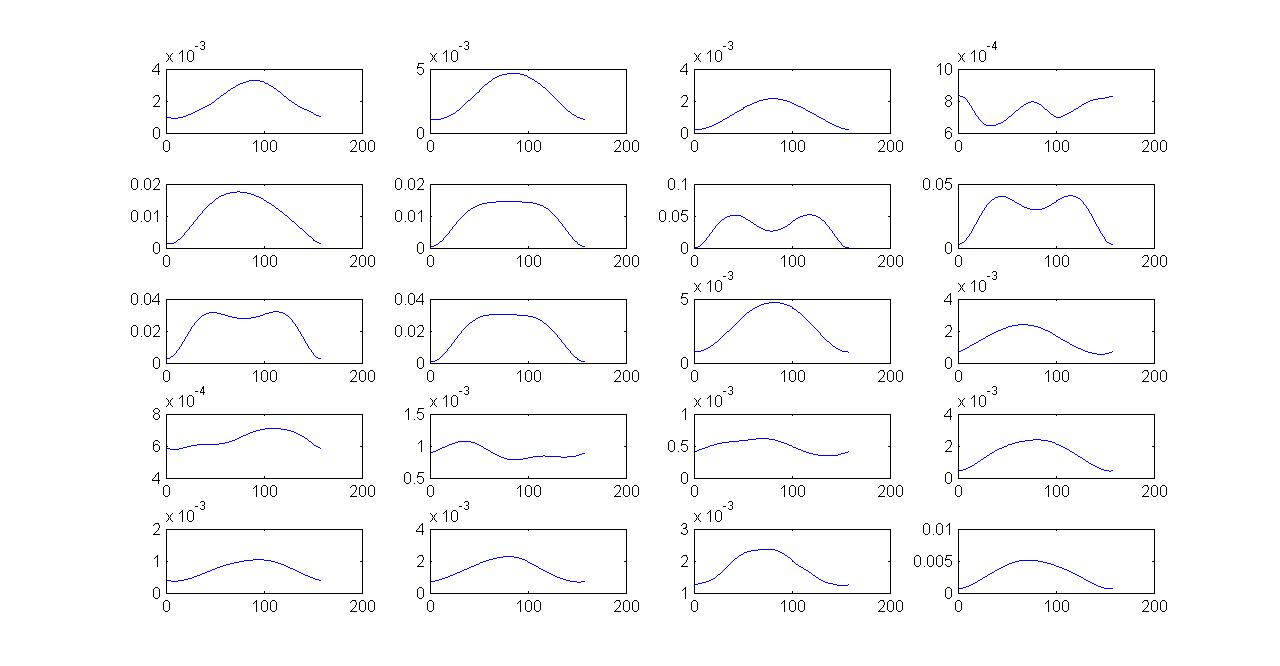} 
\centering
\caption{Plots of LSFD Contrast estimated with bandwidth parameter through ExROT versus theta value for the first 20 distributions a-s stacked rowwise}
\label{im202}
\end{figure}	 
\begin{figure}
\includegraphics[scale=0.35]{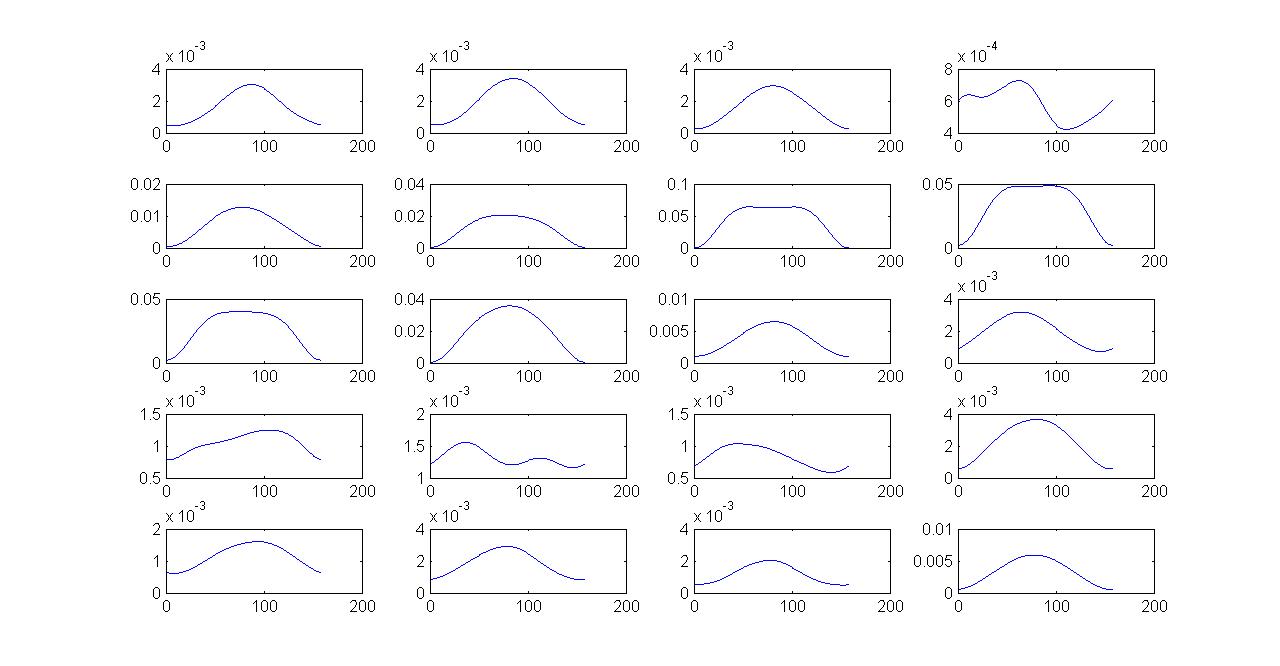} 
\centering
\caption{Plots of LSFD2 Contrast estimated with bandwidth parameter through ROT versus theta value for the first 20 distributions a-s stacked rowwise}
\label{im204r}
\end{figure}	 
\begin{figure}
\includegraphics[scale=0.35]{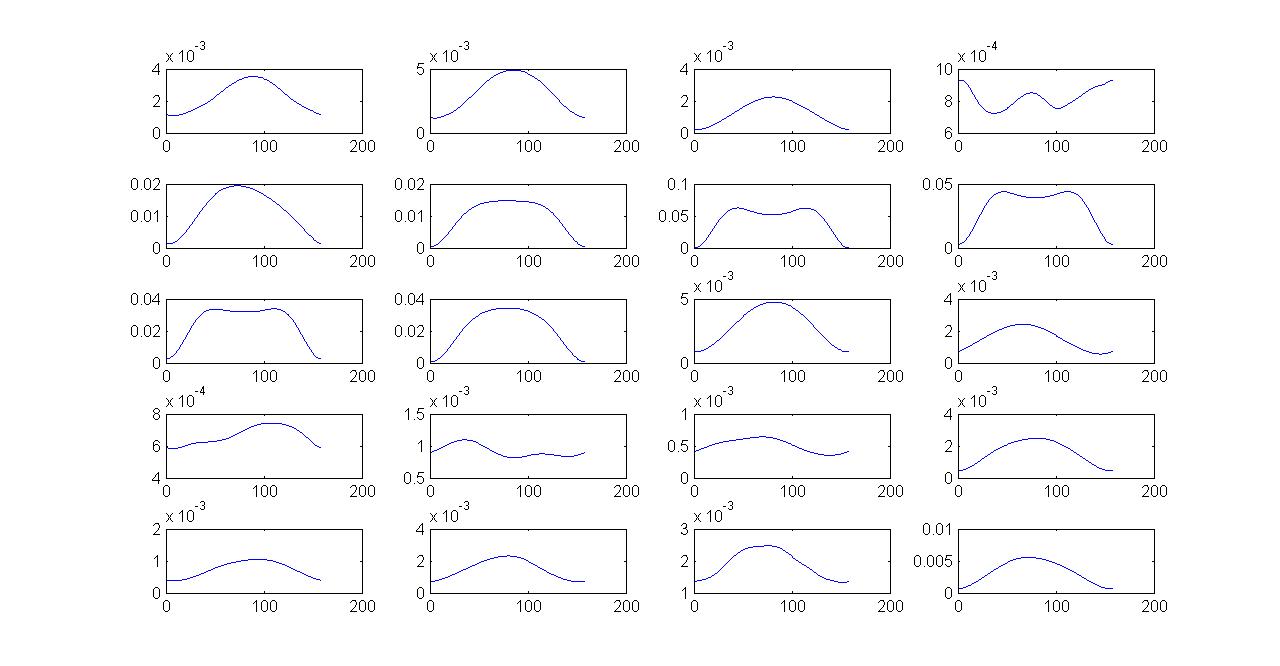} 
\centering
\caption{Plots of LSFD2 Contrast estimated with bandwidth parameter through ExROT versus theta value for the first 20 distributions a-s stacked rowwise}
\label{im204}
\end{figure}	 
\begin{figure}
\includegraphics[scale=0.35]{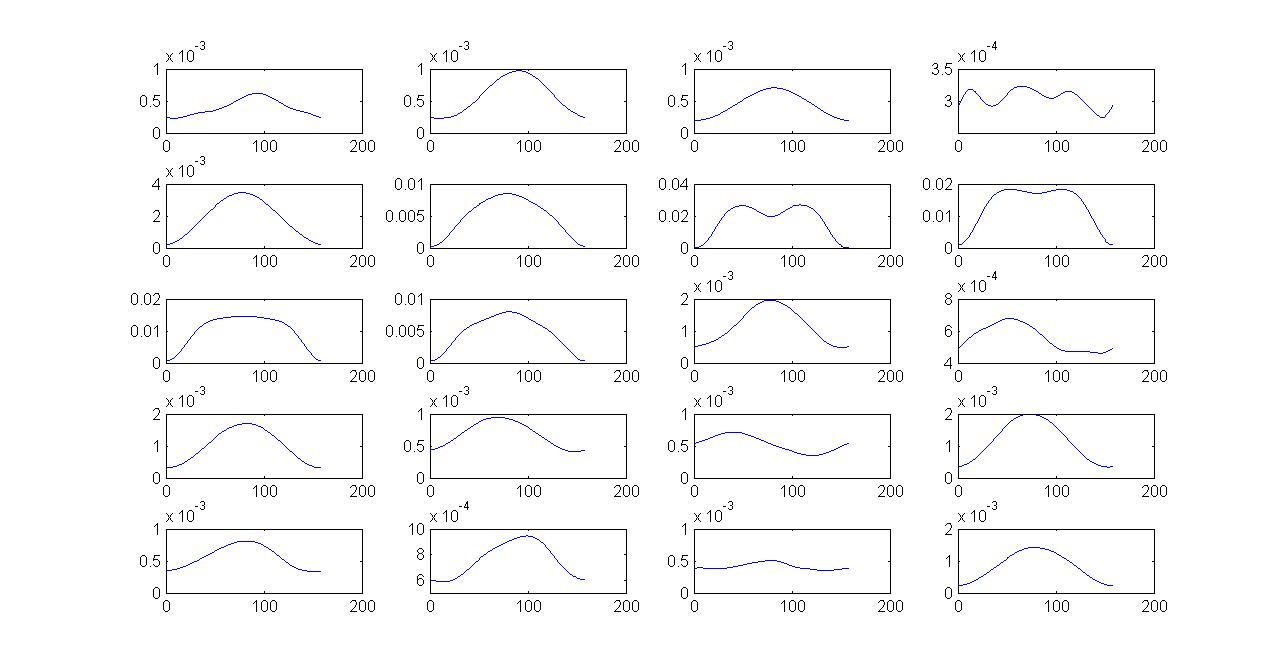} 
\centering
\caption{Plots of LSGFD Contrast estimated with bandwidth parameter through ROT versus theta value for the first 20 distributions a-s stacked rowwise}
\label{im206r}
\end{figure}	 
\begin{figure}
\includegraphics[scale=0.35]{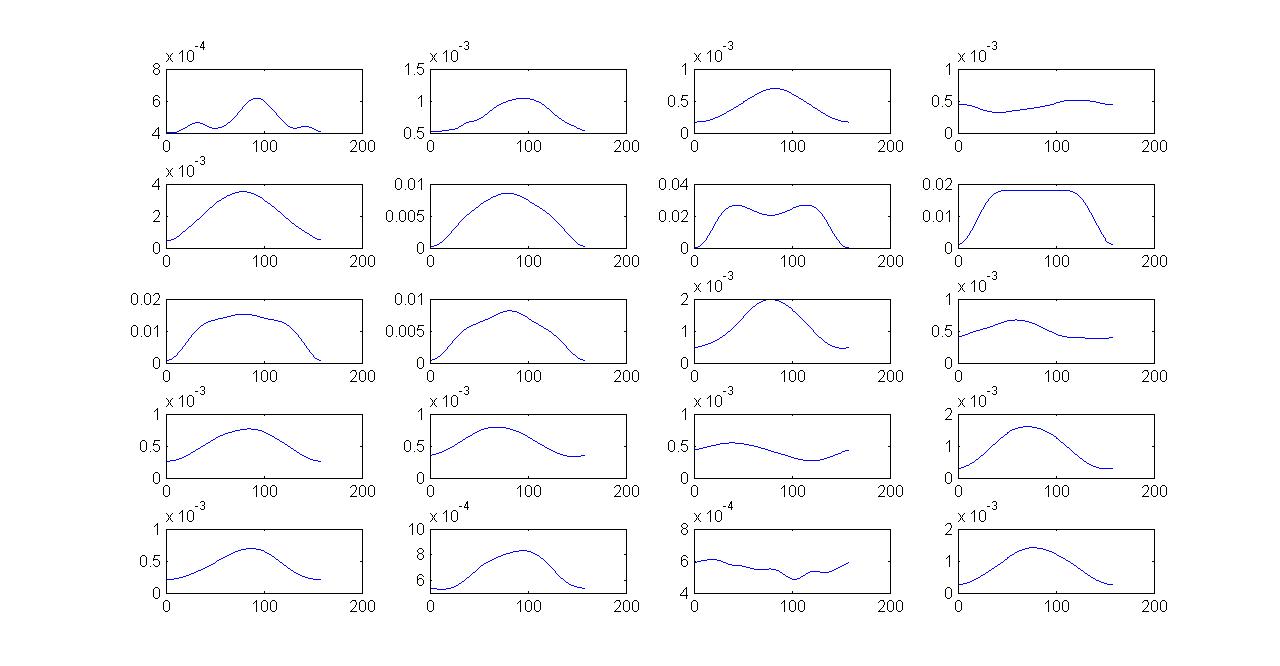} 
\centering
\caption{Plots of LSGFD Contrast estimated with bandwidth parameter through ExROT versus theta value for the first 20 distributions a-s stacked rowwise}
\label{im206}
\end{figure}	 
\begin{figure}
\includegraphics[scale=0.35]{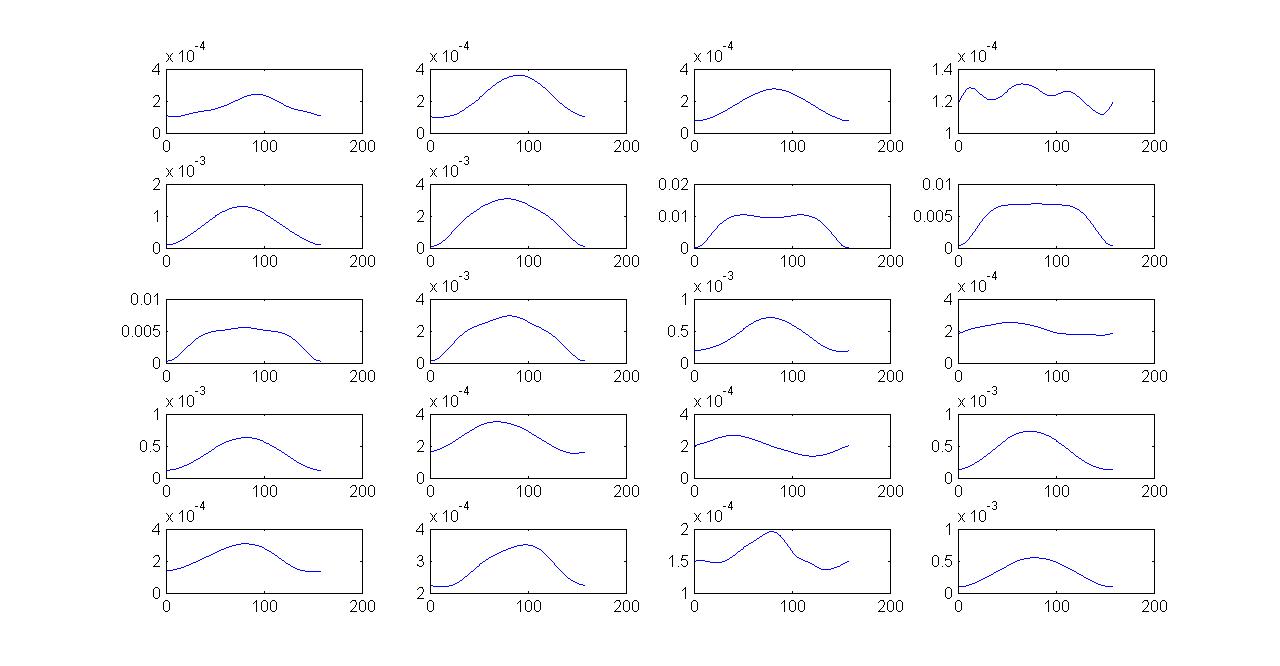} 
\centering
\caption{Plots of LSGFD2 Contrast estimated with bandwidth parameter through ROT versus theta value for the first 20 distributions a-s stacked rowwise}
\label{im208r}
\end{figure}	 
\begin{figure}
\includegraphics[scale=0.35]{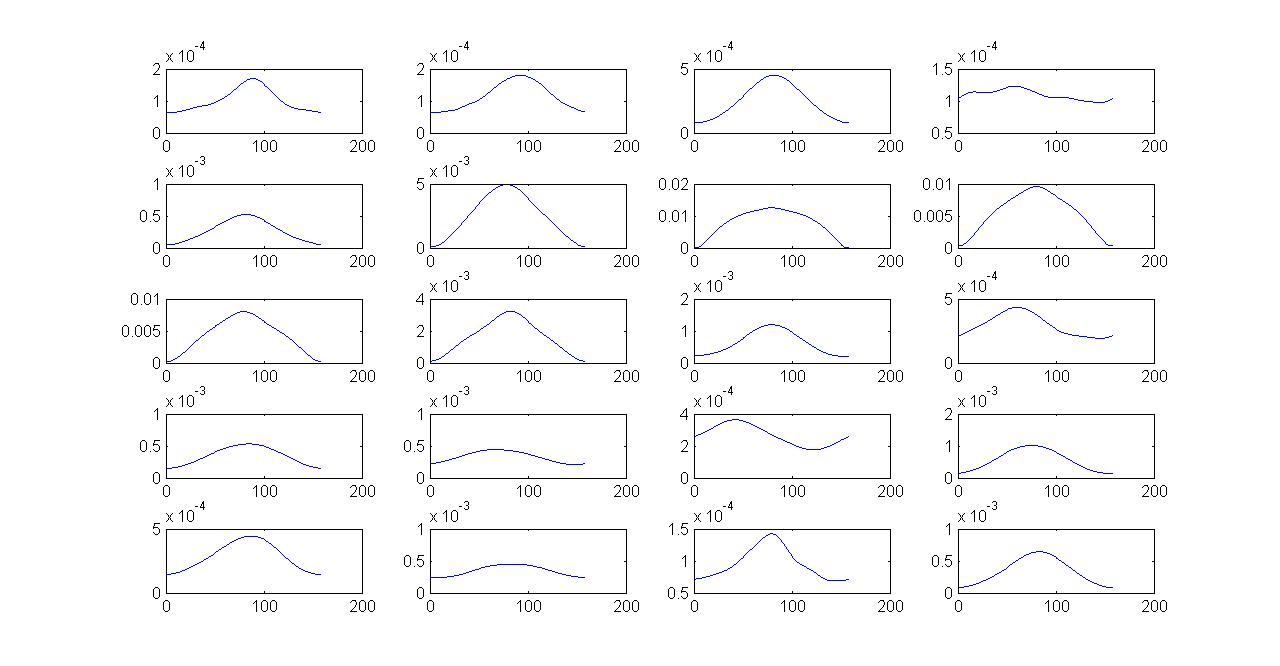} 
\centering
\caption{Plots of LSGFD2 Contrast estimated with bandwidth parameter through ExROT versus theta value for the first 20 distributions a-s stacked rowwise}
\label{im208}
\end{figure}	 
\begin{figure}
\includegraphics[scale=0.35]{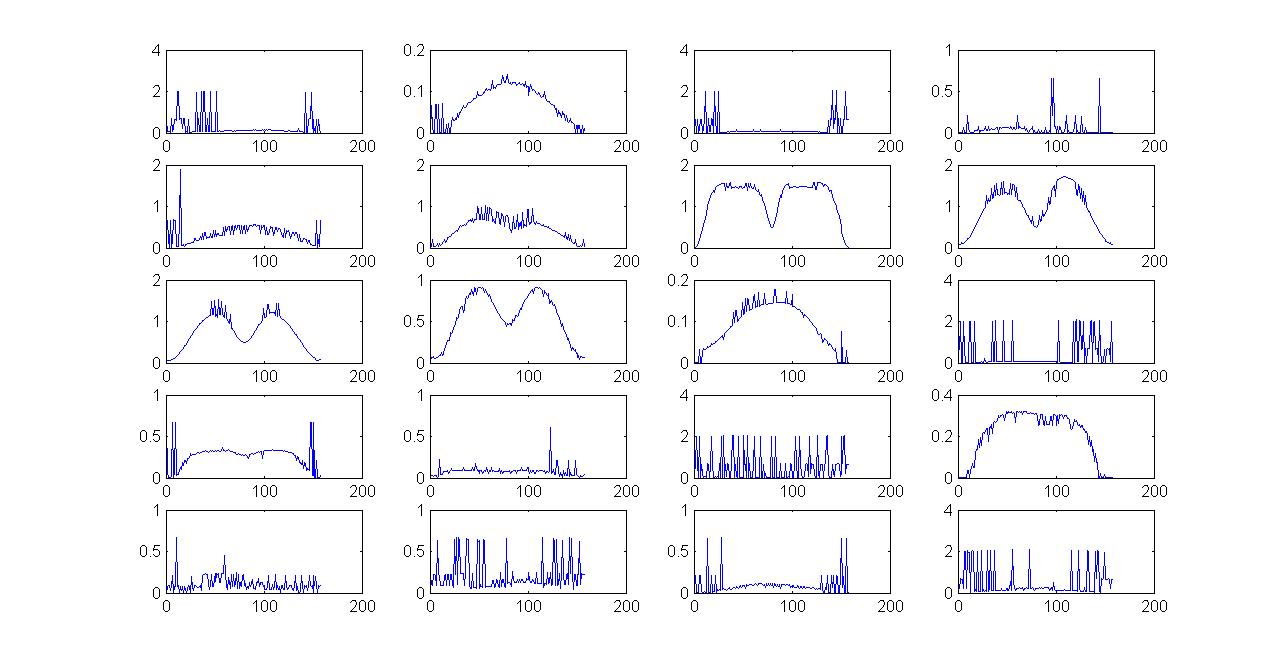} 
\centering
\caption{Plots of LSMI Contrast estimated with bandwidth parameter through CV versus theta value for the first 20 distributions a-s stacked rowwise}
\label{im221}
\end{figure}	 
\begin{figure}
\includegraphics[scale=0.35]{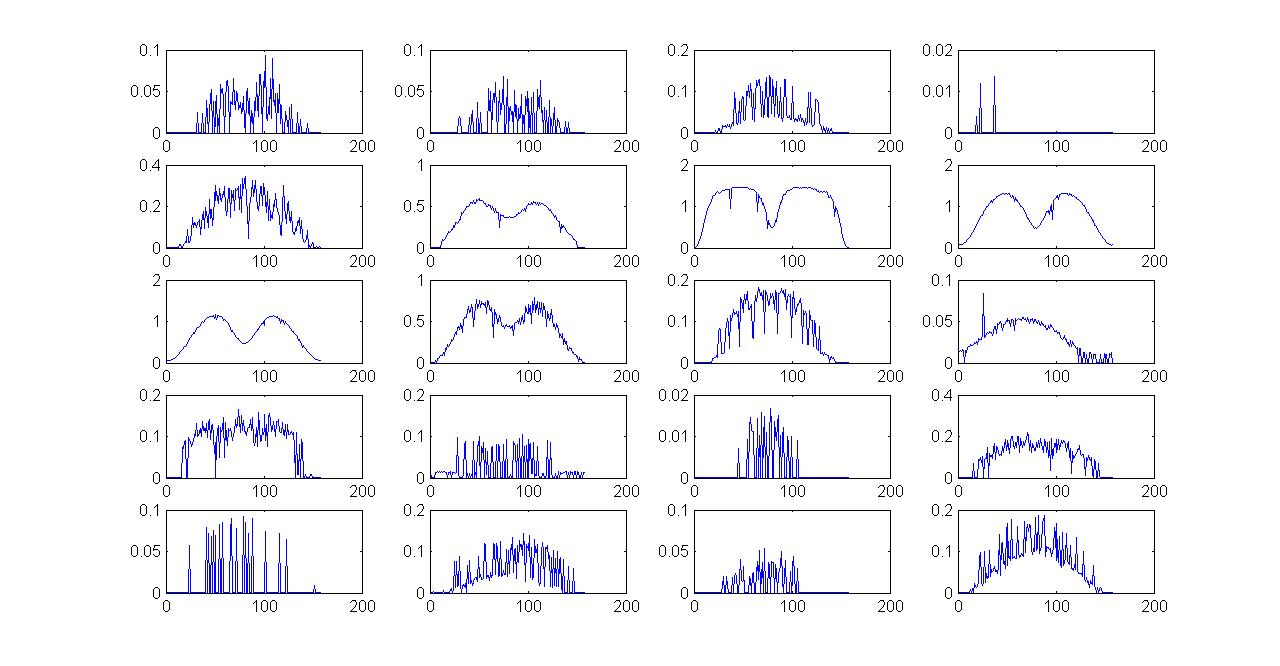} 
\centering
\caption{Plots of LSMI2 Contrast estimated with bandwidth parameter through CV versus theta value for the first 20 distributions a-s stacked rowwise}
\label{im222}
\end{figure}	 
\section{Conclusion}
\label{newIMconclusion}
The article proves that the Gradient Function Difference (GFD) being zero everywhere imply independence. For a bounded support random vector the Hessian Function Difference (HFD) being zero everywhere imply independence. Accordingly, $L^P$ measure of FD, GFD and HFD are proved to be independence measures.
They are used to derive contrast functions for simultaneous ICA and BSS. The contrast functions are proved to satisfy the properties of Scale Invariance, Dominance and Discrimination, avoiding spurious global maxima.   
There has also been derived least squares based two methods to estimate $L^2$ of FD and $L^2$ of GFD contrasts using multiplicative kernel basis. In the first method the basis are placed at only joint samples and in the another method basis are placed at both paired and unpaired samples. The first method requires computations of the order of $O(b^2 + N(b + n - 1))$ and the second method requires that of the order of $O(b^3 + (N+n)b^n))$. But, the second method requires less samples for the same performance. The empirical verification justifies the derived contrasts for BSS applications.  But, further experiments are needed to have the comparision with other contrasts on separation quality against varying number of sources and varying number of samples.  The required performance analysis is restricted here 
and targeted in the future versions of this article. 

\appendix
\section{Measure, Metric, Norm and $L^p$-norm}
\label{aplpnorm}
Measures necessarily assign some nonnegative number to the members of a set in some systematic way. The distance measures or distance functions assign nonnegative value for two elements of a set.
Let there be set $S$. Then, a distance function $d: S \times S  \rightarrow \mathbb{R}$ may satisfy the following conditions for $ x,y,z \in S $:
\begin{enumerate}
\item $d(x,y) \geq 0$ $\mbox{ 	}$ (non-negativity)	
\item $d(x,y) = 0$ iff $x = y$ $\mbox{	}$ (identity of indiscemibles)	
\item $d(x,y) = d(y,x)$	$\mbox{  }$ (symmetry)
\item $d(x,z) \leq d(x,y) + d(y,z) $ $\mbox{  }$ (subadditivity or triangle inequality)
\end{enumerate}
The distance functions satisfying first two conditions are called divergence measures and those satisfying all four conditions are called \textit{metric}. For example, if $S$  contains n-dimensional vectors then $ \forall \mathbf{x},\mathbf{y} \in S$, $p \geq 1$, $d_p: S \times S  \rightarrow \mathbb{R}$ defined as under  is a \textit{metric}. 
\begin{align*}
d_p(\mathbf{x},\mathbf{y}) = \left( \sum_{i=1}^{n} |x_i -y_i|^p \right)^\frac{1}{p}
\end{align*}

\hspace{0.2 in} The concept to derive above \textit{metric} is inspired by the distances in Euclidean geometry. The generalization of this distance measure on sets to that on vector spaces is obtained by defining a norm. Given a vector space $V$ over a field $F$, a norm is a function $\rho: V \rightarrow \mathbb{R}$ with the above four properties of \textit{metric} and added property of absolute Scale Invariance defined as under:
$$ \rho(a\mathbf{x}) = |a|\rho(\mathbf{x}), \forall \mathbf{x}\in V, a \in F$$
For example, given an n-dimensional vector space $\mathbb{R}^n$ and $\mathbf{x} \in \mathbb{R}^n$;  the $L^p$-norm of $\mathbf{x}$ for a real number $p \geq 1$, is defined as:
\begin{align*}
\left\| \mathbf{x}\right\|_{p} = \left(\left|x_1\right|^p + |x_2|^p + \ldots + |x_n|^p	\right)^{\frac{1}{p}}
\end{align*}

\hspace{0.2 in} The same definition has been also extended for functions in $L^p$-spaces. A point in $L^p$-space is an $L^p$ integrable function. 
A function $f: \mathbb{R}^n \rightarrow \mathbb{R}$ is $L^p$ integrable, if $p$-th power of its absolute value is finite, or equivalently,
\begin{align*}
\left\| f(\mathbf{x}) \right\|_{p} = \left( \int_{\mathbb{R}^n}{ |f(\mathbf{x})|^p d\mathbf{x} }\right)^{\frac{1}{p}} < \infty
\end{align*}
It is a complete normed space with all the $L^p$ integrable functions, their linear combinations through real coefficients and including all limit points. 
\section{Information Potential (IP) and related Concepts}
\label{apip} 
In a general sense, potential means an unrealized ability. The gravitational potential and the electric potential are
the known examples from Physics. In both the examples, potential created by a particle (with mass or charge) is inversely proportional to the distance. In kernel density estimation,  a kernel is placed at each sample location and usually kernel is a positive definite function decaying with distance. This fact brings analogy with the potential theory. Each sample is an information particle. 
The PDF is the information potential field in which the information particles interact with each other. In a scalar field, the total potential is the summation of potential due to individual  particles. The information potential (IP) due to the system of samples or the field is given in a same way. 
For a random variable $\mathbf{x}$, the potential on a sample $x_j $  due to other samples, assuming Gaussian kernel,  is given by 
\begin{align*}
V_2(x_j) \stackrel{def}{=} \frac{1}{N}\sum_{i=1}^{N}V_2(x_j,x_i) \mbox{ where, } V_2(x_j,x_i) = G_{\sigma \sqrt{2}}(x_j-x_i)
\end{align*}
So, the IP of $\mathbf{x}$ is 
\begin{align*}
V_2(\mathbf{x}) \stackrel{def}{=} \frac{1}{N}\sum_{j=1}^{N}V_2(x_j) = \frac{1}{N^2}\sum_{j=1}^{N}\sum_{i=1}^{N}V_2(x_j,x_i) = \int\{\hat{f(x)}^2\}
\end{align*}
The quantity $V_2(x)$ or IP  is same as the integration of the square of the PDF. Instead of usual sum in potential theory, the normalization is done to get integral over PDF to be 1. The subscript of $V$ reminds us that this is the quadratic information potential (QIP) as square of the PDF is integrated. The definition is generalized for any $\alpha$ by defining $V_\alpha$ as the integral of $\alpha$ power of the density. Also, instead of a Gaussian kernel any other kernel can be selected. But, they may not have as smooth characteristic as for $\alpha=2$ with Gaussian kernel.
Using this result, ITL theory has defined several scalar descriptors of PDF, that just depend upon the available samples with whole PDF structure into consideration. 

\hspace{0.2 in}		The $\Psi_2^{LSFD}$ defined  in the article,  is already defined as $QMI_{ED}$ by \citep{ITL2010}. 
The quantity $QMI_{ED}$, for a random vector $\mathbf{x} = (x_1, x_2)$, in terms of IP  is derived as under: 
\begin{align*}
QMI_{ED}(x_1, x_2) &= D_{ED}(p_{x_1x_2}(x_1,x_2), p_{x_1}(x_1)p_{x_2}(x_2)) \\
	& = \int_{x_2}\int_{x_1}{(p_{x_1x_2}(x_1,x_2) - p_{x_1}(x_1)p_{x_2}(x_2))^2}dx_1dx_2 \\
	& = \int_{x_2}\int_{x_1}{(p_{x_1x_2}(x_1,x_2) )^2}dx_1dx_2 + \int_{x_2}\int_{x_1}{( - p_{x_1}(x_1)p_{x_2}(x_2))^2}dx_1dx_2 \\
	& \mbox{   }- \int_{x_2}\int_{x_1}{2p_{x_1x_2}(x_1,x_2) p_{x_1}(x_1)p_{x_2}(x_2)}dx_1dx_2  \\
	& = V_J + V_M - 2V_C 
\end{align*}
where, $V_J$ is the IP of the joint PDF, $V_M$ is the potential of the product of the marginal PDFs and $V_C$ is the Cross Information Potential (CIP) similar to the concepts of cross entropy or cross correlation. 

The potentials can be estimated through kernel methods. 
\begin{align*}
\hat{V_J} &= \frac{1}{N^2}\sum_{j=1}^{N}\sum_{i=1}^{N}\hat{V_2}(\mathbf{x}(i),\mathbf{x}(j)) \\
		&= \frac{1}{N^2}\sum_{j=1}^{N}\sum_{i=1}^{N}G_{\sigma\sqrt{2}}(\mathbf{x}(i),\mathbf{x}(j)) \\
		&= \frac{1}{N^2}\sum_{j=1}^{N}\sum_{i=1}^{N}G_{\sigma\sqrt{2}}(x_1(i) - x_1(j))G_{\sigma\sqrt{2}}(x_2(i) - x_2(j)) \\
		&= \frac{1}{N^2}\sum_{j=1}^{N}\sum_{i=1}^{N}\hat{V_2}(x_1(i), x_1(j))\hat{V_2}(x_2(i), x_2(j)) 
\end{align*}
\begin{align*}
\hat{V_M} &= \left(\frac{1}{N^2}\sum_{j=1}^{N}\sum_{i=1}^{N}G_{\sigma\sqrt{2}}(x_1(i)-x_1(j))\right)\left(\frac{1}{N^2}\sum_{j=1}^{N}\sum_{i=1}^{N}G_{\sigma\sqrt{2}}(x_2(i)-x_2(j))\right) \\
& = \hat{V_2}(x_1)\hat{V_2}(x_2) 
\end{align*}
\begin{align*}
\hat{V_C} &= 	\int_{x_2}\int_{x_1}{p_{x_1x_2}(x_1,x_2) p_{x_1}(x_1)p_{x_2}(x_2)}dx_1dx_2 \\
& = \int\int\left[ \frac{1}{N}\sum_{k=1}^{N}G_\sigma(x_1-x_1(k))G_\sigma(x_2- x_2(k))\right] \left[\frac{1}{N}\sum_{j=1}^{N}G_\sigma(x_1-x_1(i))\right] \\
&\mbox{ 	}\left[\frac{1}{N}\sum_{j=1}^{N}G_\sigma(x_2-x_2(j))\right]dx_1dx_2 \\
&= \frac{1}{N}\sum_{i=1}^{N}\frac{1}{N}\sum_{j=1}^{N}\frac{1}{N}\sum_{k=1}^{N}\int G_\sigma(x_1-x_1(i))G_\sigma(x_1-x_1(k))dx_1 \\
& \mbox{ 	}\int G_\sigma(x_2-x_2(j))G_\sigma(x_2-x_2(k))dx_2 \\
&= \frac{1}{N}\sum_{k=1}^{N}\left[ \frac{1}{N}\sum_{i=1}^{N}G_{\sigma \sqrt{2}}(x_1(k) - x_1(i))\right] \left[ \frac{1}{N}\sum_{j=1}^{N}G_{\sigma \sqrt{2}}(x_2(k) - x_2(j))\right] \\
&= \frac{1}{N}\sum_{k=1}^{N} \hat{V_2}(x_1(k)) \hat{V_2}(x_2(k))
\end{align*}
\subsection{Information Forces (IF)} 
It is obvious to think of information forces, once defined the IP. Potential and the force are related concepts.  One of the interpretation of potential is the amount of work done required to bring a unit charge or unit mass from infinity to the point in the force field. The particle contains amount of potential energy that  has been applied to work against the force. The force on sample $x_j$ is the derivative of the IP at a sample with respect to the position of sample $x_j$, that is:
\begin{align*}
\hat{F_2(x_j)}&\stackrel{def}{=}\frac{\partial}{\partial x_j}\hat{V_2}(x_j) \\ &=\frac{1}{N}\sum_{i=1}^{N}G'_{\sigma\sqrt{2}}(x_j-x_i) = \frac{1}{N}\sum_{i=1}^{N}\hat{F_2}(x_j-x_i) \\ 
&= \frac{1}{2N\sigma^2}(x_i-x_j)G_{\sigma\sqrt{2}}(x_j-x_i)
\end{align*}

\vskip 0.2in
\bibliography{Allref}

\begin{thebibliography}{36}
\providecommand{\natexlab}[1]{#1}
\providecommand{\url}[1]{\texttt{#1}}
\expandafter\ifx\csname urlstyle\endcsname\relax
  \providecommand{\doi}[1]{doi: #1}\else
  \providecommand{\doi}{doi: \begingroup \urlstyle{rm}\Url}\fi

\bibitem[Achard et~al.(2003)Achard, Pham, and Jutten]{achard2003quadratic}
Sophie Achard, Dinh~Tuan Pham, and Christian Jutten.
\newblock Quadratic dependence measure for nonlinear blind sources separation.
\newblock In \emph{4th International Conference on ICA and BSS}, pages
  757--763, 2003.

\bibitem[Babaie-Zadeh(20 septembre 2002)]{malmiri02Thesis}
Massoud Babaie-Zadeh.
\newblock \emph{On Blind Source Separation in Convolutive and Nonlinear
  Mixtures}.
\newblock PhD thesis, Institut National Polytechnique de Grenoble (INPG),
  Grenoble, France and Sharif University of Technology, Tehran, IRAN, 20
  septembre 2002.

\bibitem[Babaie-Zadeh and Jutten(2005)]{BabaieZadeh2005975}
Massoud Babaie-Zadeh and Christian Jutten.
\newblock A general approach for mutual information minimization and its
  application to blind source separation.
\newblock \emph{Signal Processing}, 85\penalty0 (5):\penalty0 975 -- 995, 2005.
\newblock ISSN 0165-1684.

\bibitem[Babaie-Zadeh et~al.(2004)Babaie-Zadeh, Jutten, and
  Nayebi]{babaie2004differential}
Massoud Babaie-Zadeh, Christian Jutten, and Kambiz Nayebi.
\newblock Differential of the mutual information.
\newblock \emph{IEEE Signal Processing Letters}, 11\penalty0 (1):\penalty0
  48--51, 2004.

\bibitem[Bach and Jordan(2003)]{bach2003kernel}
F.R. Bach and M.I. Jordan.
\newblock Kernel independent component analysis.
\newblock \emph{The Journal of Machine Learning Research}, 3:\penalty0 1--48,
  2003.

\bibitem[Boscolo et~al.(2004)Boscolo, Pan, and Roychowdhury]{Boscolo04}
R.~Boscolo, H.~Pan, and V.P. Roychowdhury.
\newblock Independent component analysis based on nonparametric density
  estimation.
\newblock \emph{Neural Networks, IEEE Transactions on}, 15\penalty0
  (1):\penalty0 55--65, Jan. 2004.
\newblock ISSN 1045-9227.
\newblock \doi{10.1109/TNN.2003.820667}.

\bibitem[Cardoso(1999)]{Cardoso99NC}
J.-F. Cardoso.
\newblock High-order contrasts for independent component analysis.
\newblock \emph{Neural Computation}, 11\penalty0 (1):\penalty0 157--192, 1999.

\bibitem[Comon(1994)]{Comon94}
P.~Comon.
\newblock Independent component analysis---a new concept?
\newblock \emph{Signal Processing}, 36:\penalty0 287--314, 1994.

\bibitem[Comon and Mourrain(1996)]{Comon96}
P.~Comon and B.~Mourrain.
\newblock Decomposition of quantics in sums of powers of linear forms.
\newblock \emph{Signal Processing}, 53\penalty0 (2):\penalty0 93--107, 1996.

\bibitem[Comon and Jutten(2010)]{Comon2010}
Pierre Comon and Christian Jutten.
\newblock \emph{Handbook of Blind Source Separation: Independent Component
  Analysis and Applications}.
\newblock Academic Press, 1st edition, 2010.
\newblock ISBN 0123747260, 9780123747266.

\bibitem[Cover and Thomas(1991)]{Cover}
T.~M. Cover and J.~A. Thomas.
\newblock \emph{Elements of Information Theory}.
\newblock Wiley, 1991.

\bibitem[Dharmani~Bhaveshkumar(2015)]{DharmaniExROTarxiv}
C.~Dharmani~Bhaveshkumar.
\newblock The gram-charlier a series based extended rule-of-thumb for bandwidth
  selection in univariate and multivariate kernel density estimations.
\newblock \emph{arXiv:1504.00781 [cs.LG]}, page~30, April 2015.

\bibitem[Eriksson and Koivunen(2004)]{eriksson2004identifiability}
J.~Eriksson and V.~Koivunen.
\newblock Identifiability, separability, and uniqueness of linear ica models.
\newblock \emph{Signal Processing Letters, IEEE}, 11\penalty0 (7):\penalty0
  601--604, 2004.

\bibitem[Hyv\"arinen(1997)]{Hyva97NNSP}
A.~Hyv\"arinen.
\newblock One-unit contrast functions for independent component analysis: A
  statistical analysis.
\newblock In \emph{Neural Networks for Signal Processing VII (Proc. IEEE
  Workshop on Neural Networks for Signal Processing)}, pages 388--397, Amelia
  Island, Florida, 1997.

\bibitem[Hyv\"{a}rinen et~al.(2001)Hyv\"{a}rinen, Karhunen, and Oja]{ICAbook01}
A.~Hyv\"{a}rinen, J.~Karhunen, and E.~Oja.
\newblock \emph{Independent Component Analysis}.
\newblock John Wiley, New York, 2001.
\newblock 481+xxii pages.

\bibitem[Kenneth E.~Hild and Príncipe(2001)]{Renyi4BSS01}
Deniz~Erdogmus Kenneth E.~Hild, II and José Príncipe.
\newblock Blind source separation using renyi's mutual information.
\newblock \emph{IEEE Signal Processing Letters}, 8\penalty0 (6), 2001.

\bibitem[Learned-Miller and John~III(2003)]{learned2003ica}
E.G. Learned-Miller and W.F. John~III.
\newblock Ica using spacings estimates of entropy.
\newblock \emph{The Journal of Machine Learning Research}, 4:\penalty0
  1271--1295, 2003.

\bibitem[Papoulis(1991)]{Papoulis91}
A.~Papoulis.
\newblock \emph{Probability, Random Variables, and Stochastic Processes}.
\newblock McGraw-Hill, 3rd edition, 1991.

\bibitem[Pham(2003)]{FastHPham03}
Dinh-Tuan Pham.
\newblock Fast algorithms for estimating mutual information, entropies and
  score functions, 2003.

\bibitem[Pham(2004)]{FastHPham04}
Dinh-Tuan Pham.
\newblock Fast algorithms for mutual information based independent component
  analysis.
\newblock \emph{IEEE Transactions on Signal Processing}, 52\penalty0
  (10):\penalty0 2690--2700, 2004.

\bibitem[Pham and Vrins(2005)]{PhaVri05}
Dinh-Tuan Pham and Frederic Vrins.
\newblock Local minima of information-theoretic criteria in blind source
  separation.
\newblock \emph{IEEE Signal Process. Lett.}, 12\penalty0 (11):\penalty0
  788--791, November 2005.

\bibitem[Pham(2001)]{ContrastsICA}
D.T. Pham.
\newblock Contrast functions for ica and sources separation technical report -
  bliss project.
\newblock Technical report, 2001.

\bibitem[Principe(2010)]{ITL2010}
Jose~C. Principe.
\newblock \emph{Information Theoretic Learning: Renyi's Entropy and Kernel
  Perspectives}.
\newblock Springer Publishing Company, Incorporated, 1st edition, 2010.
\newblock ISBN 1441915699, 9781441915696.

\bibitem[Sakai and Sugiyama(2014)]{lsmi214}
T.~Sakai and M.~Sugiyama.
\newblock Computationally efficient estimation of squared-loss mutual
  information with multiplicative kernel models.
\newblock \emph{{IEICE} Transactions on Information and Systems},
  E97-D\penalty0 (4):\penalty0 968--971, 2014.

\bibitem[Seth et~al.(2011)Seth, Rao, Park, and Principe]{unifiedQIM11}
S.~Seth, M.~Rao, Il~Park, and J.~C. Principe.
\newblock A unified framework for quadratic measures of independence.
\newblock \emph{Signal Processing, IEEE Transactions on}, 59\penalty0
  (8):\penalty0 3624--3635, August 2011.

\bibitem[Silverman(1986)]{Silverman86}
B.~W. Silverman.
\newblock \emph{Density Estimation for Statistics and Data Analysis}.
\newblock Chapman \& Hall, 1986.

\bibitem[Sugiyama(2013)]{MLsmi13}
M.~Sugiyama.
\newblock Machine learning with squared-loss mutual information.
\newblock \emph{Entropy}, 15\penalty0 (1):\penalty0 80--112, 2013.

\bibitem[Sugiyama et~al.(2012)Sugiyama, Suzuki, and Kanamori]{unifiedDR12}
M.~Sugiyama, T.~Suzuki, and T.~Kanamori.
\newblock Density ratio matching under the {B}regman divergence: A unified
  framework of density ratio estimation.
\newblock \emph{Annals of the Institute of Statistical Mathematics},
  64\penalty0 (5):\penalty0 1009--1044, 2012.

\bibitem[Sugiyama et~al.(2013{\natexlab{a}})Sugiyama, Liu, du~Plessis,
  Yamanaka, Yamada, Suzuki, and Kanamori]{directdistPDF13}
M.~Sugiyama, S.~Liu, M.~C. du~Plessis, M.~Yamanaka, M.~Yamada, T.~Suzuki, and
  T.~Kanamori.
\newblock Direct divergence approximation between probability distributions and
  its applications in machine learning.
\newblock \emph{Journal of Computing Science and Engineering}, 7\penalty0
  (2):\penalty0 99--111, 2013{\natexlab{a}}.

\bibitem[Sugiyama et~al.(2013{\natexlab{b}})Sugiyama, Suzuki, Kanamori,
  du~Plessis, Liu, and Takeuchi]{lsdd13}
M.~Sugiyama, T.~Suzuki, T.~Kanamori, M.~C. du~Plessis, S.~Liu, and I.~Takeuchi.
\newblock Density-difference estimation.
\newblock \emph{Neural Computation}, 25\penalty0 (10):\penalty0 2734--2775,
  2013{\natexlab{b}}.

\bibitem[Suzuki and Sugiyama(2011)]{leastsqICA11}
Taiji Suzuki and Masashi Sugiyama.
\newblock Least-squares independent component analysis.
\newblock \emph{Neural Computation}, 23\penalty0 (1):\penalty0 284--301, 2011.

\bibitem[Theis et~al.(2003)Theis, Jung, Puntonet, and Lang]{GeomICA03}
Fabian~J. Theis, Andreas Jung, Carlos~G. Puntonet, and Elmar~W. Lang.
\newblock Linear geometric ica: fundamentals and algorithms.
\newblock \emph{Neural Comput.}, 15\penalty0 (2):\penalty0 419--439, February
  2003.
\newblock ISSN 0899-7667.

\bibitem[Ullah(1996)]{ullah1996entropy}
Aman Ullah.
\newblock Entropy, divergence and distance measures with econometric
  applications.
\newblock \emph{Journal of Statistical Planning and Inference}, 49\penalty0
  (1):\penalty0 137--162, 1996.

\bibitem[Vrins and Verleysen(2005)]{VrinsV05}
Frédéric Vrins and Michel Verleysen.
\newblock On the entropy minimization of a linear mixture of variables for
  source separation.
\newblock \emph{Signal Processing}, 85:\penalty0 1029--1044, 2005.

\bibitem[Vrins et~al.(2007)Vrins, Lee, and Verleysen]{minRBSEVrins07}
Fr{\'e}d{\'e}ric Vrins, John~Aldo Lee, and Michel Verleysen.
\newblock A minimum-range approach to blind extraction of bounded sources.
\newblock \emph{IEEE Transactions on Neural Networks}, 18\penalty0
  (3):\penalty0 809--822, 2007.

\bibitem[Yamada et~al.(2013)Yamada, Suzuki, Kanamori, Hachiya, and
  Sugiyama]{relativedr13}
M.~Yamada, T.~Suzuki, T.~Kanamori, H.~Hachiya, and M.~Sugiyama.
\newblock Relative density-ratio estimation for robust distribution comparison.
\newblock \emph{Neural Computation}, 25\penalty0 (5):\penalty0 1324--1370,
  2013.

\end{thebibliography}
\end{document}